\definecolor{mygrey}{rgb}{0.5,0.5,0.5}
\newcommand{\blue}[1]{{\color{blue}{#1}}}
\newcommand{\red}[1]{{\color{red}{#1}}}
\newcommand{\green}[1]{{\color{green}{#1}}}
\newcommand{\orange}[1]{{\color{orange}{#1}}}
\newcommand{\EE}{\mathbb{E}}
\newtheorem{lemma}{Lemma}
\newtheorem{defi}{Definition}
\newtheorem{theorem}{Theorem}
\newtheorem{assumption}{Assumption}
\newcommand\numberthis{\addtocounter{equation}{1}\tag{\theequation}}
 \newcommand{\FullTitle}{Lazy Queries Can Reduce Variance in Zeroth-order Optimization}
\algrenewcommand{\algorithmiccomment}[1]{\hskip0em$\triangleright$ #1}
\title{Lazy Queries Can Reduce Variance in\\ Zeroth-order Optimization}
 \author{%
\begin{minipage}[t]{0.33\textwidth}
\centering
\textbf{Quan Xiao}
\\[2pt]
\textnormal{
Department of ECSE\\
Rensselaer Polytechnic Institute\\ 
xiaoq5@rpi.edu}
\end{minipage}
\begin{minipage}[t]{0.33\textwidth}
\centering
Qing Ling 
\\[2pt]
\textnormal{School of Data and Comp Sci.\\
Sun Yat-Sen University\\ 
lingqing556@mail.sysu.edu.cn}
\end{minipage}
 \begin{minipage}[t]{0.33\textwidth}
\centering
\textbf{Tianyi Chen}
\\[2pt]
\textnormal{Department of ECSE\\
Rensselaer Polytechnic Institute\\ 
chentianyi19@gmail.com}
 \end{minipage}
}
\begin{document}

\maketitle
\doparttoc % Tell to minitoc to generate a toc for the parts
\faketableofcontents % Run a fake tableofcontents command for the partocs

\begin{abstract}
A major challenge of applying zeroth-order (ZO) methods is the high query complexity, especially when  queries are costly. 
We propose a novel gradient estimation technique for ZO methods based on adaptive lazy queries that we term as LAZO. Different from the classic one-point or two-point gradient estimation methods, LAZO develops two alternative ways to check the usefulness of old queries from  previous iterations, and then adaptively reuses them to construct the low-variance gradient estimates. We rigorously establish that through judiciously reusing the old queries, LAZO can reduce the variance of stochastic gradient estimates so that it not only saves queries per iteration but also achieves the regret bound for the symmetric two-point method. We evaluate the numerical performance of LAZO, and demonstrate the low-variance property and the performance gain of LAZO in both regret and query complexity relative to several existing ZO methods. The idea of LAZO is general, and can be applied to other variants of ZO methods. 
\end{abstract}
\vspace{-0.2cm}
\section{Introduction}\label{intro}
\vspace{-0.1cm}
Zeroth-order (ZO) optimization (also known as gradient-free optimization, or
bandit optimization), is useful in complex tasks when the analytical forms of loss functions are not available, only permitting evaluations of function values but not gradients. ZO methods have already been applied to reinforcement learning \cite{salimans2017evolution}, adversarial machine learning \cite{chen2017zoo,kariyappa2021maze} and meta-learning \cite{ruan2019learning,song2019maml}.

However, a major challenge of applying ZO methods to benefit these practical problems is their high query complexity, especially when the queries are costly. 
To this end, we aim to develop a new ZO method that can inherit the merits of existing ones but also \emph{reduce the query complexity}. 

To illustrate our method, we consider the online convex optimization (OCO) setting \cite{zinkevich2003online,hazan2007logarithmic}. OCO can be viewed as a repeated game between a learner and the nature. Consider the time indexed by $t$. 
Per iteration $t$, a learner selects an action $x_t\in \mathcal{X}$ and subsequently the nature chooses a loss function $f_t$, through which the learner incurs a loss $f_{t}(x_t)$. We consider the setting that, after the decision $x_t$ is made, only the value of the loss function $f_t$ at $x_t$ is revealed; the gradient is unavailable. Our goal is to minimize the static regret 
%\begin{equation}
%\min _{x \in \mathcal{X}} \sum_{t=0}^{T} f_{t}(x)~~~{\rm or}~~~\min _{x \in \mathcal{X}} f(x)\triangleq\EE_\xi[F(x; \xi)]
%\label{prb}
%\end{equation}
\begin{equation}\label{eq.regret}
{\cal R}_T({\cal A})\triangleq\mathbb{E}\left[\sum_{t=0}^{T} f_{t}(x_t)-\min _{x \in \mathcal{X}} \sum_{t=0}^{T} f_{t}(x)\right]
\end{equation}
where ${\cal A}$ is an algorithm, {\small$\mathcal{X} \subseteq \mathbb{R}^{d}$} is a convex set, $\{f_t\}$ are convex functions and the expectation is taken over the  random queries. This is a commonly used performance measure of OCO, which measures the difference between the cumulative loss of online decisions generated by an algorithm ${\cal A}$ and the loss of applying the best fixed decision chosen in hindsight \cite{hazan2016}.

While the loss function (or its gradient) is unknown, ZO methods approximate the gradient through values of the loss function based on various
gradient estimation techniques. 
%  Interest in ZO has grown rapidly in the last few years. 
The work of \cite{flaxman2004online} has first applied a ZO method to the OCO problem by querying a \emph{single function value} at each time (so-called  one-point gradient estimate). 
%and get the regret bound ${\cal O}(T^{\frac{3}{4}})$. 
% Although \citet{flaxman2004online} only analyzed the case of one-point gradient estimate, it also proposed using delayed function queries to construct two-point  gradient estimate. 
% If the function variation is small, this so-called residual one-point ZO gradient method performs relatively well. 
% Very recently, \citet{zhang2020boosting} analyzed this idea in details though its worst-case performance for \eqref{prb} is still the same as the one-point ZO \citet{flaxman2004online}. 
% Though the regret bound they derived is still ${\cal O}(T^{\frac{3}{4}})$, the bound is tighter than that of the traditional one-point oracle when function variation is small. 
% Denote the unit ball and sphere centered around the origin in $d$ dimensions as $\mathbb{B}$ and $\mathbb{S}$, respectively,
% \begin{equation}
%         \mathbb{B}=\left\{x \in \mathbb{R}^{d}|\| x \| \leq 1\right\}~~~
%     \mathbb{S}=\left\{x \in \mathbb{R}^{d}|\| x \|=1\right\}\textbf{}
% \end{equation}
% in $d$ dimensions.
Going beyond one-point ZO, \cite{agarwal2010optimal,duchi2015optimal,nesterov2017random,shamir2017optimal} leveraged multiple queries at each iteration (so-called multi-point gradient estimate) to achieve an enhanced performance (i.e., regret) relative to the one-point methods. 
% In their study, the two-point gradient estimate stated below is enough to ensure this regret bound order and querying more points will further contribute to the constant in the regret bound.

However, there is an essential trade-off between \emph{the number of queries per iteration} and \emph{the number of iterations} needed to achieve a target accuracy. The target accuracy is often measured by the average regret per iteration. 
% That is, does multi-points querying really help that much? 
Intuitively, querying fewer points per iteration may increase the variance of gradient estimation, and thus boost the the number of iterations needed to achieve a target average regret \cite{agarwal2010optimal,shamir2017optimal}. Consequently, it may increase the query complexity that depends on both of the two quantities. 
% Since the query complexity depends on these two qualities simultaneously, it can be seen as a measure to this trade-off. 
% But only considering the regret is somehow harsh to those query-efficient methods since in reality, query sometimes can be very expensive. That's why we introduce and study the query complexity besides regret because we want to know the trade-off between the query cut down in each iteration and the total regret bound.
% In addition, in order to improve the regret bound, efforts are usually made on either adding assumptions or querying more points in previous work \cite{agarwal2010optimal,shamir2017optimal,zhang2020boosting}. 
In this context, a natural yet important question is 
\begin{center}
\vspace{-0.2cm}
\emph{Can we develop ZO methods that save the number of queries per iteration without sacrificing  regret?} 
\vspace{-0.2cm}
\end{center}
% This means what we ask for is not only to propose a method performing better in the query complexity analysis, but also in the regret setting. 
At the first glance, it seems counter-intuitive that such a method does exist, achieving the best of two worlds. Nevertheless, we will provide an affirmative answer in this paper. The key idea that we leverage is to adaptively use the delayed queries that we call \emph{lazy queries}.

\vspace{-0.2cm}
\subsection{Related works} 
\vspace{-0.1cm}
To put our work in context, we review prior contributions that we group in the three categories. 

\textbf{One-point ZO.} ZO methods based on one-point gradient estimation can be traced back to the control literature, where one of the early approaches is the simultaneous perturbation stochastic approximation method \cite{spall1997one}. In the context of OCO, algorithms with one-point feedback have been developed in \cite{flaxman2004online, kleinberg2004nearly}. Building upon this element and the effective one-point gradient estimation scheme, the multi-agent ZO has developed for a game-theoretic model in \cite{heliou2020gradient}. See also a recent survey \cite{slivkins2019introduction} and references therein. For all the aforementioned one-point ZO methods, the regret bound is still ${\cal O}(T^{\frac{3}{4}})$, which is much worse than the ${\cal O}(T^{\frac{1}{2}})$ regret bound of their full information counterparts \cite{zinkevich2003online}. 
The one-point ZO methods achieving the ${\cal O}(T^{\frac{1}{2}})$ regret are the kernel-based \cite{bubeck2017kernel} and ellipsoid method \cite{lattimore2021improved}, but they use rather sophisticated gradient estimation techniques and have the ${\cal O}(d^{9.5})$ and ${\cal O}(d^{4.5})$ dimension dependence, respectively, making them inefficient in practice. 

\textbf{ZO with delayed feedback.}
The recent literature on bandits with delayed feedback is also related to this work, e.g., \cite{joulani2013online,quanrud2015online,li2019bandit,thune2019nonstochastic,zimmert2020optimal,vernade2020linear,ito2020delay,manegueu2020stochastic}.
However, these methods \emph{passively} receives delayed feedback, in which delays generally sublinearly increase the regret, while our method \emph{actively} leverages delayed feedback and uses delay to save queries and reduce the regret. Prior-guided ZO \cite{cheng2021convergence} is also relevant, but they only consider the time-invariant case.   
The work most relevant to ours is the residual one-point ZO method \cite{zhang2020boosting}, which \emph{nonadaptively} augments the one-point query with a delayed query to construct a two-point gradient estimator. Theoretically, its regret order is still ${\cal O}(T^{\frac{3}{4}})$, which is the same as the vanilla one-point ZO method. Very recently, a control-theoretical ZO approach has been developed in \cite{chen2021improve} that significantly improves the dimension dependence in the regret of residual one-point ZO method \cite{zhang2020boosting} in the time-invariant case via high-pass and low-pass filters. However, the suboptimal dependence on the time $T$ still remains. The idea of using delayed queries has also been used in saving communication resources in distributed learning \cite{chen2018lag}, but the LAG algorithm and the analysis there are very different from those in the present paper.

\textbf{Multi-point ZO.} ZO methods based on two or multiple function value evaluations enjoy better ${\cal O}(T^{\frac{1}{2}})$ regret. They have been independently developed from both online learning and optimization communities \cite{agarwal2010optimal,duchi2012randomized, shamir2017optimal, nesterov2017random,duchi2015optimal,ghadimi2013stochastic}. Recent advances in this direction include  the sign-based ZO method \cite{liu2018signsgd}, the proximal gradient extension of two-point ZO method \cite{huang2019faster}, adaptive momentum ZO \cite{chen2019zo}, autoencoder-based ZO \cite{tu2019autozoom} and adaptive sampling ZO by leveraging the inherent sparsity of the gradient \cite{wang2018stochastic,golovin2019iclr,cai2020zeroth}. For solving finite-sum optimization problems, variance-reduced ZO methods have also been developed by using various state-of-the-art variance reduction techniques \cite{liu2018zeroth,ji2019improved}. Our work is relevant but complementing to these multi-point ZO works. In fact, they all suggest promising future directions by applying our lazy query-based gradient estimation technique to them.

\vspace{-0.2cm}
\subsection{Our contributions} 
\vspace{-0.1cm}
In this context, our paper puts forward a new ZO gradient estimation method that leverages an adaptive  condition to parsimoniously query the loss function at one or two points. Our contributions can be summarized as follows.
\vspace{-0.2cm}
\begin{itemize}\setlength\itemsep{-0.01em}
\item [\bf C1)] We propose a lazy query-based ZO gradient estimation method that we term LAZO. Different from the classic one- or two-point methods, LAZO is a hybrid version of one- and two-point methods. It adaptively reuses the old yet informative queries from previous iterations to construct low-variance gradient estimates. 
\item [\bf C2)] We apply LAZO to the stochastic gradient descent (SGD) algorithm and obtain a new SGD algorithm. We rigorously establish the regret of LAZO-based SGD. Surprisingly, through judiciously reusing old queries, LAZO can reduce the variance of gradient estimation so that it not only saves queries per iteration but also achieves the ${\cal O}(\sqrt{dT})$ regret bound. 
% not only maintains ${\cal O}(T^{\frac{1}{2}})$ regret but also save queries. 
\item [\bf C3)] We evaluate the numerical perfomance of LAZO on various tasks and show that LAZO maintains low variance and has performance gain in terms of regret and query complexity relative to popular methods. We also provide the multi-point extension of LAZO. 
\end{itemize}

\vspace{-0.2cm}
\section{Lazy Query for Zeroth-Order SGD}\label{lazy}
\vspace{-0.1cm}
In this section, we present our new ZO method that adaptively queries the loss function at either one or two points in each iteration, which gives its name \textbf{LA}zy \textbf{Z}eroth-\textbf{O}rder gradient (\textbf{LAZO}) method.

\vspace{-0.1cm}
\subsection{Preliminaries}
\vspace{-0.1cm}
Before we present our new algorithm, we review some basics of OCO and ZO. 
%Since stochastic optimization is a special case of OCO problem, we only introduce OCO in this section as an example. 
In the full information case, where all information of the loss function including the gradient is available, the ``workhorse'' OCO algorithm is the online gradient descent method \cite{zinkevich2003online}, given by $x_{t+1}=\Pi_{\mathcal{X}}\left(x_{t}-\eta \nabla f_t(x_t)\right)$,
%\begin{equation}
%x_{t+1}=\Pi_{\mathcal{X}}\left(x_{t}-\eta \nabla f_t(x_t)\right)
%\label{eq:full}
%\end{equation}
where $\Pi_{\mathcal{X}}$ denotes the projection on to $\mathcal{X}$ and $\eta>0$ the stepsize. In the partial information setting, we only have access to the value of the loss function $f_t$ at $x_t$, rather than the gradient. 
%  ZO methods are designed for the bandit setting to overcome this challenge. 

Generically speaking, ZO gradient-based methods first query the function values  at one or multiple perturbed points $\{x_t+\delta u_t\}$, where $u_t$ is the unit perturbation vector and $\delta>0$ is a small perturbation factor and following \cite{duchi2015optimal,shamir2017optimal}, we assume that one can query $f_t$ at any $x_t+\delta u_t$.\footnote{Otherwise, a standard technique in \cite{agarwal2010optimal,flaxman2004online} that running the algorithm on a smaller set $(1-\gamma)\mathcal{X}=\{(1-\gamma)x: x\in\mathcal{X}\}$ can be applied since choosing $\gamma$ sufficiently small guarantees $x_t+\delta u_t\in \mathcal{X}$.}; construct a stochastic gradient estimate $\tilde{g}_t(x_t)$ using these function values (see exact forms of $\tilde{g}_t(x_t)$ in Section \ref{obv}); and then plug it into the gradient iteration \cite{flaxman2004online}
\begin{equation}
x_{t+1}=\Pi_{\mathcal{X}}\left(x_{t}-\eta \tilde{g}_{t}(x_{t})\right)
\label{eq:sgd}
\end{equation}
Different from the most commonly used SGD, the stochastic gradient estimate in ZO gradient-based methods is usually biased in the sense that $    \mathbb{E}\left[\tilde{g}_t(x_t)\right]\neq \nabla f_t(x_t)$.

The rationale behind \eqref{eq:sgd} is that $\tilde{g}_t(x_t)$ is an unbiased estimator for the gradient of a smoothed version of $f_t$ at $x_t$. Specifically, defining a smoothed version of $f_t$ as $f_{\delta,t}(x)\triangleq\mathbb{E}_{v_t\sim U(\mathbb{B})}[f_t(x+\delta v_t)]$,
%\begin{equation}
%    f_{\delta,t}(x)\triangleq\mathbb{E}_{v_t\sim U(\mathbb{B})}[f_t(x+\delta v_t)]
%\end{equation}
where $v_t\sim U(\mathbb{B})$ denotes the uniform sampling $v_t$ from the ball $\mathbb{B}=\left\{x \in \mathbb{R}^{d}|\| x \| \leq 1\right\}$, we have 
\begin{equation}\label{eq.unbias}
    \mathbb{E}\left[\tilde{g}_t(x)\right]=\nabla f_{\delta,t}(x)\approx \nabla f_t(x).
\end{equation}

In OCO, we make the following basic assumptions.  
%for the stream of functions and the feasible domain. 

\begin{assumption}[Lipschitz continuity] 
\label{as1}
For all $t$, $f_t(x)$ is $L_t$-Lipschitz continuous, i.e. $\forall x, y \in \mathcal{X}$, $ |f_t(x)-f_t(y)| \leq L_t \|x-y\|$.
% \begin{equation}
%  |f_t(x)-f_t(y)| \leq L_t \|x-y\|.
% \end{equation}
Moreover, we define $L\triangleq\max_{t=0,\cdots, T} L_t$.
\end{assumption}

\begin{assumption}[Bounded set] 
\label{as2}
There exist constants $R$ such that $\mathcal{X}\subset R \mathbb{B}$.
% \begin{equation}
% r \mathbb{B}\subset\mathcal{X}\subset R \mathbb{B}.
% \end{equation} 
\end{assumption}

\begin{assumption}[Convexity] 
\label{as3}
For all $t$, $f_t(x)$ is convex. 
\end{assumption}
Assumptions \ref{as1}--\ref{as3} are common in OCO with both full and partial information feedback \cite{hazan2007logarithmic, flaxman2004online,agarwal2010optimal,shamir2017optimal}.

\subsection{Observation: A delicate trade-off}\label{obv}
For the biased SGD iteration \eqref{eq:sgd}, the performance has been well-studied in literature. 
To present the connection between the regret and the quality of gradient estimation, we need the following lemma.\footnote{Lemma 1 is not new, one can see \cite{shamir2017optimal} for similar result. We present it for narration convenience. } 

\begin{lemma}\label{lm3}
If Assumptions \ref{as1}--\ref{as3} hold, and running the biased SGD (BSGD) iteration \eqref{eq:sgd} with a generic $\tilde{g}_t(x_t)$ satisfying \eqref{eq.unbias}, then for any $x\in \mathcal{X}$, we have that
\begin{equation}
\label{eq.lm3}
\mathbb{E}\left[{\cal R}_T({\texttt{BSGD}})\right]\leq \frac{\left\|x_{0}-x^*\right\|^{2}}{2 \eta}+2 L \delta T\nonumber+\frac{\eta}{2} \sum_{t=0}^{T}\mathbb{E}\left[\left\|\tilde{g}_{t}\left(x_{t}\right)\right\|^{2}\right] 
\end{equation}
where $x^*\in\arg\min _{x \in \mathcal{X}} \sum_{t=0}^{T} f_{t}(x)$.
\end{lemma}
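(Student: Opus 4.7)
The plan is to follow the classical online gradient descent analysis, but with two twists: (i) the descent direction $\tilde g_t(x_t)$ is a biased estimator whose conditional mean is $\nabla f_{\delta,t}(x_t)$ rather than $\nabla f_t(x_t)$, and (ii) we must pay a bias penalty for replacing $f_t$ by its smoothed surrogate $f_{\delta,t}$. Accordingly, I will carry out the standard one-step descent inequality in terms of $f_{\delta,t}$, then trade back to $f_t$ at an additive cost $2L\delta$ per round.

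The first step is the geometric inequality: by non-expansiveness of the projection $\Pi_{\mathcal X}$, expanding $\|x_{t+1}-x^*\|^2 \le \|x_t-\eta\tilde g_t(x_t)-x^*\|^2$ and rearranging gives
\begin{equation*}
\langle \tilde g_t(x_t),\, x_t-x^*\rangle \;\le\; \frac{\|x_t-x^*\|^2-\|x_{t+1}-x^*\|^2}{2\eta} + \frac{\eta}{2}\|\tilde g_t(x_t)\|^2.
\end{equation*}
Taking conditional expectation given the history up to iteration $t$ and using the unbiasedness relation $\mathbb{E}[\tilde g_t(x_t)]=\nabla f_{\delta,t}(x_t)$ from \eqref{eq.unbias}, the left-hand side becomes $\langle \nabla f_{\delta,t}(x_t),\,x_t-x^*\rangle$. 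Convexity of $f_t$ (Assumption \ref{as3}) transfers to $f_{\delta,t}$ since it is an average of convex functions, so $\langle \nabla f_{\delta,t}(x_t),\,x_t-x^*\rangle \ge f_{\delta,t}(x_t)-f_{\delta,t}(x^*)$.

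Next I would bound the smoothing bias. Lipschitz continuity (Assumption \ref{as1}) together with $\|\delta v_t\|\le\delta$ and Jensen's inequality yields $|f_{\delta,t}(x)-f_t(x)|\le L\delta$ uniformly on $\mathcal X$, hence
\begin{equation*}
f_t(x_t)-f_t(x^*) \;\le\; f_{\delta,t}(x_t)-f_{\delta,t}(x^*) + 2L\delta.
\end{equation*}
Chaining this with the previous inequality, summing over $t=0,\ldots,T$, telescoping the $\|x_t-x^*\|^2$ differences, dropping the non-positive $-\|x_{T+1}-x^*\|^2/(2\eta)$ term, and taking total expectation produces the claimed bound, with $x^*\in\arg\min_{x\in\mathcal X}\sum_t f_t(x)$ chosen as the reference point.

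Nothing here is genuinely hard; the only thing to be careful about is the order of expectations. The estimator $\tilde g_t(x_t)$ involves fresh randomness $v_t$ that is independent of $x_t$, so $\mathbb{E}[\langle \tilde g_t(x_t),x_t-x^*\rangle]=\mathbb{E}[\langle\nabla f_{\delta,t}(x_t),x_t-x^*\rangle]$ only after conditioning on the history; I would make this tower-property step explicit so that the convexity inequality can be applied pointwise in $x_t$ and then taken in expectation. The bound on $\mathbb{E}[\|\tilde g_t(x_t)\|^2]$ is left generic precisely because the whole point of the LAZO construction in later sections is to drive this second-moment term down.
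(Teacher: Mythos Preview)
Your proposal is correct and follows essentially the same route as the paper's proof: the projection-based one-step inequality, the tower-property step to replace $\tilde g_t$ by $\nabla f_{\delta,t}$, convexity of the smoothed function, the $|f_{\delta,t}-f_t|\le L\delta$ bias bound from Lipschitzness, then telescoping and summing. The only cosmetic slip is that the fresh perturbation in the estimator is $u_t$ (sphere) rather than $v_t$ (ball) in the paper's notation, but this does not affect the argument.
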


The first term in the right-hand side (RHS) of \eqref{eq.lm3} is the initial distance to $x^*$; the second term shows the impact of the perturbation  $\delta$, which is due to the \emph{bias} of the gradient estimator (cf. \eqref{eq.unbias}); and the third term relates to both the bias and variance of $\tilde{g}_{t}$. Lemma \ref{lm3} implies that the regret of \eqref{eq:sgd} relative to the best fixed decision $x^*$ critically depends on the second moment bound of $\tilde{g}_{t}\left(x_{t}\right)$.

\vspace{-0.3cm}
\paragraph{Second moment of the gradient estimator $\tilde{g}_{t}\left(x_{t}\right)$.} We discuss the second moment bounds with respect to the one-point \cite{flaxman2004online}, one-point residual \cite{zhang2020boosting} and two-point ZO methods \cite{nesterov2017random}.

\textbf{ C1)}  The \textbf{classic one-point} gradient estimator \cite{flaxman2004online} can be written as 
\begin{equation}
\tilde{g}^{(0)}_t(x_t)\triangleq\frac{du_t}{\delta}f_t(x_t+\delta u_t)
\label{eq:flax}
\end{equation}
where $d$ is the dimension of $x_t$, $u_t\sim U(\mathbb{S})$ is a random vector from the unit sphere $\mathbb{S}=\left\{x \in \mathbb{R}^{d}|\| x \|=1\right\}$ centered around the origin. Its second moment satisfies $\mathbb{E}\left[\|\tilde{g}^{(0)}_{t}\left(x_{t}\right)\|^{2}\right]\leq \frac{d^2G^2}{\delta^2}$,
%\begin{equation}\label{eq.tg_0}
%\left\|\tilde{g}^{(0)}_{t}\left(x_{t}\right)\right\|^{2}\leq \frac{d^2G^2}{\delta^2}
%\end{equation}
% where $R$ and $L$ are defined in Assumptions \ref{as1} and \ref{as2}. 
where $G$ is defined as $G\triangleq \max_{x\in{\cal X}}\max_t f_t(x)$.

\textbf{ C2)}  The \textbf{one-point residual} gradient estimator in \cite{zhang2020boosting} can be written as
\begin{equation}
\tilde{g}^{(1)}_t(x_t)\triangleq\frac{du_t}{\delta}\left(f_t(x_t+\delta u_t)-f_{t-1}(x_{t-1}+\delta u_{t-1})\right).
\label{eq:1e}
\end{equation}
Define the variation as $V_{\rm f}\triangleq \max_t\max_{x\in{\cal X}} |f_t(x)-f_{t-1}(x)|$. The second moment is bounded by
% \begin{align*}
% \mathbb{E}\left[\left\|\tilde{g}_{t}\left(x_{t}\right)\right\|^{2}\right] &\leq \max \Bigg\{\mathbb{E}\left[\left\|\tilde{g}_{0}\left(x_{0}\right)\right\|^{2}\right],\\ &\frac{d^2}{1-\alpha}\left(16 L^{2}+\frac{2}{\delta^{2}} V_{f}^{2}\right)\Bigg\}
% \end{align*}
\begin{equation}\label{eq.tg_1}
\small
\EE\left[\left\|\tilde{g}^{(1)}_t(x_t)\right\|^{2}\right] \leq \max \Big\{\left\|\tilde{g}^{(1)}_0(x_0)\right\|^{2}, d^2 L^{2}+\frac{d^2V_{\rm f}^{2}}{\delta^{2}}  \Big\}
\end{equation}
where Lipschitz constant $L$ is defined in Assumption \ref{as1}.

\textbf{ C3)} The \textbf{asymmetric two-point} gradient estimator and its second moment are respectively \cite{nesterov2017random}  
\begin{align}
\small
 &\tilde{g}^{(2)}_t(x_t)\triangleq\frac{du_t}{\delta}\left(f_t(x_t+\delta u_t)-f_{t}(x_{t})\right),~~~~~ \mathbb{E}\left[\big\|\tilde{g}^{(2)}_{t}\left(x_{t}\right)\big\|^{2}\right]\leq d^2L^2
\label{eq:2e}
\end{align}
and, the \textbf{symmetric two-point} gradient estimator and its bound on the second moment are respectively \cite{shamir2017optimal}
\begin{align}\label{eq.tg_2}
\small
 &\tilde{g}^{(2)}_t(x_t)\triangleq\frac{du_t}{2\delta}\left(f_t(x_t+\delta u_t)-f_{t}(x_{t}-\delta u_t)\right),~~~~~ \mathbb{E}\left[\big\|\tilde{g}^{(2)}_{t}\left(x_{t}\right)\big\|^{2}\right]\leq dL^2.
\end{align}
%where $L$ is defined in Assumption \ref{as1}. 

\vspace{-0.2cm}
\textbf{Performance trade-off.}
Plugging the bounds C1)--C3) into Lemma \ref{lm3}, we observe that the parameter $\delta$ would play a crucial role on the regret. If we use \emph{more queries} per iteration (e.g., two-point ZO in C3)), $\delta$ does not appear in the bound of $\mathbb{E}\big[\left\|\tilde{g}_{t}\left(x_{t}\right)\right\|^{2}\big]$, and one can simply choose an arbitrarily small $\delta$ to minimize the regret bound. 
Hence, two-point ZO methods can reach ${\cal O}(T^\frac{1}{2})$ regret \cite{nesterov2017random,shamir2017optimal}.
On the other hand, if we use \emph{fewer queries} per iteration (e.g., one-point ZO in C1)--C2)), $\delta$ does appear in the denominator of the bound on $\mathbb{E}\big[\left\|\tilde{g}_{t}\left(x_{t}\right)\right\|^{2}\big]$. A trade-off thus emerges between the bias and variance of the gradient estimator since reducing $\delta$ (e.g., bias) will increase the variance bound on $\mathbb{E}\big[\left\|\tilde{g}_{t}\left(x_{t}\right)\right\|^{2}\big]$ in \eqref{eq.lm3}. 
% However, things become easier when $\delta$ does not appear in the denominator of the second moment bound as simply reducing $\delta$ to narrow the regret bound is feasible. 
Thus, the one-point ZO methods \cite{flaxman2004online,zhang2020boosting} only achieve ${\cal O}(T^\frac{3}{4})$ regret. 
%\TC{the blue part is not clear to reviewers. cannot be understood from the context.}

\subsection{Key idea: Two lazy query rules}
Motivated by this delicate trade-off, we will develop a low-variance ZO method that achieves the best of one- and two-point ZO. 
Our key idea is to use the adaptive combination of the one-point residual and two-point gradient estimators. In this way, the algorithm will query new points only when one point estimator suffers high variance. 

Recalling the second moment bound of the one-point residual method in \eqref{eq.tg_1}, we notice the degrading term is $\frac{d^2}{\delta^2}V_{\rm f}$, which will reduce to that in \eqref{eq:2e} only when $V_{\rm f}={\cal O} (\delta^2)$, so that the regret bound can approach  $O(T^\frac{1}{2})$ correspondingly. We gauge that the requirement on $V_{\rm f}$ is stringent because it characterizes the maximum function changes at all points and iterations. Intuitively, we can still reuse old queries when the function variation at particular point and iteration is small. 

% We gauge that the stringent requirement on $V_{\rm f}$ arises because the function variation $V_{\rm f}$ cannot characterize all the variability between two queries used in the one-point residual gradient.  
 
\vspace{-0.3cm}
\paragraph{A valuable revisit.} 
We carefully re-analyze the second moment bound for the one-point residual estimator $g_t^{(1)}(x_t)$ in \eqref{eq:1e}, and get the following instance-dependent bound (with probability $1$)
\begin{align}
\label{eq:Qt}
\small
\|\tilde{g}^{(1)}_t(x_t)\|^2\leq & \frac{|f_t(w_t)-f_{t-1}(w_{t-1})|^2}{\|w_t-w_{t-1}\|^2}\left(8d^2+\frac{2d^2\eta^2}{\delta^2}\|\tilde{g}_{t-1}^{(1)}(x_{t-1})\|^2\right)
\end{align}
where the perturbed point is defined as $w_t \triangleq x_t+\delta u_t$.

The proof of the instance-dependent bound \eqref{eq:Qt} can be found in Section \ref{appendix_a.2}. The bound implies that if the instance-wise function value variation $|f_t(w_t)-f_{t-1}(w_{t-1})|$ is small relative to $\|w_t-w_{t-1}\|$, 
the second moment bound of $\tilde{g}_t^{(1)}(x_t)$ can be even smaller than the bound for the two-point ZO method in \eqref{eq:2e}.  
We substantiate this instance-dependent analysis next. 
\begin{lemma}[Reduced norms]
\label{lm:temp}
Under Assumptions \ref{as1}--\ref{as2}, we run \eqref{eq:sgd} with $\tilde{g}_t^{(1)}(x_t)$ and set $\eta=\frac{R}{L\sqrt{dT}}$, $\delta=R\sqrt{\frac{d}{T}}$. For a given iteration $t$, if $\|\tilde{g}_{t-1}^{(1)}(x_{t-1})\|^2\leq d^2L^2$ and
% $\mathbb{E}\left[D_m(w_m,w_{m-1})^2\right]<\frac{L^2}{20}$, 
\begin{equation}\label{lemma2_as}
\small
\frac{|f_t(w_t)-f_{t-1}(w_{t-1})|^2}{\|w_t-w_{t-1}\|^2}<\frac{L^2}{10d} 
\end{equation}
then the second moment of the one-point residual gradient estimator \eqref{eq:1e} satisfies $\|\tilde{g}_t^{(1)}(x_t)\|^{2}< dL^2$. 
\end{lemma}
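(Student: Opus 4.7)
The plan is to simply feed the two hypotheses into the instance-dependent bound \eqref{eq:Qt} and check that the arithmetic closes. The crucial observation is that with the prescribed stepsize $\eta=R/(L\sqrt{dT})$ and smoothing radius $\delta=R\sqrt{d/T}$ we get the clean identity
\begin{equation*}
\frac{\eta^2}{\delta^2} \;=\; \frac{R^2/(L^2 d T)}{R^2 d/T} \;=\; \frac{1}{L^2 d^2},
\end{equation*}
so the awkward-looking factor $2d^2\eta^2/\delta^2$ collapses to $2/L^2$. This is the only place where the specific choices of $\eta,\delta$ enter, and it is what converts the abstract inequality \eqref{eq:Qt} into something usable.

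Given this, the argument proceeds in two short steps. First I would use the inductive hypothesis $\|\tilde{g}_{t-1}^{(1)}(x_{t-1})\|^2 \le d^2 L^2$ together with the identity above to bound the parenthetical term in \eqref{eq:Qt}:
\begin{equation*}
8d^2 + \frac{2d^2\eta^2}{\delta^2}\,\|\tilde{g}_{t-1}^{(1)}(x_{t-1})\|^2 \;\le\; 8d^2 + \frac{2}{L^2}\cdot d^2 L^2 \;=\; 10d^2.
\end{equation*}
Second, I would multiply by the assumed variation ratio \eqref{lemma2_as}, which is strictly less than $L^2/(10d)$, to conclude
\begin{equation*}
\|\tilde{g}_t^{(1)}(x_t)\|^2 \;<\; \frac{L^2}{10d}\cdot 10 d^2 \;=\; d L^2,
\end{equation*}
which is exactly the claim. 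The constants $8$ and $10$ in \eqref{eq:Qt} and in the hypothesis \eqref{lemma2_as} are tuned precisely so this final product gives $dL^2$ rather than something larger.

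There is essentially no obstacle in this proof once \eqref{eq:Qt} is in hand; the whole lemma is really a bookkeeping check. The genuinely non-trivial step has already been done upstream, namely establishing the instance-dependent bound \eqref{eq:Qt}, which requires unrolling the definition of $\tilde{g}_t^{(1)}$, separating the numerator $|f_t(w_t)-f_{t-1}(w_{t-1})|$ from $\|w_t-w_{t-1}\|$, and controlling $\|w_t-w_{t-1}\|$ via $\|x_t-x_{t-1}\| \le \eta \|\tilde{g}_{t-1}^{(1)}(x_{t-1})\|$ plus $\delta\|u_t-u_{t-1}\|$. I would therefore spend essentially all the writing on step one and two above, and refer to the supplementary derivation of \eqref{eq:Qt} for the tighter bound that underlies the lemma.
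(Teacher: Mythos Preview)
Your proof is correct and matches the paper's own argument essentially step for step: both compute $\eta^2/\delta^2=1/(d^2L^2)$ and then feed the two hypotheses into the instance-dependent bound \eqref{eq:Qt}. The only cosmetic difference is the order of substitution (the paper inserts the variation ratio first and the gradient-norm hypothesis second, whereas you bound the parenthetical to $10d^2$ first), but the arithmetic is identical.
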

\vspace{-0.1cm}
Compared with the second moment bound in \eqref{eq.tg_1} and \eqref{eq.tg_2}, Lemma \ref{lm:temp} quantitatively shows that if the temporal variation, in the sense of \eqref{lemma2_as}, is small relative to $L^2/d$, then reusing the delayed query can reduce the variance, and thus benefit the regret. 

Inspired by this, we introduce the following two definitions of the function variation. 

\begin{defi}[Temporal variation]
    \label{df:temp}
  For  $\forall x,y\in\mathcal{X}$, define the two temporal variation at $t$ as
    \begin{align}\label{eq.df:temp}
    \small
 D_t^a(x,y)\!\triangleq\!\frac{|f_t(x)\!-\!f_{t-1}(y)|}{\|x-y\|}~,~~~~~~~~
D_t^b(x,y)\!\triangleq\!\frac{|f_t(x)\!-\!f_{t-1}(y)|}{\eta L}.
    \end{align}
\end{defi}

The two definitions in Definition \ref{df:temp} differ in that: the temporal variation $D_t^a(x,y)$ normalizes the function values by the variation in terms of the query points, and the temporal variation $D_t^b(x,y)$ approximates the variation in terms of the query points by the stepsize $\eta$ since $\|w_t-w_{t-1}\|\approx{\cal O}(\eta L)$. 

\begin{wrapfigure}{R}{0.55\textwidth}
   \vspace{-0.85cm}
  \begin{minipage}[t]{7.8cm}
\begin{algorithm}[H]
   \caption{LAZO: Lazy query for ZO gradient method: \colorbox{red!30}{red part} is run only by \red{\bf LAZOa}; \colorbox{blue!30}{blue part} is implemented only by \blue{\bf LAZOb}; not both at the same time.}
   \label{alg:LAZO}
\begin{algorithmic}[1]
   \State {\bfseries Input:} $x_0\in \mathbb{R}^b$; $T, \delta,\eta, \gamma, D>0$.
   \State Sample $u_0\sim U(\mathbb{S})$.
   \State Query $f_0(w_0)$ and $f_0(x_0-\delta u_0)$.\quad\Comment{$w_0=x_0+\delta u_0$}
   \State $\tilde{g}_{0}(x_{0})=\frac{du_0}{2\delta}(f_t(w_0)-f_{t}(x_{0}-\delta u_0))$
%   \State Compute $\tilde{g}_{0}(x_{0})=\frac{du_t}{\delta}(f_0(w_0)-f_0(x_0))$
%   \State Update $x_{1}=\Pi_{(1-\gamma)\mathcal{X}}\left(x_{0}-\eta \tilde{g}_{0}(x_{0})\right)$.   

   \For{$t=1$ {\bfseries to} $T$}
   \State Sample $u_t\sim U(\mathbb{S})$.
%   \STATE Set $w_t=x_t+\delta u_t$.
   \State Query $f_t(w_t)$. \qquad\qquad\qquad~\Comment{$w_t=x_t+\delta u_t$}
   \State Compute $\tilde{g}_{t}(x_{t})=\frac{du_t}{\delta}(f_t(w_t)-f_{t-1}(w_{t-1}))$
%   \State Compute $D_t(w_t,w_{t-1})$ in \eqref{eq.df:temp}.
   \If{\colorbox{red!30}{$D_t^a(w_t,w_{t-1})$} or \colorbox{blue!30}{$D_t^b(w_t,w_{t-1})\!>\! D$}}
   \State Query $f_t(x_t-\delta u_t)$.
    \State $\tilde{g}_{t}(x_{t})=\frac{du_t}{2\delta}(f_t(w_t)-f_t(x_t-\delta u_t))$
   \EndIf
   \State Update $x_{t+1}=\Pi_{\mathcal{X}}\left(x_{t}-\eta \tilde{g}_{t}(x_{t})\right)$.
   \EndFor
\end{algorithmic}
\end{algorithm}
\end{minipage}
  \vspace{-1cm}
 \end{wrapfigure}

Building upon these two definitions, we design two alternative rules that check whether the temporal variations exceed a threshold $D$ to decide whether to query one or two points. Specifically, using the same ZO iteration \eqref{eq:sgd}, we propose the \textbf{LAZOa/b} gradient estimator as 
\begin{equation}
\small
\tilde{g}_{t}^{a/b}(x_{t})\!=\!\left\{\begin{array}{ll}\tilde{g}^{(1)}_t(x_t), &\! \!\!\text{If } D_t^{a/b}(w_t,w_{t-1})\!\leq\! D \\ \tilde{g}^{(2)}_t(x_t), &\!\!\! \text{else,} \end{array}\right.
\label{eq:clipping}
\end{equation}

We summarize the complete  LAZOa and LAZOb algorithms in Algorithm \ref{alg:LAZO}. More intuition can be seen in Section \ref{sec:intuition}.

LAZOa and LAZOb have trade-off between \emph{performance} and \emph{computation}. As defined in Definition \ref{df:temp}, $D_t^b$ can be viewed as an approximation to $D_t^a$, so LAZOa performs better through a more accurate temporal variation estimation. This is also shown in the experiments where LAZOa has faster convergence or achieves lower loss given a fixed iteration $T$. However, LAZOb has lower computation overhead since its rule can be equivalently written as $\|\tilde{g}^{(1)}_t(x_t)\|\!\leq\! \tilde D$, which is verified in Section \ref{comp}. 

\vspace{-0.2cm} 
\section{Theoretical Analysis}\label{theory}

This section provides theoretical guarantee for LAZO. Before we proceed, we first highlight the technical challenge of analyzing the regret of LAZO. 
 
\vspace{-0.1cm}
\subsection{Challenge: Lazy query introduces bias}\label{chan}

In a high level, the difficulty comes from that the lazy query breaks down the unbiasedness property of the ZO gradient estimator. 
For simplicity, we use the LAZOb estimator as an example to illustrate this, and the same argument also holds for the LAZOa estimator.

Let $A_t\triangleq \{u_t|D_t^b(w_t, w_{t-1})\leq D\}$ denote the region where LAZOb uses the one-point query and $\bar{A}_t$ denote the complementary set of $A_t$. 
To see the potential bias, we condition on the iterate $x_t$ and take the conditional expectation of the LAZOb estimator, given by
\begin{align}
\label{lazoexpect}
\small
&~~~~~~\mathbb{E}_{u_t}\Big[\tilde{g}_t^b(x_t)\Big|x_t\Big]\nonumber=\mathbb{E}_{u_t}\left[\tilde{g}_t^{(1)}(x_t)\mathbf{1}_{A_t}+\tilde{g}_t^{(2)}(x_t)\mathbf{1}_{\bar{A}_t}\Big|x_t\right]\nonumber\\
&\overset{(a)}{=}\mathbb{E}_{u_t}\left[\tilde{g}_t^{(1)}(x_t)\mathbf{1}_{A_t^s}+\tilde{g}_t^{(2)}(x_t)\mathbf{1}_{\bar{A}_t^s}\Big|x_t\right]+\mathbb{E}_{u_t}\Big[\tilde{g}_t^{(1)}(x_t)\mathbf{1}_{A_t\backslash A_t^s}-\tilde{g}_t^{(2)}(x_t)\mathbf{1}_{{\bar{A}_t^s\backslash \bar{A}_t}} \Big|x_t\Big]\nonumber\\
&\overset{(b)}{=}\underbrace{\mathbb{E}_{u_t}\!\left[\tilde{g}_t^{(1)}(x_t)\mathbf{1}_{A_t^s}+\tilde{g}_t^{(2)}(x_t)\mathbf{1}_{\bar{A}_t^s}\Big|x_t\right]}_{=\nabla f_{\delta,t}(x_t)}+\mathbb{E}_{u_t}\!\Big[\!\underbrace{\left(\tilde{g}_t^{(1)}(x_t)\!-\!\tilde{g}_t^{(2)}(x_t)\right)\mathbf{1}_{A_t\backslash A_t^s}}_{b_t\triangleq}\Big|x_t\Big]\!\! 
\end{align}
where (a) holds since $A_t^s\subseteq A_t$ denotes the largest symmetric subset of $A_t$, i.e., $A_t^s=\sup\{A|\mathbf{1}_{A|x_t}(u)=\mathbf{1}_{A|x_t}(-u), A\subseteq A_t \}$ so that $\mathbf{1}_{A_t}=\mathbf{1}_{A_t^s}+\mathbf{1}_{A_t\backslash A_t^s}$, and $\bar{A}_t^s$ denotes the complementary set of $A_t^s$ so that $\bar{A}_t\subseteq \bar{A}_t^s$ and $\mathbf{1}_{\bar{A}_t}=\mathbf{1}_{\bar{A}_t^s}-\mathbf{1}_{\bar{A}_t^s\backslash \bar{A}_t}$; (b) is due to $\mathbf{1}_{\bar{A}_t^s\backslash \bar{A}_t}=\mathbf{1}_{A_t\backslash A_t^s}$. 

For the two terms in \eqref{lazoexpect}, taking expectations of $\tilde{g}_t^{(1)}(x_t)$ and $\tilde{g}_t^{(2)}(x_t)$ over a symmetric distribution will give $\nabla f_{\delta,t}(x_t)$, and  the remaining term will be treated as the bias; see details in the supplementary material. 
This way of decomposition ensures that when the region of using one-point query is symmetric, the bias will be none; and the bias will diminish as $A_t$ is close to a symmetric set. 
This type of asymmetric bias also emerges in gradient clipping \cite{chen2020understanding} and signSGD \cite{bernstein2018signsgd}, where a symmetric gradient distribution is often assumed. We make a similar but weaker assumption in ZO below. 

\begin{assumption}
\label{as_sym1}
Denote $\mathbf{1}$ as the indicator function and $A|x_t$ as $A$ conditioned on $x_t$.
Let $A_t^s=\sup\{A|\mathbf{1}_{A|x_t}(u)=\mathbf{1}_{A|x_t}(-u), A\subseteq A_t \}$ be the maximum symmetric space in $A_t$. Assume that the nonasymmetric area satisfies $\sum_{t=0}^{T}\mathbb{P}(A_t\backslash A_t^s)={\cal O}(\sqrt{{T}/{d}})$. 
\end{assumption}
\vspace{-0.3cm}
 
Assumption \ref{as_sym1} is satisfied even if the summation of the probability series does not converge. For example, if $\mathbb{P}(A_t\backslash A_t^s)\sim{\cal O}(\frac{1}{t})$, then $\sum_{t=0}^{T}\mathbb{P}(A_t\backslash A_t^s)\sim{\cal O}(\log{T})<{\cal O}(\sqrt{{T}/{d}})$.
 
\vspace{-0.1cm} 
 \noindent\textbf{Discussion on symmetricity.}
Assumption \ref{as_sym1} indicates that the active region of lazy rule $A_t$ is asymptotically symmetric, which turns out to nearly hold throughout our simulations. Since projection preserves symmetricity of a symmetric space. We gauge that if for all random projections, the projected areas are symmetric, the original $A_t$ is symmetric. 
% For iteration $10$, we generate $4$ random Gaussian projection matrices  and sample $10000$ random perturbations $u_t$ for each of them. We use the projection matrices to project those $u_t$ belonging to $A_t$ to a 2-dimensional space and visualize them. Figure \ref{fig:symm} illustrates the symmetric results for LAZOb in the linear quadratic regulator (LQR) experiment in Section 4.1 and we can see that all of the 4 projections are almost symmetric. 
We verify this via the linear quadratic regulator (LQR) and resource allocation experiment in Section \ref{experiment} and show in Figure \ref{fig:symm}. We randomly choose an iteration for each experiment, sample $4\times 10^{4}$ random perturbations $u_t$ and generate $4$ random $d\times 2$ Gaussian matrices to project those $u_t\in A_t$ onto a random 2-dimensional space and visualize them, from which we can see that all of the 8 projections are almost symmetric. More validations are included in Section  \ref{experiment_detail}.

\begin{figure}[htb]
    \centering
    \setlength{\tabcolsep}{-0.02cm}
    \begin{tabular}{cccccccc}
\includegraphics[width=.13\textwidth]{./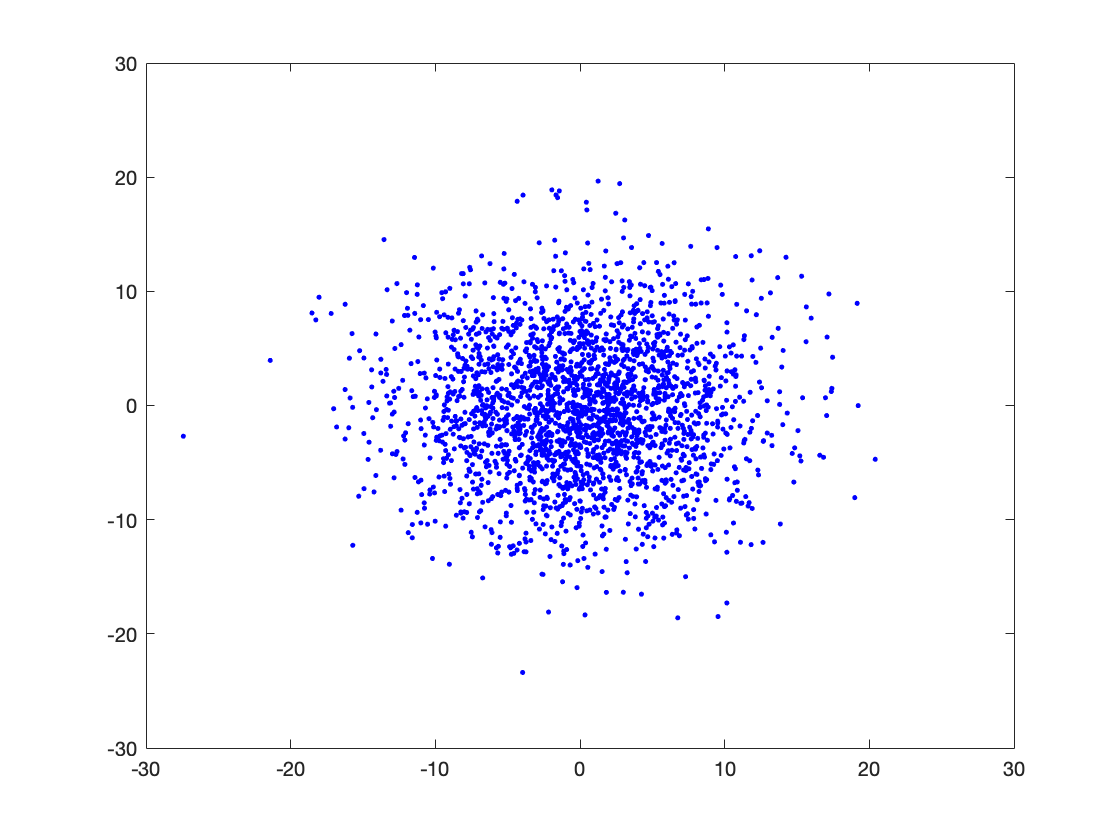} &
\includegraphics[width=.13\textwidth]{./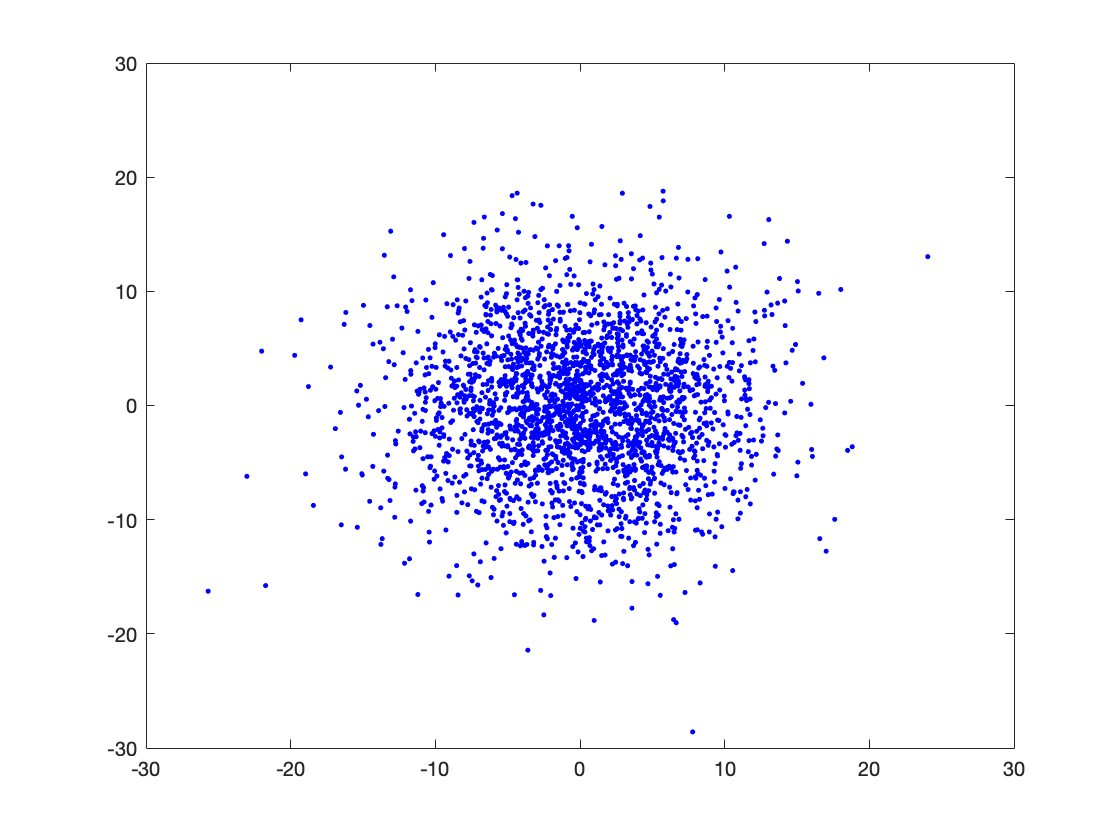} &
 \includegraphics[width=.13\textwidth]{./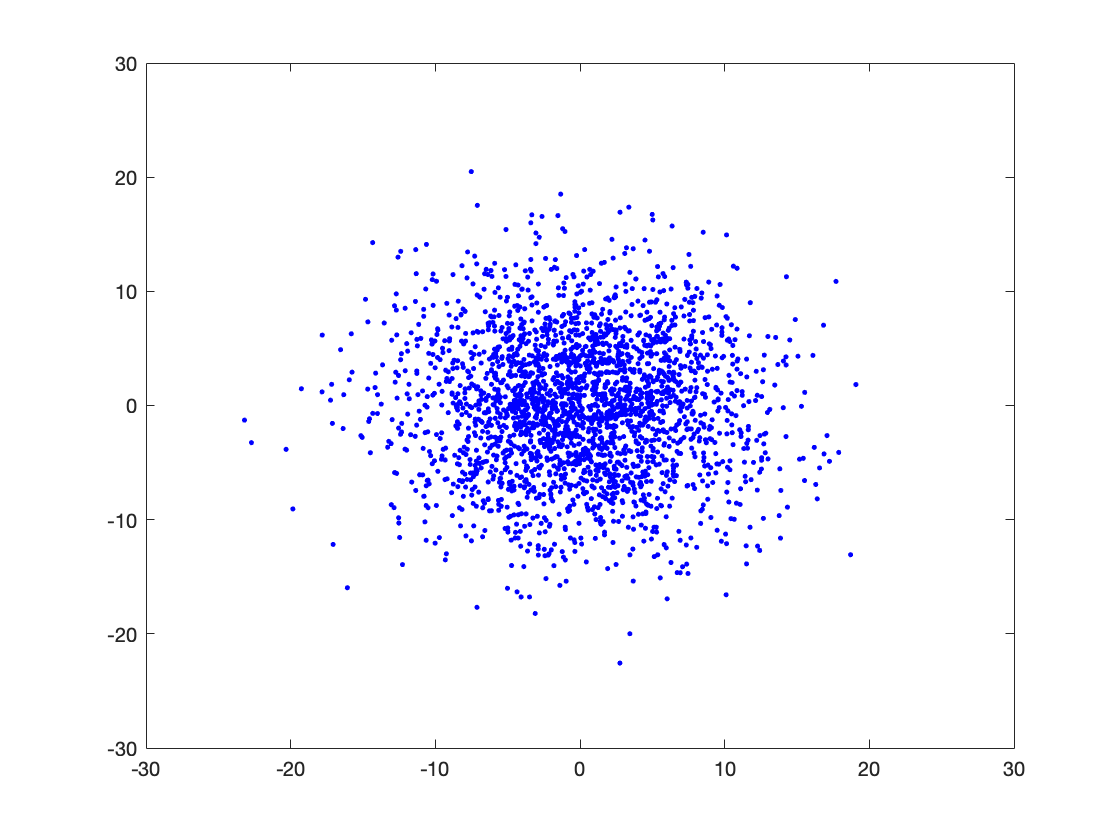}&
% \hspace{-0.2cm} & \hspace{-0.2cm}
\includegraphics[width=.13\textwidth]{./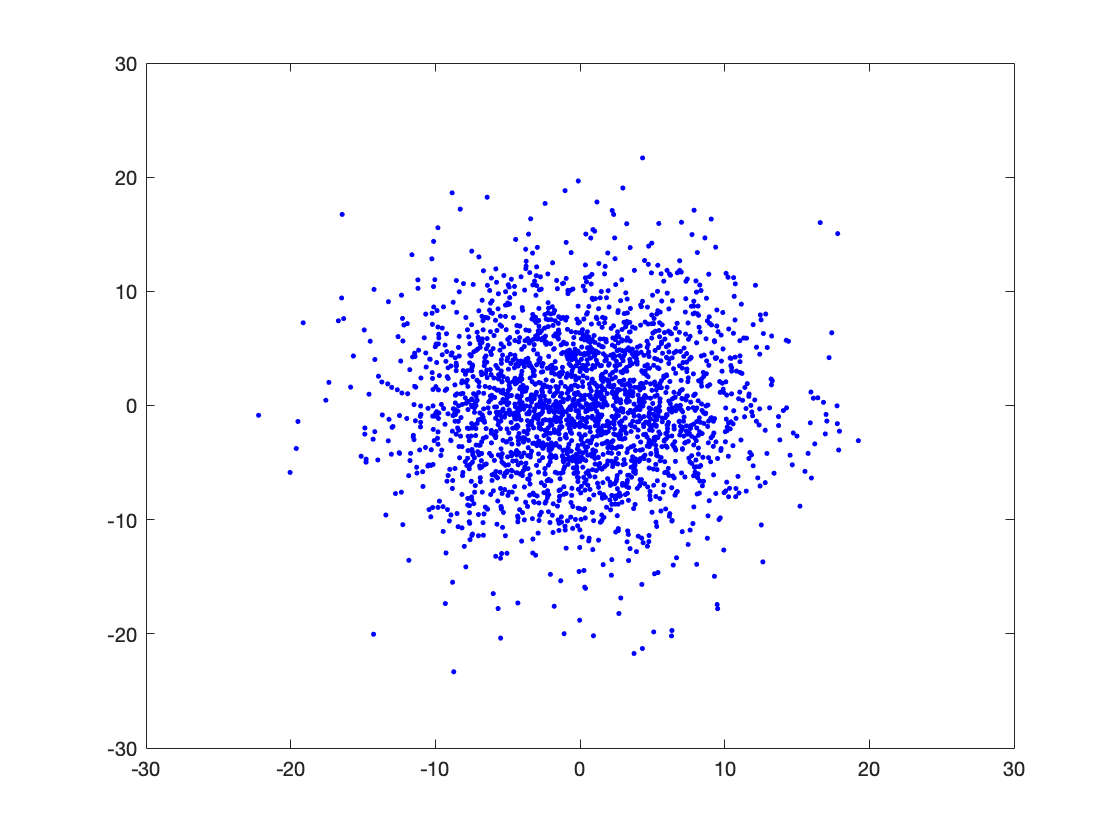}&
\includegraphics[width=.13\textwidth]{./symmetric/1.png} &
\includegraphics[width=.13\textwidth]{./symmetric/2.png} &
 \includegraphics[width=.13\textwidth]{./symmetric/3.png}&
% \hspace{-0.2cm} & \hspace{-0.2cm}
\includegraphics[width=.13\textwidth]{./symmetric/4.png}\\
 \footnotesize(a) &\footnotesize(b) &

\footnotesize(c) &\footnotesize(d)&
 \footnotesize(e) &\footnotesize(f)  &

\footnotesize(g) &\footnotesize(h)\\
    \end{tabular}
       \vspace{-0.2cm}
    \caption{Distribution of $u_t\in A_t$ projected using 4 random matrices for LAZOb. (a) -- (d) are for LQR control and (e) -- (h) are for resource allocation. }
     \vspace{-0.3cm}
    \label{fig:symm}
\end{figure}

%\begin{assumption}
%\label{as_sym2}
%Letting $\tilde{A}_t^s=\sup\{A|\mathbf{1}_{A|x_t}(u)=\mathbf{1}_{A|x_t}(-u), A\subseteq \tilde{A}_t\}$, we assume that $\sum_{t=0}^{T}\mathbb{P}(\tilde{A}_t\backslash \tilde{A}_t^s)={\cal O}(\sqrt{T})$. 
%\end{assumption}

\subsection{Results in online convex optimization}
% We first consider the OCO problem \eqref{prb}.
% \begin{equation}
% \min _{x \in \mathcal{X}}~ \sum_{t=0}^{T} f_{t}(x)
% \label{prb}
% \end{equation}
% With this lemma, $f_t(x_t+u_t)$ at any time $t$ is well-defined. Next, we focus on some basic properties of gradient estimator in LAZO. 
We first verify that the LAZO estimator is an asymptotically unbiased gradient estimator of the smoothed function $f_{\delta,t}$. 

\begin{lemma}
\label{lm:bt-bias}
Under Assumptions \ref{as1}--\ref{as_sym1}, the bias of the LAZO gradient estimator is 
\begin{align}
&\mathbb{E}\left[\tilde{g}_t^a(x_t)|x_t\right]=\nabla f_{\delta,t}(x_t)+\mathbb{E}\left[\tilde{b}_t\Big|x_t\right], ~~~~~~\mathbb{E}\left[\tilde{g}_t^b(x_t)|x_t\right]=\nabla f_{\delta,t}(x_t)+\mathbb{E}\left[b_t\Big|x_t\right]
\end{align}
where $b_t$ is defined in \eqref{lazoexpect}, while $\tilde{b}_t$ is defined similarly by replacing $A_t, A_t^s$ with $\tilde{A}_t\triangleq \{u_t|D_t^a(w_t, w_{t-1})\leq D\}, \tilde{A}_t^s=\sup\{A|\mathbf{1}_{A|x_t}(u)=\mathbf{1}_{A|x_t}(-u), A\subseteq \tilde{A}_t\}$.
% $$b_t\triangleq \frac{du_t}{2\delta}\left(f_t(w_t)-2f_{t-1}(w_{t-1})+f_t(x_t-\delta u_t)\right)\mathbf{1}_{A_t\backslash A_t^{s}}.$$ $$\tilde{b}_t\triangleq \frac{du_t}{2\delta}\left(f_t(w_t)-2f_{t-1}(w_{t-1})+f_t(x_t-\delta u_t)\right)\mathbf{1}_{\tilde{A}_t\backslash \tilde{A}_t^{s}}.$$
Moreover, if we use $\eta=\frac{R}{L\sqrt{dT}}$, $\delta=R\sqrt{\frac{d}{T}}$, $D={\cal O}(\frac{L}{\sqrt{d}})<\frac{L}{\sqrt{10}}$ for LAZOa and $D={\cal O}(\sqrt{d}L)$ for LAZOb, then both $\sum_{t=0}^T\mathbb{E}\left[\|b_t\|\right]={\cal O}(\sqrt{dT})$ and $\sum_{t=0}^T\mathbb{E}[\|\tilde{b}_t\|]={\cal O}(\sqrt{dT})$. 
\end{lemma}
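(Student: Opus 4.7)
I would split the lemma into two parts: (a) the bias decomposition, and (b) the growth rate of the summed bias norms. For (a), I would reproduce the chain of equalities in \eqref{lazoexpect} for LAZOb, and run the analogous computation for LAZOa with $\tilde{A}_t, \tilde{A}_t^s$ replacing $A_t, A_t^s$. The only substantive step is to establish
\begin{equation*}
\mathbb{E}_{u_t}\!\left[\tilde{g}_t^{(1)}(x_t)\mathbf{1}_{A_t^s} + \tilde{g}_t^{(2)}(x_t)\mathbf{1}_{\bar{A}_t^s}\big|x_t\right] = \nabla f_{\delta,t}(x_t),
\end{equation*}
since granted this, $\mathbb{E}_{u_t}[\tilde{g}_t^b(x_t)|x_t]$ differs from $\nabla f_{\delta,t}(x_t)$ by precisely the term identified as $b_t$, and the same reasoning yields the LAZOa claim.

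I would verify this identity in two steps. First, on any symmetric $S\subseteq\mathbb{S}$, the quantity $f_{t-1}(w_{t-1})$ is deterministic conditional on the history and $\mathbb{E}_{u_t}[u_t\mathbf{1}_S|x_t]=0$ because $u_t$ is uniform on the sphere; hence the $f_{t-1}$-piece of $\tilde{g}_t^{(1)}\mathbf{1}_S$ drops out and $\mathbb{E}_{u_t}[\tilde{g}_t^{(1)}(x_t)\mathbf{1}_S|x_t] = \mathbb{E}_{u_t}[\tfrac{du_t}{\delta}f_t(w_t)\mathbf{1}_S|x_t]$. Second, a change of variables $u_t\mapsto -u_t$ inside the symmetric two-point expression gives $\mathbb{E}_{u_t}[\tilde{g}_t^{(2)}(x_t)\mathbf{1}_S|x_t] = \mathbb{E}_{u_t}[\tfrac{du_t}{\delta}f_t(w_t)\mathbf{1}_S|x_t]$ as well. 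Applying these with $S=A_t^s$ and $S=\bar{A}_t^s$ respectively and using $A_t^s\cup\bar{A}_t^s=\mathbb{S}$ reduces the displayed expression to $\mathbb{E}_{u_t}[\tfrac{du_t}{\delta}f_t(w_t)|x_t]=\nabla f_{\delta,t}(x_t)$.

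For part (b), I would bound $\|b_t\|$ and $\|\tilde{b}_t\|$ pointwise on the asymmetric sets $A_t\backslash A_t^s$ and $\tilde{A}_t\backslash\tilde{A}_t^s$, and then apply Assumption \ref{as_sym1}. In the LAZOb case, the selection rule forces $|f_t(w_t)-f_{t-1}(w_{t-1})|\leq D\eta L$ on $A_t$; substituting $D=\mathcal{O}(\sqrt{d}L)$, $\eta=R/(L\sqrt{dT})$ and $\delta=R\sqrt{d/T}$ gives $\|\tilde{g}_t^{(1)}\|\leq(d/\delta)D\eta L=\mathcal{O}(\sqrt{d}L)$. Combined with the deterministic Lipschitz bound $\|\tilde{g}_t^{(2)}\|\leq dL$ from the symmetric two-point formula and Assumption \ref{as1}, one gets $\|b_t\|\leq\mathcal{O}(dL)\mathbf{1}_{A_t\backslash A_t^s}$, and Assumption \ref{as_sym1} yields $\sum_t\mathbb{E}[\|b_t\|]\leq\mathcal{O}(dL)\cdot\mathcal{O}(\sqrt{T/d})=\mathcal{O}(\sqrt{dT})$.

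The LAZOa case is the main obstacle, because on $\tilde{A}_t$ the normalizer $\|w_t-w_{t-1}\|$ in $D_t^a$ itself depends on $\|\tilde{g}_{t-1}^a\|$ through the update rule, making the bound on $\|\tilde{g}_t^{(1)}\|$ recursive. I would close this by induction on $t$, proving $\|\tilde{g}_t^a\|\leq dL$ almost surely. The base case $t=0$ is immediate from $\tilde{g}_0^a=\tilde{g}_0^{(2)}$ and Assumption \ref{as1}. In the inductive step: if the rule selects $\tilde{g}_t^{(2)}$ the bound is immediate; if it selects $\tilde{g}_t^{(1)}$ then $u_t\in\tilde{A}_t$, and Lemma \ref{lm:temp} is precisely tailored to close the recursion, the hypothesis $D_t^a(w_t,w_{t-1})^2<L^2/(10d)$ being guaranteed by picking the implicit constant in $D=\mathcal{O}(L/\sqrt{d})$ small enough and the other hypothesis $\|\tilde{g}_{t-1}^{(1)}\|^2\leq d^2L^2$ following from the inductive hypothesis. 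Hence on $\tilde{A}_t$, $\|\tilde{g}_t^{(1)}\|\leq\sqrt{d}L$, so $\|\tilde{b}_t\|\leq\mathcal{O}(dL)\mathbf{1}_{\tilde{A}_t\backslash\tilde{A}_t^s}$, and Assumption \ref{as_sym1} again yields $\sum_t\mathbb{E}[\|\tilde{b}_t\|]=\mathcal{O}(\sqrt{dT})$.
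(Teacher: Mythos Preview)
Your proposal is correct and mirrors the paper's approach closely: the bias decomposition is exactly as in \eqref{lazoexpect} (the paper carries it out explicitly in the supplementary as equation \eqref{eq:bias30}), and for LAZOb you bound $\|b_t\|$ the same way via the rule $|f_t(w_t)-f_{t-1}(w_{t-1})|\le D\eta L$ together with Lipschitz continuity of $f_t$, then invoke Assumption~\ref{as_sym1}.

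One small point on the LAZOa side. Lemma~\ref{lm:temp} is stated for the pure one-point residual iteration, so its hypothesis and the recursion~\eqref{eq:Qt} involve $\tilde g_{t-1}^{(1)}$, not $\tilde g_{t-1}^a$; in LAZOa the update uses $\tilde g_{t-1}^a$, which may have been the two-point estimate. The fix is immediate (replace $\tilde g_{t-1}^{(1)}$ by $\tilde g_{t-1}^a$ throughout the derivation of~\eqref{eq:Qt}), and the paper packages exactly this adaptation as a standalone induction (Lemma~\ref{lazoabound}) under the weaker constraint $D<L/\sqrt{10}$, obtaining $\|\tilde g_t^a\|^2\le d^2L^2$. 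Your route via Lemma~\ref{lm:temp} needs the implicit constant in $D=\mathcal{O}(L/\sqrt{d})$ small enough to force $D_t^a<L/\sqrt{10d}$, which is allowed by the hypothesis but slightly more restrictive than the paper's $D<L/\sqrt{10}$; in exchange you get the tighter bound $\|\tilde g_t^{(1)}\|\le\sqrt{d}L$ on $\tilde A_t$. Either way the final $\mathcal{O}(\sqrt{dT})$ conclusion is the same.
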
 

Then we establish the second moment bound of the gradient estimator in LAZO.

\begin{lemma}
\label{clip2bound}
Under Assumptions \ref{as1}--\ref{as3}, the second moment bound of the gradient estimator $\tilde{g}_t^a(x_t)$ and $\tilde{g}_t^b(x_t)$ satisfy that there exists a constant $c={\cal O}(1)$ such that
\begin{subequations}\label{eq:clip2}
% \small
\begin{align}
    &\mathbb{E}\left[\|\tilde{g}_t^a(x_t)\|^2\Big|x_t\right]\leq   cdL^2+\frac{2d^2D^2\eta^2}{\delta^2}\|\tilde{g}_{t-1}^a(x_{t-1})\|^2+8d^2D^2; \label{eq:clip2a}\\
    &\mathbb{E}\left[\|\tilde{g}_t^b(x_t)\|^2\Big|x_t\right]\leq   \frac{D^2\eta^2d^2L^2}{\delta^2} +cdL^2.\label{eq:clip2b}
\end{align}
\end{subequations}
\end{lemma}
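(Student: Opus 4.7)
The plan is to prove each bound by conditioning on the lazy query indicator and treating the two branches separately. Write $\tilde{A}_t=\{u_t:D_t^a(w_t,w_{t-1})\le D\}$ for LAZOa and $A_t=\{u_t:D_t^b(w_t,w_{t-1})\le D\}$ for LAZOb, and decompose
\[
\mathbb{E}\!\left[\|\tilde g_t^{a/b}\|^2\,\big|\,x_t\right]=\mathbb{E}\!\left[\|\tilde g_t^{(1)}\|^2\mathbf 1_{\{D_t\le D\}}\,\big|\,x_t\right]+\mathbb{E}\!\left[\|\tilde g_t^{(2)}\|^2\mathbf 1_{\{D_t>D\}}\,\big|\,x_t\right].
\]
The two-point branch is handled identically for both variants: since the symmetric estimator in (9) satisfies $\mathbb{E}[\|\tilde g_t^{(2)}\|^2\mid x_t]\le dL^2$, multiplying by the indicator can only shrink it, and the contribution is absorbed into the $cdL^2$ term. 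The work is therefore all on the one-point branch.

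For part (b), the bound is essentially algebraic. When $u_t\in A_t$ we have $|f_t(w_t)-f_{t-1}(w_{t-1})|\le D\eta L$ by definition of $D_t^b$, and since $\|u_t\|=1$,
\[
\|\tilde g_t^{(1)}(x_t)\|^2=\frac{d^2}{\delta^2}\bigl(f_t(w_t)-f_{t-1}(w_{t-1})\bigr)^2\le \frac{d^2D^2\eta^2L^2}{\delta^2}.
\]
Combining with the two-point branch gives \eqref{eq:clip2b}. No recursion arises because the $D_t^b$ threshold already controls the numerator directly.

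For part (a), I would invoke the instance-dependent bound (10), after noting it extends verbatim to the LAZO update: the derivation of (10) only uses $x_t-x_{t-1}=-\eta\tilde g_{t-1}$ together with non-expansiveness of projection and $\|u_t-u_{t-1}\|\le 2$, which yields $\|w_t-w_{t-1}\|^2\le 2\eta^2\|\tilde g_{t-1}^a\|^2+8\delta^2$ and hence
\[
\|\tilde g_t^{(1)}\|^2\le \frac{|f_t(w_t)-f_{t-1}(w_{t-1})|^2}{\|w_t-w_{t-1}\|^2}\Bigl(8d^2+\tfrac{2d^2\eta^2}{\delta^2}\|\tilde g_{t-1}^a(x_{t-1})\|^2\Bigr).
\]
On the event $u_t\in\tilde A_t$ the leading ratio is at most $D^2$, giving the a.s.\ bound $8d^2D^2+\frac{2d^2D^2\eta^2}{\delta^2}\|\tilde g_{t-1}^a(x_{t-1})\|^2$, which is already $u_t$-free. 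Taking the conditional expectation and adding the $\le dL^2$ two-point contribution yields \eqref{eq:clip2a} with $c=\mathcal O(1)$.

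The main subtlety, and the place I would be most careful, is verifying that the instance-dependent bound (10) transports cleanly from the pure one-point residual scheme (where $\tilde g_{t-1}$ on the RHS is $\tilde g_{t-1}^{(1)}$) to LAZO (where it should be $\tilde g_{t-1}^a$). The derivation only requires the SGD-style relation $x_t=\Pi_{\mathcal X}(x_{t-1}-\eta\tilde g_{t-1})$ together with $\|u_t\|\le 1$, so the substitution of $\tilde g_{t-1}^a$ for $\tilde g_{t-1}^{(1)}$ is legitimate; I would state this explicitly at the start of part (a) so that the recursion $\|\tilde g_{t-1}^a\|^2$ on the RHS of \eqref{eq:clip2a} matches the one appearing in the statement. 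Everything else is a case split and a single application of the symmetric two-point bound from (9).
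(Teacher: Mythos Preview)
Your proposal is correct and matches the paper's approach: decompose on the lazy-query indicator, bound the one-point branch using the threshold condition (algebraically for LAZOb, via the instance-dependent bound \eqref{eq:Qt} for LAZOa with the substitution $\tilde g_{t-1}^{(1)}\to\tilde g_{t-1}^a$ you correctly flag), and bound the two-point branch by dropping the indicator. The only cosmetic difference is that the paper routes the two-point bound through an auxiliary concentration lemma (Lemma~\ref{lm:con}, built on Shamir's Lemma~\ref{sham}) rather than citing the $dL^2$ bound of \eqref{eq.tg_2} directly, but since the paper immediately takes $\mathbb P\le 1$ this amounts to the same thing.
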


In \eqref{eq:clip2}, one can show that by properly choosing $D$ and $\eta$, the second moment bound is ${\cal O}(d)$, which is better than the bound of the one-point residual estimator \eqref{eq.tg_1} and the asymmetric two-point estimator \eqref{eq:2e} in terms of the $d$-dependence. In addition, one can choose $\eta,\delta={\cal O}(1/{\sqrt{T}})$, so comparing with the one-point and one-point residual estimator, the dependence on the term $\delta^{-2}$ will be cancelled out in \eqref{eq:clip2}. Thus, reusing old queries appropriately will 
reduce the variance of the gradient estimator. 

With the above two lemmas and the biased variant of Lemma \ref{lm3}, we can get the following regret bound of LAZO. The complete proof is presented in Appendix.  

\begin{theorem}[LAZO in the convex case]
\label{thm:lazoc}
Under Assumptions \ref{as1}--\ref{as_sym1}, we run LAZO for $T$ iterations with $\eta=\frac{R}{L\sqrt{dT}}$, and $\delta=R\sqrt{\frac{d}{T}}$. 
If we use $D={\cal O}(\frac{L}{\sqrt{d}})<\frac{L}{\sqrt{10}}$ for LAZOa and $D={\cal O}(\sqrt{d}L)$ for LAZOb, the regrets for LAZOa and LAZOb satisfy
\begin{align*}
\!\!\mathbb{E}[{\cal R }_T(\texttt{LAZOa})]= {\cal O}(\sqrt{dT}),~~~~~ \mathbb{E}[{\cal R }_T(\texttt{LAZOb})]= {\cal O}(\sqrt{dT}).
\end{align*}
\end{theorem}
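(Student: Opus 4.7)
The starting point is the biased SGD analysis of Lemma \ref{lm3}, which I would generalize to accommodate the bias $b_t$ (or $\tilde b_t$) of the LAZO estimator identified in Lemma \ref{lm:bt-bias}. I would start from the projection inequality $\|x_{t+1}-x^*\|^2 \leq \|x_t-x^*\|^2 - 2\eta\langle \tilde g_t(x_t),\, x_t-x^*\rangle + \eta^2\|\tilde g_t(x_t)\|^2$, take conditional expectations over $u_t$ given $x_t$, and substitute $\mathbb{E}[\tilde g_t(x_t)\mid x_t] = \nabla f_{\delta,t}(x_t) + \mathbb{E}[b_t\mid x_t]$. Convexity of $f_{\delta,t}$ yields $\langle \nabla f_{\delta,t}(x_t),\, x_t-x^*\rangle \geq f_{\delta,t}(x_t)-f_{\delta,t}(x^*)$, while the bias inner product is handled by Cauchy--Schwarz and Assumption \ref{as2} via $|\langle \mathbb{E}[b_t\mid x_t],\, x_t-x^*\rangle| \leq 2R\,\mathbb{E}[\|b_t\|\mid x_t]$. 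Using $|f_{\delta,t}(x)-f_t(x)|\leq L\delta$ to swap the smoothed function for the original, telescoping, and summing over $t$ would give
\[
\mathbb{E}[{\cal R}_T]\leq \frac{\|x_0-x^*\|^2}{2\eta} + 2L\delta T + 2R\sum_{t=0}^{T}\mathbb{E}[\|b_t\|] + \frac{\eta}{2}\sum_{t=0}^{T}\mathbb{E}[\|\tilde g_t(x_t)\|^2].
\]

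\textbf{Plugging in the bounds.} With this template, the remaining work is to control the two sums using Lemmas \ref{lm:bt-bias} and \ref{clip2bound}. For LAZOb the estimate \eqref{eq:clip2b} is non-recursive; plugging $\eta = R/(L\sqrt{dT})$, $\delta = R\sqrt{d/T}$, and $D=O(\sqrt d\, L)$ shows $D^2\eta^2 d^2 L^2/\delta^2 = O(dL^2)$, so $\mathbb{E}[\|\tilde g_t^b\|^2]=O(dL^2)$ uniformly in $t$. For LAZOa, the recursion in \eqref{eq:clip2a} has multiplier $2d^2D^2\eta^2/\delta^2 = 2D^2/L^2$, which under $D=O(L/\sqrt d)$ is $O(1/d)<1$; unrolling the geometric recursion then bounds $\mathbb{E}[\|\tilde g_t^a\|^2]$ by a universal constant times $dL^2$, again uniformly in $t$. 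Summing yields $\sum_t\mathbb{E}[\|\tilde g_t\|^2]=O(dTL^2)$, and Lemma \ref{lm:bt-bias} supplies $\sum_t\mathbb{E}[\|b_t\|]=O(\sqrt{dT})$ (and identically for $\tilde b_t$).

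\textbf{Combining.} Inserting these into the aggregated inequality together with $\|x_0-x^*\|^2\leq 4R^2$, all four summands come in at the $\sqrt{dT}$ scale: the initial-distance term is $O(L\sqrt{dT})$, the smoothing term $2L\delta T = O(LR\sqrt{dT})$, the bias sum $2R\cdot O(\sqrt{dT})$, and the gradient-norm sum $(\eta/2)\cdot O(dTL^2) = O(RL\sqrt{dT})$. The same calculation applies verbatim for LAZOa with $\tilde b_t$ and $\tilde A_t^s$ in place of $b_t$ and $A_t^s$, giving $\mathbb{E}[{\cal R}_T(\texttt{LAZOa})]=O(\sqrt{dT})$ and $\mathbb{E}[{\cal R}_T(\texttt{LAZOb})]=O(\sqrt{dT})$.

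\textbf{Main obstacle.} The delicate step is the LAZOa second-moment control, because \eqref{eq:clip2a} is recursive in $\|\tilde g_{t-1}^a\|^2$: the step size, smoothing radius, and threshold must be tuned jointly so that the multiplier on the previous iterate is strictly less than one, and this is precisely what forces $D=O(L/\sqrt d)$ and matches the $D<L/\sqrt{10}$ restriction from Lemma \ref{lm:temp}. A subsidiary subtlety is that $b_t$ inherits the $d/\delta$ factor from $\tilde g_t^{(1)}$, so any attempt to bound the bias in a squared sense would produce $d^2/\delta^2$-scale blow-up; the argument must remain at the first-order level $\|b_t\|\cdot\|x_t-x^*\|$ and rely on the summability $\sum_t\mathbb{E}[\|b_t\|]=O(\sqrt{dT})$ guaranteed by the asymptotic-symmetricity condition in Assumption \ref{as_sym1}.
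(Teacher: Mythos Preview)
Your proposal is correct and follows essentially the same route as the paper: extend the biased-SGD template of Lemma~\ref{lm3} by inserting the bias decomposition from Lemma~\ref{lm:bt-bias}, control the extra inner product by Cauchy--Schwarz and Assumption~\ref{as2}, swap $f_{\delta,t}$ for $f_t$ via Lipschitzness, and then plug in the second-moment bounds of Lemma~\ref{clip2bound} together with the stated choices of $\eta,\delta,D$. The only minor deviation is in resolving the LAZOa recursion \eqref{eq:clip2a}: you take total expectations and unroll the resulting geometric series (contraction factor $2D^2/L^2=O(1/d)$), whereas the paper instead invokes the deterministic sample-path bound $\|\tilde g_{t-1}^a(x_{t-1})\|^2\le d^2L^2$ (their Lemma~\ref{lazoabound}, which is where the condition $D<L/\sqrt{10}$ originates) to break the recursion in one step; both yield the same $O(dL^2)$ uniform bound.
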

\vspace{-0.2cm}
From Theorem \ref{thm:lazoc}, we know that thanks to the $\delta$-independent variance in \eqref{eq:clip2}, both LAZOa and LAZOb can achieve ${\cal O}(\sqrt{dT})$ regret, which improves the ${\cal O}(T^{3/4})$, ${\cal O}(d\sqrt{T})$ regret bounds of one-point (residual) methods in \cite{flaxman2004online,zhang2020boosting}, the asymmetric two-point gradient estimator in \cite{nesterov2017random}, respectively, and achieves the optimal ${\cal O}(\sqrt{dT})$ regret bound for the symmetric two-point ZO method in \cite{shamir2017optimal}.

\vspace{-0.3cm}
\subsection{Results in nonconvex stochastic optimization}
LAZO can be also applied to the stochastic optimization setting. Since the convex stochastic case is a special case of the OCO setting, we defer the results in Section \ref{nonconvex-proof}. 
Due to the popularity of nonconvex learning applications, we present the results of the nonconvex setting. 

Consider the function $F(x; \xi)$ that depends on the random variable $\xi$ and the stochastic problem $\min _{x \in \mathbb{R}^d}\, f(x)\triangleq\EE_\xi[F(x; \xi)]$.
% \begin{equation}
%     \min _{x \in \mathcal{X}}~~ f(x)\triangleq\EE_\xi[F(x; \xi)].
% \end{equation}
%where only noisy function evaluation $F(x; \xi)$ with random data sample $\xi$ is available. 
In this case, instead of minimizing the regret \eqref{eq.regret}, the goal is to minimize the average gradient norm as ${\cal R}_T^{\rm nc}({\cal A})\triangleq \sum_{t=0}^T\mathbb{E}\left[\|\nabla f(x_t)\|^2\right]$.
%\TC{Shall we change the optimality metrics in Theorems 4 and 5 uniformly to (18)?}
% \begin{equation}\label{eq.regretnvx}
% {\cal R}_T^{\rm nc}({\cal A})\triangleq \sum_{t=1}^T\mathbb{E}\left[\|\nabla f(x_t)\|^2\right].
% \end{equation}

Regarding algorithms, we can still implement LAZO in Algorithm \ref{alg:LAZO} by replacing $f_t(x)=F(x,\xi_t)$ and leave out the projection step in the projected ZO gradient descent since the feasible set is $\mathbb{R}^d$. We make the following assumptions in addition to Assumption \ref{as1} in the OCO setting.
\begin{assumption}
\label{as8}
Assume that $f(x)$ is $\mu$-smooth, i.e. $\forall x,y\in\mathcal{X}, \|\nabla f(x)-\nabla f(y)\|\leq \mu\|x-y\|$. 
\end{assumption}
%Denote $A_t\triangleq\{u||F(w_t,\xi_t)-F(w_{t-1},\xi_{t-1})|\leq \frac{C\delta}{d}\}$ as the clipping area. \TC{Maybe use Assumption 4-5 not introducing new assumptions.}
%\begin{assumption}
%\label{as9}
%Let $A_t^s=\sup\{A|A\subset A_t \text{ and } \mathbf{1}_{A|x_t}(u)=\mathbf{1}_{A|x_t}(-u).\}$, where $\mathbf{1}$ is the indicator function and $A|x_t$ means $A$ conditioned on $x_t$. Assume that $\sum_{t=0}^{T}\mathbb{P}(A_t\backslash A_t^s)\sim{\cal O}(\sqrt{T})$. 
%\end{assumption}

\begin{theorem}[LAZO in the nonconvex case]\label{them_ncvx}
Under Assumptions \ref{as1}, \ref{as_sym1}, \ref{as8}, we run LAZO for $T$ iterations with $\eta=\frac{1}{L\sqrt{dT}}$, $\delta=\sqrt{\frac{d}{T}}$ and use $D={\cal O}(\frac{L}{\sqrt{d}})<\frac{L}{\sqrt{10}}$ for LAZOa and $D={\cal O}(\sqrt{d}L)$ for LAZOb.
The regrets for LAZOa and LAZOb satisfy 
\begin{align*}
\mathbb{E}[{\cal R }_T^{\rm nc}(\texttt{LAZOa})]\leq {\cal O}(\sqrt{dT}), \mathbb{E}[{\cal R }_T^{\rm nc}(\texttt{LAZOb})] \leq {\cal O}(\sqrt{dT}).
\end{align*}
\end{theorem}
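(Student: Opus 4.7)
\textbf{Proof proposal for Theorem \ref{them_ncvx}.}

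My plan is to follow the standard descent-lemma template for smooth nonconvex SGD, but with two nonstandard ingredients: the biased smoothed-gradient oracle characterized in Lemma \ref{lm:bt-bias}, and the recursive second-moment bound of Lemma \ref{clip2bound}. First, since $f$ is $\mu$-smooth (Assumption \ref{as8}) and $x_{t+1}=x_t-\eta\tilde g_t(x_t)$, the descent lemma gives
\begin{equation*}
f(x_{t+1})\le f(x_t)-\eta\langle\nabla f(x_t),\tilde g_t(x_t)\rangle+\tfrac{\mu\eta^2}{2}\|\tilde g_t(x_t)\|^2.
\end{equation*}
Taking conditional expectation and invoking Lemma \ref{lm:bt-bias}, the cross term becomes $-\eta\langle\nabla f(x_t),\nabla f_{\delta,t}(x_t)\rangle-\eta\langle\nabla f(x_t),\mathbb{E}[b_t|x_t]\rangle$ (with $\tilde b_t$ in place of $b_t$ for LAZOa). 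The first piece I would split as $\|\nabla f(x_t)\|^2+\langle\nabla f(x_t),\nabla f_{\delta,t}(x_t)-\nabla f(x_t)\rangle$, and use the standard smoothing estimate $\|\nabla f_{\delta,t}(x)-\nabla f(x)\|\le \mu\delta$ (which follows from $\mu$-smoothness and the definition of $f_{\delta,t}$) together with Young's inequality to absorb it into $\tfrac14\|\nabla f(x_t)\|^2+\mathcal{O}(\mu^2\delta^2)$. The bias piece I would bound by $\tfrac14\|\nabla f(x_t)\|^2+\eta\|\mathbb{E}[b_t|x_t]\|^2/\text{const}$, or more usefully as $L\cdot\|\mathbb{E}[b_t|x_t]\|$ after noting $\|\nabla f(x_t)\|\le L$.

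Second, I would sum the resulting inequality over $t=0,\dots,T$ and telescope the $f(x_t)$ differences into $f(x_0)-\min f$, which is $\mathcal{O}(1)$. This yields
\begin{equation*}
\tfrac{\eta}{2}\sum_{t=0}^T\mathbb{E}\|\nabla f(x_t)\|^2 \le \mathcal{O}(1)+\mathcal{O}\!\bigl(\eta\mu^2\delta^2 T\bigr)+\eta L\sum_t\mathbb{E}\|b_t\|+\tfrac{\mu\eta^2}{2}\sum_t\mathbb{E}\|\tilde g_t(x_t)\|^2.
\end{equation*}
From Lemma \ref{lm:bt-bias}, $\sum_t\mathbb{E}\|b_t\|=\mathcal{O}(\sqrt{dT})$ (and similarly for $\tilde b_t$). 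It remains to bound $\sum_t\mathbb{E}\|\tilde g_t(x_t)\|^2$ uniformly.

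Third, I would handle the second-moment bound using Lemma \ref{clip2bound}. For LAZOb the bound \eqref{eq:clip2b} is already non-recursive; plugging in $D=\mathcal{O}(\sqrt d L)$, $\eta=1/(L\sqrt{dT})$, $\delta=\sqrt{d/T}$ gives $\frac{D^2\eta^2 d^2L^2}{\delta^2}=\mathcal{O}(dL^2)$, so $\mathbb{E}\|\tilde g_t^b(x_t)\|^2=\mathcal{O}(dL^2)$. For LAZOa the bound \eqref{eq:clip2a} is recursive with multiplier $\tfrac{2d^2D^2\eta^2}{\delta^2}$; with $D=\mathcal{O}(L/\sqrt d)$ this multiplier is $\mathcal{O}(1/d)<1$, so unrolling the recursion produces a geometric series whose fixed point is $\mathcal{O}(dL^2+d^2D^2)=\mathcal{O}(dL^2)$. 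Hence $\sum_t\mathbb{E}\|\tilde g_t(x_t)\|^2\le\mathcal{O}(dL^2T)$ in both cases, and the last term contributes $\tfrac{\mu\eta^2}{2}\cdot dL^2T=\mathcal{O}(\mu)$. Plugging in $\eta=1/(L\sqrt{dT})$ and $\delta=\sqrt{d/T}$, the RHS is $\mathcal{O}(1)$, so $\sum_t\mathbb{E}\|\nabla f(x_t)\|^2\le\mathcal{O}(1/\eta)=\mathcal{O}(L\sqrt{dT})$, as claimed.

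The main obstacle will be the recursion in \eqref{eq:clip2a} for LAZOa: I must verify that the contraction factor $\tfrac{2d^2D^2\eta^2}{\delta^2}$ is strictly less than $1$ under the prescribed choice of $D,\eta,\delta$, and that the resulting bound is uniform in $t$ (requiring an initialization argument at $t=0$, presumably using a two-point query). A secondary subtlety is the coupling between the bias term $b_t$ and $\nabla f(x_t)$; bounding it crudely by $L\|\mathbb{E}[b_t|x_t]\|$ is safe but loses a factor, so I would check whether a Cauchy–Schwarz split $\tfrac14\|\nabla f(x_t)\|^2+\|\mathbb{E}[b_t|x_t]\|^2$ combined with the Lemma \ref{lm:bt-bias} $L^1$ bound (promoted via $\|b_t\|\le\mathcal{O}(dL/\delta)$ a.s.) still yields $\mathcal{O}(\sqrt{dT})$ overall. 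Neither obstacle appears fundamental, since the OCO-case proof of Theorem \ref{thm:lazoc} must address exactly the same two issues.
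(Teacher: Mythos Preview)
Your proposal is correct and follows essentially the same template as the paper's proof: descent lemma from $\mu$-smoothness, Lemma \ref{lm:bt-bias} for the bias split, the $\|\nabla f(x_t)\|\le L$ crude bound on the bias inner product, the smoothing-gradient error, and Lemma \ref{clip2bound} for the second moment. The only noteworthy difference is in closing the LAZOa recursion: you propose unrolling \eqref{eq:clip2a} as a contraction (which works, since $2d^2D^2\eta^2/\delta^2=2D^2/L^2<1/5$), whereas the paper instead invokes the deterministic a.s.\ bound $\|\tilde g_{t-1}^a(x_{t-1})\|^2\le d^2L^2$ of Lemma \ref{lazoabound} to terminate the recursion in one step---slightly cleaner, and it also removes your flagged ``main obstacle'' entirely.
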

\vspace{-0.2cm}
Theorem \ref{them_ncvx} shows that to achieve $ \mathbb{E}\left[{\cal R }_T^{\rm nc}(\texttt{LAZO})\right]/T\leq\epsilon$, LAZOa and LAZOb require ${\cal O}(d\epsilon^{-2})$ iterations, and the iteration complexity is the same as that of two-point ZO SGD method in \cite{ghadimi2013stochastic}.

\vspace{-0.2cm}
\section{Extension to Multi-point LAZO  Rules}
\vspace{-0.1cm}
To further reduce the variance of gradient estimation, we can construct a $2K$-point LAZO gradient estimator with $K>1$, akin to \cite{shamir2017optimal,agarwal2010optimal}. In this case, we can apply our idea of lazy queries to existing multi-point ZO methods by expanding the reusing horizon from one round to $H$ rounds with $H>1$ to benefit the query complexity. 
For $\tau=1,\cdots H$, we can extend the definition of temporal variation in Definition \ref{df:temp} to $D_{t,\tau}^{a}(x,y)\triangleq |f_t(x)-f_{t-\tau}(y)|/\|x-y\|$ and $D_{t,\tau}^{b}(x,y)\triangleq |f_t(x)-f_{t-\tau}(y)|/(\eta L)$. 
Then we can check the informativeness of the $K$ queries at previous $H$ iterations based on the generalized notion of temporal variations, and query new points after reusing all appropriate old points to form $2K$-point estimator. Due to space limitation, we present the full details of this multi-point LAZO variant and its pseudo-code in Section \ref{exten_app} in Appendix.

% \begin{figure*}[htb]
%     \centering
%     \begin{tabular}{cc}
%     \includegraphics[width=.3\textwidth]{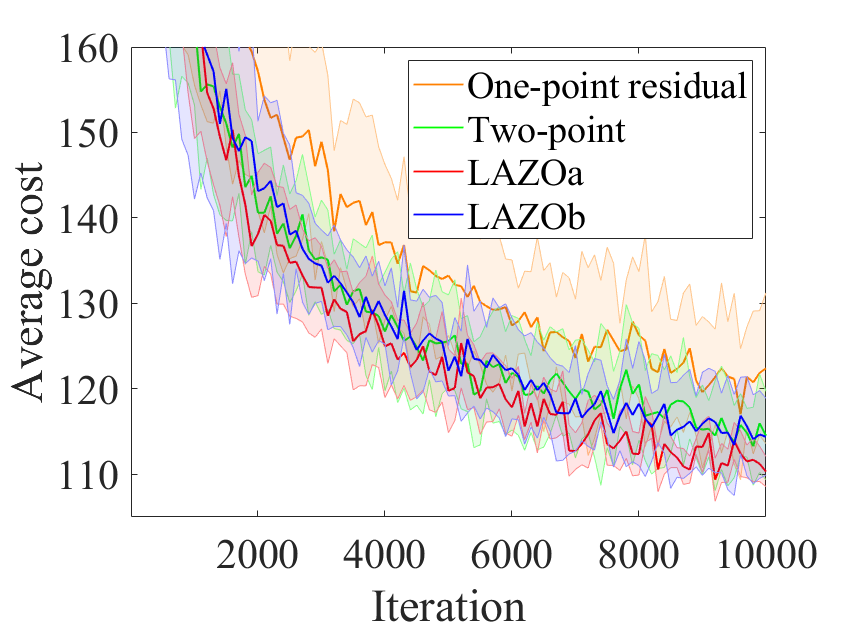}&
%     \includegraphics[width=.3\textwidth]{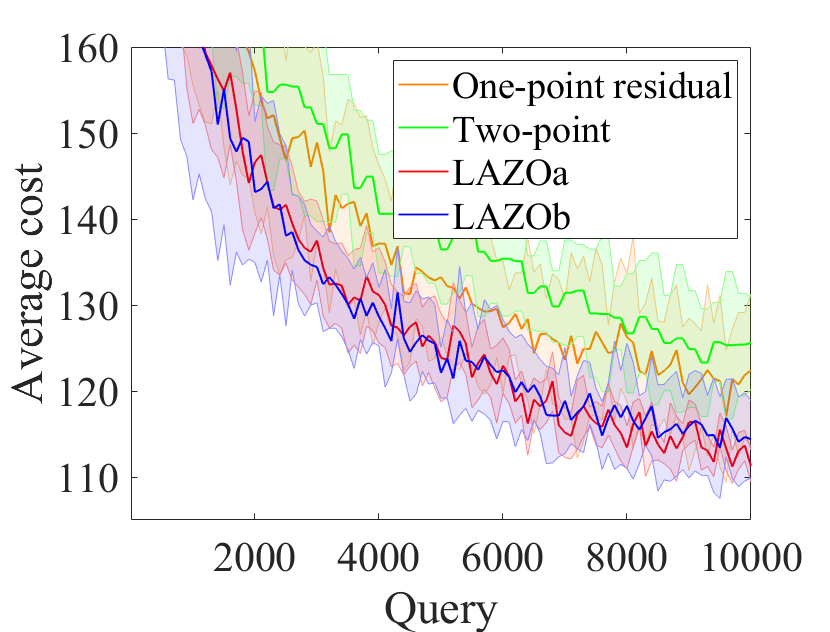}\\
%     \footnotesize(a) Average cost v.s. iteration &\footnotesize(b) Average cost v.s. query\\
%     \end{tabular}
%     \vspace{-0.2cm}
%     \caption{Comparative results of ZO with the one-point residual feedback (\orange{orange}), the two-point oracle (\green{green}), as well as proposed LAZOa (\red{red}) and LAZOb (\blue{blue}) on resource allocation.}    \label{fig:result_RA}
%     \vspace{-0.4cm}
% \end{figure*}

\section{Numerical Experiments}\label{experiment}
\vspace{-0.1cm}

In this section, we empirically evaluate the performance of our LAZO and its multi-point variant on three applications: LQR control, resource allocation and generation of adversarial examples from a black-box deep neural network (DNN). Throughout this section, we compare LAZO with one-point residual algorithm \cite{zhang2020boosting} and two-point ZO gradient descent \cite{shamir2017optimal}. 
The detailed setup and the choice of parameters are in Section \ref{experiment_detail}. 

\begin{figure*}[htb]
    \setlength{\tabcolsep}{-0.07cm}
    \centering
    \vspace{-0.2cm}
    \begin{tabular}{ccc}
    \includegraphics[width=.34\textwidth]{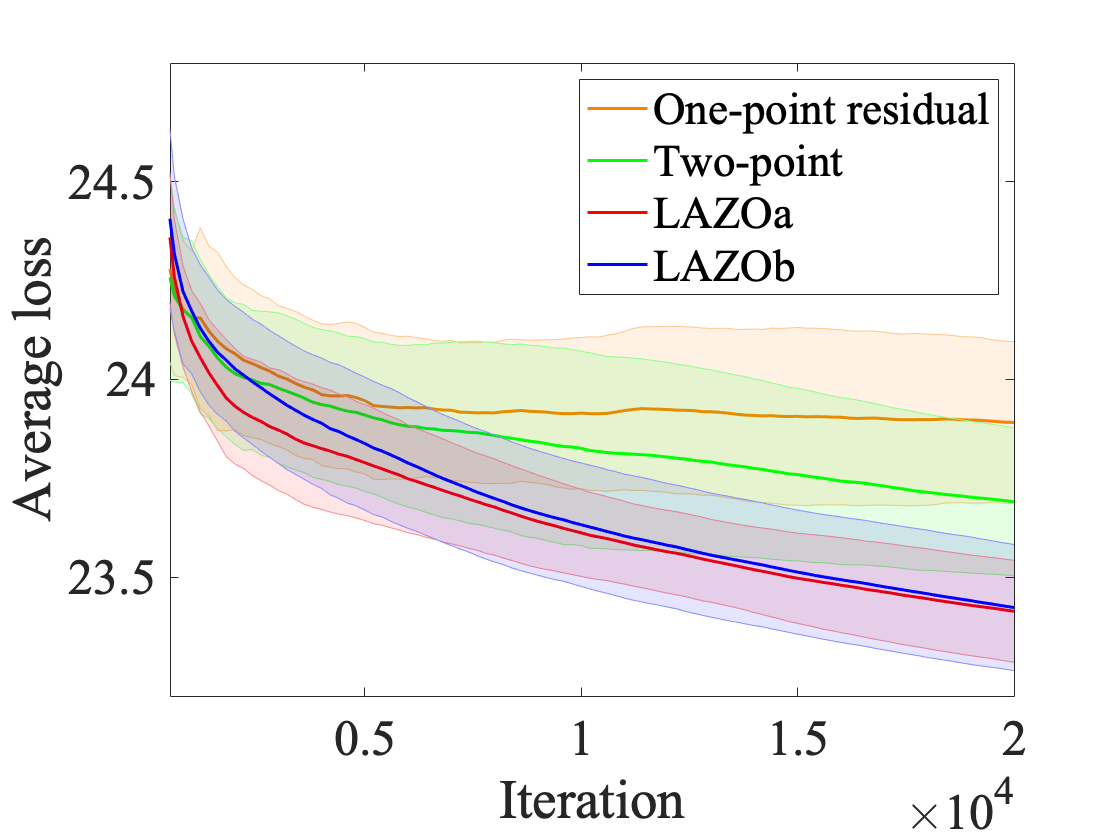}&
    \includegraphics[width=.34\textwidth]{RA_iteration.png}&\includegraphics[width=.34\textwidth]{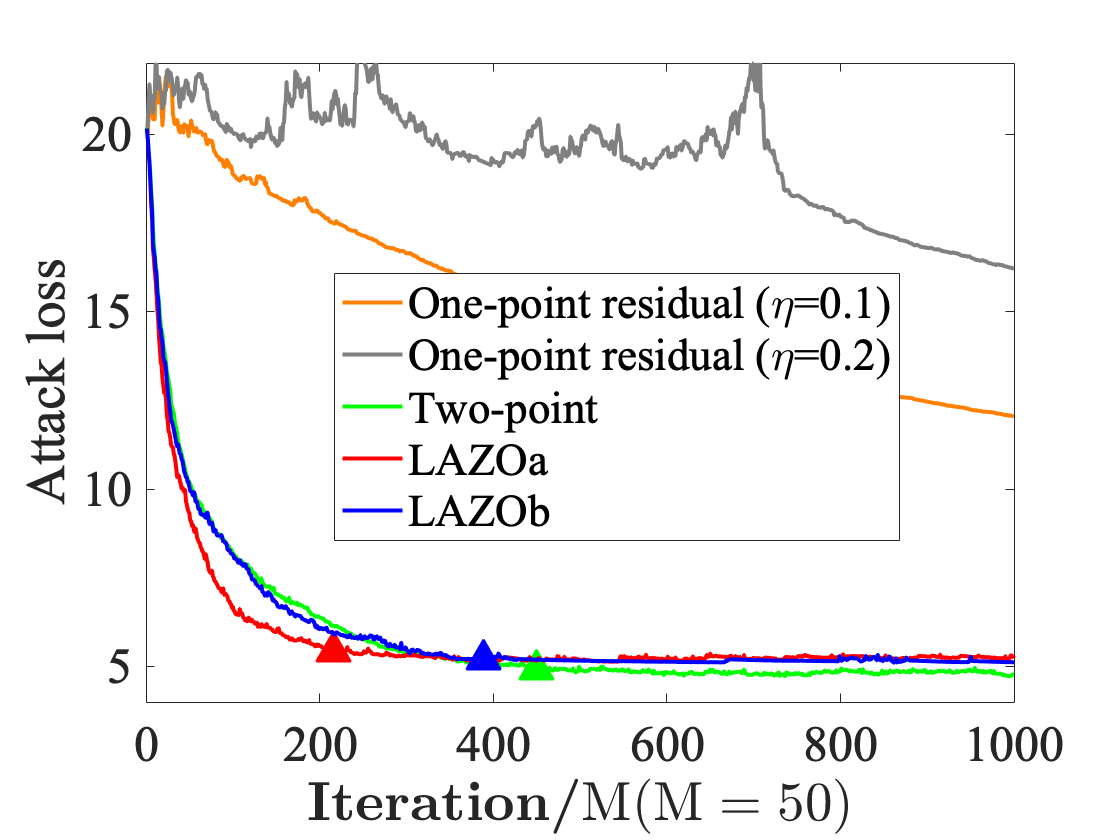}\\
    \footnotesize(a) Average loss v.s. iteration &\footnotesize(b) Average cost v.s. iteration&\footnotesize(c) Attack loss v.s. iteration\\
    \includegraphics[width=.34\textwidth]{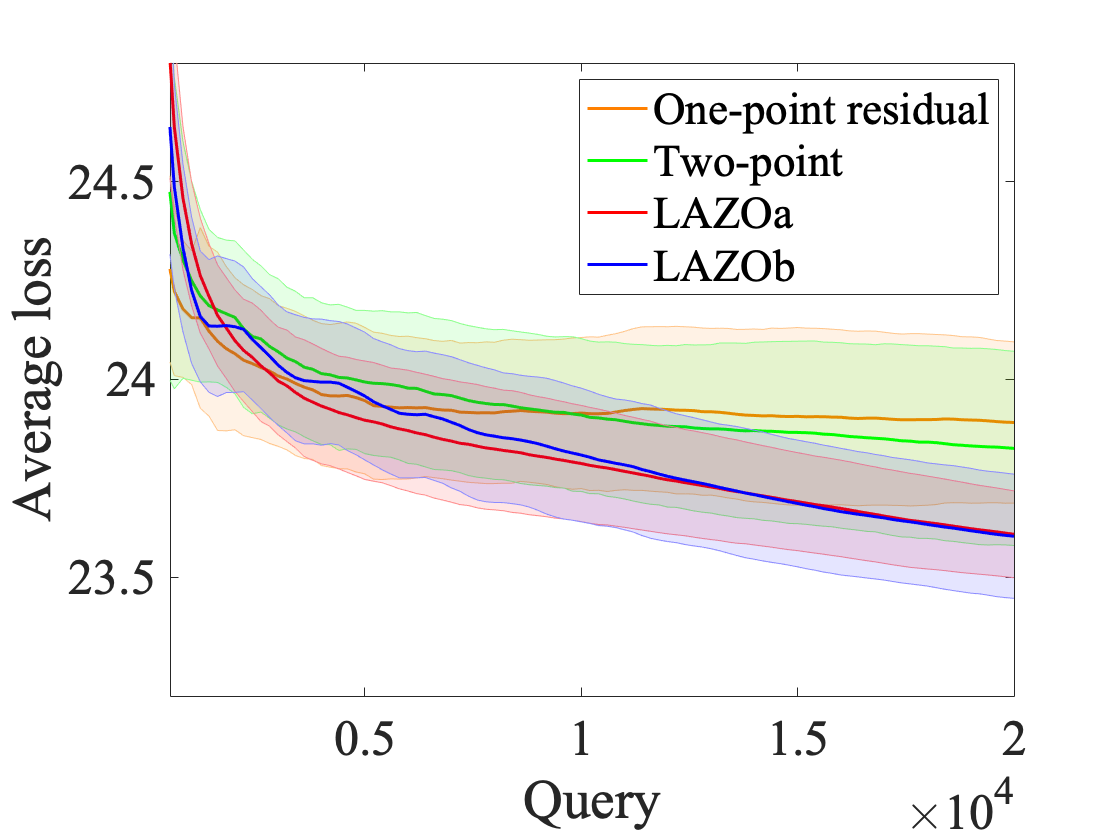}&
    \includegraphics[width=.34\textwidth]{RA_query.png}&\includegraphics[width=.34\textwidth]{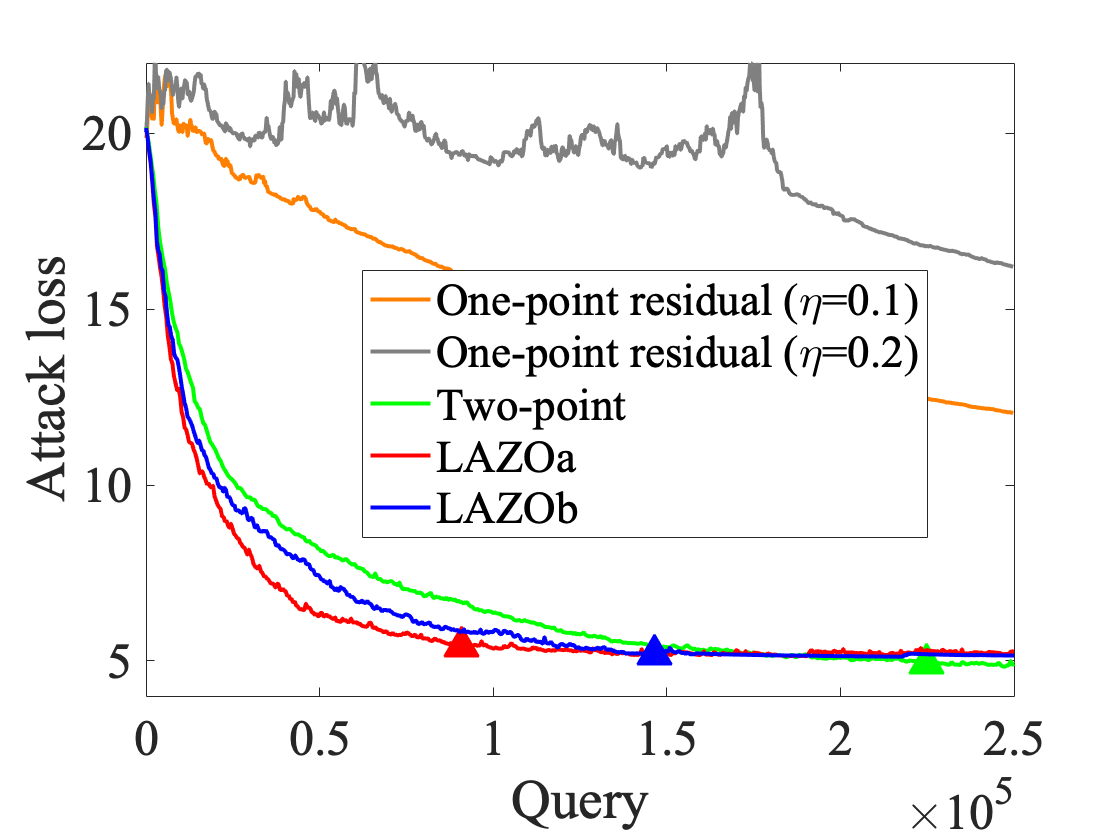}\\
    \footnotesize(d) Average loss v.s. query &\footnotesize(e) Average cost v.s. query&\footnotesize(f) Attack loss v.s. query\\
    \end{tabular}
%   \vspace{-0.2cm}
    \caption{Comparative results of ZO with the one-point residual feedback (\orange{orange}), the two-point oracle (\green{green}), as well as proposed LAZOa (\red{red}) and LAZOb (\blue{blue}) on LQR (a, d), resource allocation (b, e) and black box attack (c, f). The solid line shows the average loss/cost over $10$ independent trials with random initialization, and the shaded region denotes the standard deviation of results over random trials. The triangle points indicate iteration and query numbers of the first successful attack.}
    \label{fig:result_LQR}
    \vspace{-0.3cm}
\end{figure*}

% \begin{figure*}[htb]
%     \centering
%     \begin{tabular}{cc}
%     % \includegraphics[width=.45\textwidth]{AttackLoss.png}&
%     % \includegraphics[width=.45\textwidth]{AttackLoss_query.png}\\
%     \includegraphics[width=.3\textwidth]{AttackLoss.png}&
%     \includegraphics[width=.3\textwidth]{AttackLoss_query.png}\\
%     \footnotesize(a) Attack loss v.s. iteration &\footnotesize(b) Attack loss v.s. query\\
%     \end{tabular}
%     \vspace{-0.2cm}
%     \caption{Comparative results of ZO with the one-point residual feedback ($\eta=0.1/0.2$) (\orange{orange}/\textcolor{mygrey}{grey}), the two-point oracle (\green{green}), and proposed LAZOa (\red{red}) and LAZOb (\blue{blue}) on the black-box attack task. The solid points indicate the first successful attack. }
%     \label{fig:result_DNN}
%     \vspace{-0.4cm}
% \end{figure*}

\begin{figure*}[htb]
    \centering
    \begin{tabular}{cc}
    \includegraphics[width=.34\textwidth]{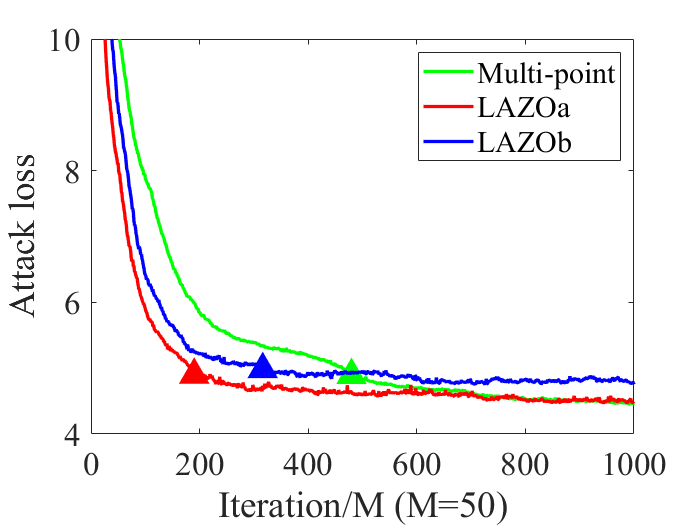}&
    \includegraphics[width=.34\textwidth]{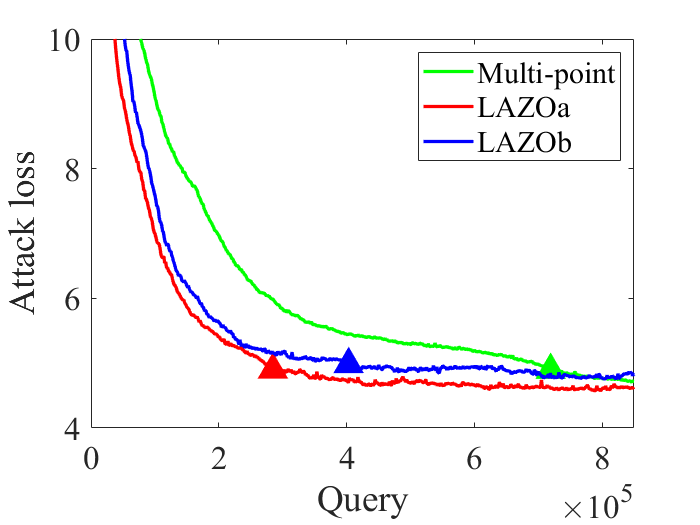}\\
    \footnotesize(a) Attack loss v.s. iteration &\footnotesize(b) Attack loss v.s. query\\
    \end{tabular}
    % \vspace{-0.2cm}
    \caption{Comparative results of ZO with the multi-point oracle (\green{green}), as well as proposed LAZOa (\red{red}) and LAZOb (\blue{blue}) on the black-box attack task. The triangle points indicate iteration and query numbers of the first successful attack. }
    \label{fig:result_DNN_33}
    \vspace{-0.2cm}
\end{figure*}

\vspace{-0.2cm}
\subsection{Non-stationary LQR control}\label{lqr_ex}
\vspace{-0.1cm}
We study a non-stationary version of the LQR  problem \cite{fazel2018global} with the time-varying dynamics. At iteration $t$, consider the linear dynamic system described by the dynamic $x_{k+1}=A_tx_k+B_tq_k$, where $x_k\in \mathbb{R}^n$ is the state, $q_k\in\mathbb{R}^p$ is the control variable at step $k$, $A_t\in \mathbb{R}^{n\times n}$ and $B_t\in \mathbb{R}^{n\times p}$ are the dynamic matrices for iteration $t$. We optimize the control $q_k=K x_k$ that linearly depends on the current state $x_k$, where $K \in \mathbb{R}^{p\times n}$ is the policy. The loss function will be 
\begin{align*}
\small
\min_K &~~ f_t(K)=\mathbb{E}\left[\frac{1}{H} \sum_{k=1}^{H} \beta^k x_{k}^{\top} (Q+K^{\top} RK) x_{k}\right],~~\text {s.t. } ~~ x_{k+1}=(A_t+B_t K)x_k. 
\end{align*}
%The goal is to track the time-varying optimal policy parameter $K_t$ to minimize the static regret. 

\vspace{-0.3cm}
We set $n=p=6$, $\beta=0.5$, $\delta=0.01$, and stepsize $\eta=10^{-5}$ for all  methods. We generate $A_t$, $B_t$ to mimic the situations where the losses encounter intermittent changes.

\begin{wrapfigure}{r}{2.45in}
\def\epsfsize#1#2{0.5#1}
\vspace{-0.2cm}
%\centerline{\epsffile{trigger.png}}
    \includegraphics[width=.46\textwidth]{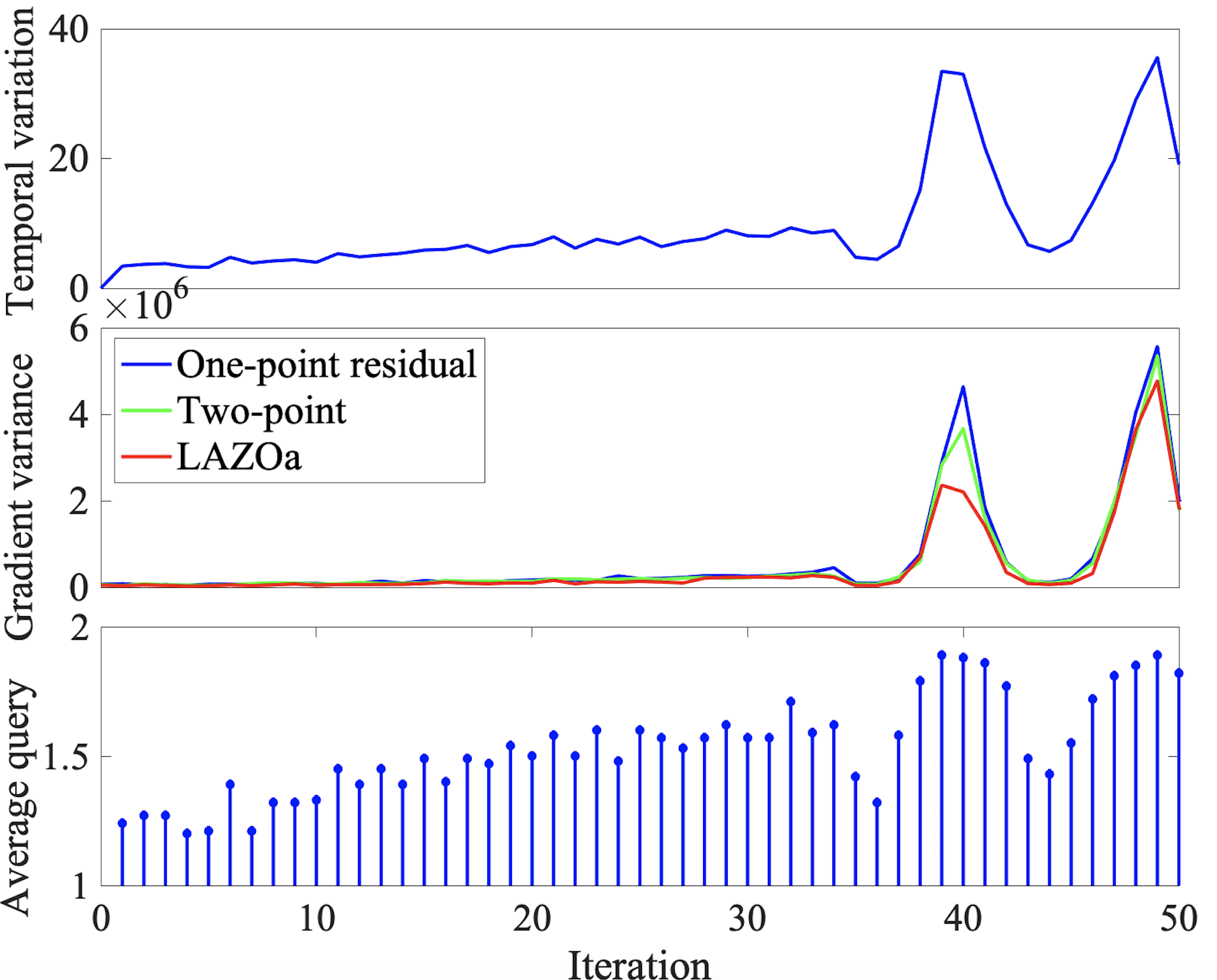}
        \vspace{-0.6cm}
     \caption{Monitoring the adaptive query condition in LQR at first $50$ iterations. The upmost plot records the average temporal variation over 100 trials. The gradient estimation variance for one-point residual method (\blue{blue}), two-point method (\green{green}) and LAZOa (\red{red}) is shown in the middle plot. The lowermost plot shows 
     the average queries in LAZOa. }
     \label{fig:zo-monitor}
\vspace{-0.2cm}
\end{wrapfigure}
We monitor the average temporal variation, gradient variance and average queries per iteration over $50$ iterations and $100$ trials in Figure \ref{fig:zo-monitor} for LAZOa. We observe that when the loss function varies slowly (e.g., $t\leq 35$), the temporal variation is also small and thus, the average query for LAZOa is relatively small; when the loss function changes rapidly (e.g., $t\geq 35$), the temporal variation is also large and as a result, LAZOa needs more average queries. Note that the actual upper bound of two-point ZO's variance depends on $L_t$ and the way we generate $A_t,B_t$ will affect not only the temporal variation but also $L_t$, resulting in the change of variance of two-point ZO. Figure \ref{fig:zo-monitor} also indicates that thanks to the lazy query, the variance of the LAZOa gradient estimator keeps the lowest. In Figure \ref{fig:result_LQR}, we report the cost versus iteration and query of the four methods. Here we choose $D=1$ for LAZOa and $D=100/L$ for LAZOb to optimize the performance for them. 
In Figure \ref{fig:result_LQR}(a), LAZOa and LAZOb yield the best convergence performance and LAZOa has the smallest errorbar over random trials. 
%We also count that LAZO uses old information in $5249$ of $10000$ iterations. 
Regarding query complexity in Figure \ref{fig:result_LQR}(d), 
%since one iteration of one-point residual requires one function evaluation and two function queries for two-point ZO, the line for two-point ZO in \ref{fig:result_LQR}(b) flattens out. 
LAZO still outperforms the other two methods. Besides, we provide the runtime comparison for one-point residual method, two-point method and LAZOa/b in Section \ref{comp}, which implies the adaptive rules of LAZOa/b will not bring huge computational overhead and are even slightly time-saving compared to the two-point ZO method.

\vspace{-0.2cm}
\subsection{Non-stationary resource allocation}
\vspace{-0.2cm}

We consider a resource allocation problem with $16$ agents connected by a ring graph. 
Figure \ref{fig:result_LQR}(b) and (e) present the cumulative cost versus iteration and query. LAZOa outperforms the other methods in terms of iteration while both LAZOa and LAZOb improve the query complexity. Besides, we can see that both two LAZO variants have narrower errorbar compared with the other methods, and LAZOa is more stable than LAZOb over different trials. 
%\TC{Add more discussions?}

\vspace{-0.3cm}
\subsection{Black-box adversarial attacks}\label{black-attack} 
\vspace{-0.2cm}

In the nonconvex stochastic setting, we study generating adversarial examples from an image classifier given by a black-box DNN on the MNIST dataset \cite{mnistdata}. Simulation settings are in Section \ref{experiment_detail}. In the experiment, since the one-point residual method is unstable, we choose a smaller stepsize $\eta=0.1$ and bigger $\delta=0.5$ for it to ensure stability. To show $\eta=0.1$ is a reasonable choice for one-point residual method, we also report the result when $\eta=0.2$.
For the other three methods, we pick the optimal $\eta\in \{0.1, 1,2,\cdots,10\}$ and $\delta\in\{0.5,0.1,0.05,0.01\}$. Besides, we set $D=1$ for LAZOa and $D=1000/L$ for LAZOb. 
Figure \ref{fig:result_LQR}(c) and (f) show the attacking loss versus iteration and query for the four methods. LAZOa and LAZOb outperform one-point residual and two-point ZO methods in terms of both iteration and query. In addition, compared to two-point ZO methods, LAZOa and LAZOb only requires nearly $40\%$ and $67\%$ queries to achieve the first successful attack, which is the first iteration or query number when the $\ell_2$ distortion loss begins to drop \cite{chen2017zoo,liu2018signsgd}, respectively. 

We also compare multi-point LAZO with multi-point ZO method \cite{shamir2017optimal} and the comparison results for $H=3,K=3$ setting are shown in Figure \ref{fig:result_DNN_33}. In Figure \ref{fig:result_DNN_33}, both multi-point LAZOa and multi-point LAZOb outperform multi-point ZO in terms of iteration complexity and query complexity. Moreover, multi-point LAZOa and LAZOb requires nearly $38\%$ and $55\%$ queries to achieve the first successful attack, respectively, which further improve the results for $H=1,K=1$ in Figure \ref{fig:result_LQR}(c) and (f).

\section{Conclusions}\label{conclusion}

This paper proposes a novel ZO gradient estimation method based on a lazy query condition. Different from the classic ZO methods, LAZO monitors the informativeness of old queries, and then adaptively reuses them to construct the low-variance stochastic gradient estimates. We rigorously establish that through judiciously reusing the old queries, LAZO not only saves queries but also achieves the regrets of the symmetric two-point ZO methods \cite{shamir2017optimal}. 
Future research includes: i) extending LAZO to the decentralized setting and ii) extending LAZO to various ZO variants.

\section*{Acknowledgments}
This work was supported by National Science Foundation CAREER Award 2047177, and the Rensselaer-IBM AI Research Collaboration (\url{http://airc.rpi.edu}), part of the IBM AI Horizons Network (\url{http://ibm.biz/AIHorizons}).

\bibliography{lazo_aistats}
\bibliographystyle{plain}

\clearpage

\clearpage
\onecolumn
\begin{center}
{\large \bf Supplementary Material for\\
``\FullTitle"}
\end{center}
\appendix

\vspace{-1.5cm}
%\tableofcontents
\addcontentsline{toc}{section}{} % Add the appendix text to the document TOC
\part{} % Start the appendix part
\parttoc % Insert the appendix TOC

\vspace{0.5cm}

\section{Proofs of the results in online convex optimization}\label{oco_proof}

\subsection{Proof of Lemma \ref{lm3}}

Lemma \ref{lm3} is a standard result of biased SGD. To be self-contained, we provide its proof here. 

\begin{proof}

Since $f_{\delta,t}$ is convex for all $t$, we have that
\begin{equation}
\label{eq:32}
f_{\delta, t}\left(x_{t}\right)-f_{\delta, t}(x) \leq\left\langle\nabla f_{\delta, t}\left(x_{t}\right), x_{t}-x\right\rangle, \text { for all } x \in \mathcal{X}.
\end{equation}
Using $\mathbb{E}\left[\tilde{g}_t(x_t)|x_t\right]=\nabla f_{\delta,t}(x_t)$ in \eqref{eq.unbias} and taking expectation over both sides, we get for all $x\in\mathcal{X}$,
\begin{align}
\mathbb{E}\left[f_{\delta, t}\left(x_{t}\right)-f_{\delta, t}(x)\right] &\leq\mathbb{E}\left[\left\langle\mathbb{E}\left[\tilde{g}_t(x_t)|x_t\right], x_{t}-x\right\rangle\right]\nonumber\\
&=\mathbb{E}\left[\mathbb{E}\left[\left\langle\tilde{g}_t(x_t), x_{t}-x\right\rangle|x_t\right]\right]=\mathbb{E}\left[\left\langle\tilde{g}_t(x_t), x_{t}-x\right\rangle\right].
\label{eq:29}
\end{align}
Since $x_{t+1}=\Pi_{\mathcal{X}}\left[x_{t}-\eta \tilde{g}_t\left(x_{t}\right)\right]$, for any $x\in\mathcal{X}$ we have that
\begin{align*}
 \left\|x_{t+1}-x\right\|^{2} &=\left\|\Pi_{\mathcal{X}}\left[x_{t}-\eta \tilde{g}_t\left(x_{t}\right)\right]-\Pi_{\mathcal{X}}[x]\right\|^{2} \\ & \leq\left\|x_{t}-\eta \tilde{g}_t\left(x_{t}\right)-x\right\|^{2} \\ &=\left\|x_{t}-x\right\|^{2}-2 \eta\left\langle\tilde{g}_{t}\left(x_{t}\right), x_{t}-x\right\rangle+\eta^{2}\left\|\tilde{g}_{t}\left(x_{t}\right)\right\|^{2}.  \numberthis
\label{eq:30}
\end{align*}
where the inequality follows the non-expansive property of the projection.

Rearranging the terms in inequality \eqref{eq:30} yields
\begin{equation}
\left\langle\tilde{g}_{t}\left(x_{t}\right), x_{t}-x\right\rangle \leq \frac{1}{2 \eta}\left(\left\|x_{t}-x\right\|^{2}-\left\|x_{t+1}-x\right\|^{2}\right)+\frac{\eta}{2}\left\|\tilde{g}_{t}\left(x_{t}\right)\right\|^{2}.
\label{eq:31}
\end{equation}
Taking expectations on both sides of inequality \eqref{eq:31} with respect to $u_t$, substituting the resulting bound into \eqref{eq:29} and summing from $t=0$ to $T$, we obtain that
\begin{equation}
\label{eq:37}
\mathbb{E}\left[\sum_{t=0}^{T} f_{\delta, t}\left(x_{t}\right)-\sum_{t=0}^{T} f_{\delta,t}(x)\right] \leq \frac{1}{2 \eta}\left\|x_{0}-x\right\|^{2}+\frac{\eta}{2} \mathbb{E}\left[\sum_{t=0}^{T}\left\|\tilde{g}_{t}\left(x_{t}\right)\right\|^{2}\right].
\end{equation}
Since $f_t$ is Lipschitz for all $t$, we conclude that
$$|f_{\delta,t}(x)-f_{t}(x)|=|\mathbb{E}_{v\in \mathbb{B}}[f_t(x+\delta v)-f_t(x)]|
\leq L\mathbb{E}[\|v\|]\leq\delta L.$$

Then we obtain that
\begin{align*}
\label{eq:38}
&\mathbb{E}\left[\sum_{t=0}^{T} f_{t}\left(x_{t}\right)-\sum_{t=0}^{T} f_{t}(x)\right]\\
=&\mathbb{E}\left[\sum_{t=0}^{T} f_{\delta, t}\left(x_{t}\right)-\sum_{t=0}^{T} f_{\delta, t}(x) +\sum_{t=0}^{T}\left(f_{t}(x_{t})-f_{\delta, t}(x_{t})\right)-\sum_{t=0}^{T}\left(f_{t}(x)-f_{\delta, t}(x)\right)\right]\\ \leq & \frac{1}{2 \eta}\left\|x_{0}-x\right\|^{2}+\frac{\eta}{2} \mathbb{E}\left[\sum_{t=0}^{T}\left\|\tilde{g}_{t}\left(x_{t}\right)\right\|^{2}\right]+2 L \delta T.\numberthis
\end{align*}
Then plugging $x^*=\arg\min_{x\in\mathcal{X}} \sum_{t=0}^T f_t(x)$ into \eqref{eq:38} and according to the definition of regret in \eqref{eq.regret}, we can reach the conclusion. 
\end{proof}

\subsection{Derivation of \eqref{eq:Qt}}\label{appendix_a.2}
\begin{proof}
\allowdisplaybreaks
From the definition of $\tilde{g}_{t}^{(1)}(x_{t})$, we have
\begin{align*}
    \|\tilde{g}_{t}^{(1)}(x_{t})\|^{2}&=\frac{d^2}{\delta^2}\left(f_t(w_t)-f_{t-1}(w_{t-1})\right)^2\|u_t\|^2\\
    &\leq \frac{d^2}{\delta^2}\frac{\left(f_t(w_t)-f_{t-1}(w_{t-1})\right)^2}{\|w_t-w_{t-1}\|^2}\|w_t-w_{t-1}\|^2\\
    &\leq \frac{2d^2}{\delta^2}\frac{\left(f_t(w_t)-f_{t-1}(w_{t-1})\right)^2}{\|w_t-w_{t-1}\|^2}(\|x_t-x_{t-1}\|^2+\delta^2\|u_t-u_{t-1}\|^2)\\
    &\leq \frac{2d^2}{\delta^2}\frac{\left(f_t(w_t)-f_{t-1}(w_{t-1})\right)^2}{\|w_t-w_{t-1}\|^2}(\eta^2\|\tilde{g}_{t-1}^{(1)}(x_{t-1})\|^2+4\delta^2)\\
    &=2d^2\frac{\left(f_t(w_t)-f_{t-1}(w_{t-1})\right)^2}{\|w_t-w_{t-1}\|^2}\left(4+\frac{\eta^2}{\delta^2}\|\tilde{g}_{t-1}^{(1)}(x_{t-1})\|^2\right)
    \numberthis
\end{align*}
where the first inequality is because $\|u_t\|=1$, the second inequality comes from the fact that
% $\|w_t-w_{t-1}\|^2=\|x_t-x_{t-1}+\delta(u_t-u_{t-1})\|^2\leq2(\|x_t-x_{t-1}\|^2+\delta^2\|u_t-u_{t-1}\|^2)$,
\begin{align}\label{eq.pf.18-1}
        \|w_t-w_{t-1}\|^2&=\|x_t-x_{t-1}+\delta(u_t-u_{t-1})\|^2\nonumber\\
        &\leq2(\|x_t-x_{t-1}\|^2+\delta^2\|u_t-u_{t-1}\|^2)
\end{align}
while the third inequality follows from the update in \eqref{eq:sgd} and the relation $\|u_t-u_{t-1}\|^2\leq4$. 
% (There is a type-error in main body but it doesn't affect the proof of remaining lemmas.)
\end{proof}

\subsection{Proof of Lemma \ref{lm:temp}}
% We apologize for a typo in Lemma \ref{lm:temp} of the main body of the submission. We restate it in the following. 
\allowdisplaybreaks

\begin{proof}
\allowdisplaybreaks
With the choice of $\eta$ and $\delta$, we have that $\frac{\eta^2}{\delta^2}=\frac{1}{d^2L^2}$. Then using \eqref{eq:Qt}, we obtain that
\begin{align*}
    \|\tilde{g}_{t}^{(1)}(x_{t})\|^{2} &\leq2d^2\frac{|f_t(w_t)-f_{t-1}(w_{t-1})|^2}{\|w_t-w_{t-1}\|^2}\left(4+\frac{\eta^2}{\delta^2}\|\tilde{g}_{t-1}^{(1)}(x_{t-1})\|^2\right)\\
    &< 2d^2\frac{L^2}{10d}\left(4+\frac{1}{d^2L^2}\|\tilde{g}_{t-1}^{(1)}(x_{t-1})\|^2\right)\\
    &\leq \frac{4}{5}dL^2+\frac{1}{5d}\|\tilde{g}_{t-1}^{(1)}(x_{t-1})\|^2\\
    &\leq \frac{4}{5}d L^2+\frac{1}{5}d L^2=d L^2
    \numberthis
\end{align*}
where the second inequality comes from the condition. 
\end{proof}

Next, we prove Lemma \ref{lm:bt-bias}, Lemma \ref{clip2bound} and Theorem \ref{thm:lazoc} by dividing them into Section \ref{A4_laozb} for LAZOb and Section \ref{A5_lazoa} for LAZOa. 

\subsection{LAZOb estimator}\label{A4_laozb}
In this section, we present the proof for LAZOb estimator.

\subsubsection{Bias for LAZO$\mathrm{b}$}
To be self-contained, we restate the LAZOb part of Lemma \ref{lm:bt-bias} as follows.

\begin{lemma}
\label{lm:bt-bias_re}
Under Assumptions \ref{as1}--\ref{as_sym1}, the bias of the LAZOb gradient estimator is 
\begin{equation}
\mathbb{E}\left[\tilde{g}_t^b(x_t)|x_t\right]=\nabla f_{\delta,t}(x_t)+\mathbb{E}\left[b_t\Big|x_t\right]
\end{equation}
where $b_t$ is defined in \eqref{lazoexpect}.
% $$b_t\triangleq \frac{du_t}{2\delta}\left(f_t(w_t)-2f_{t-1}(w_{t-1})+f_t(x_t-\delta u_t)\right)\mathbf{1}_{A_t\backslash A_t^{s}}.$$ $$\tilde{b}_t\triangleq \frac{du_t}{2\delta}\left(f_t(w_t)-2f_{t-1}(w_{t-1})+f_t(x_t-\delta u_t)\right)\mathbf{1}_{\tilde{A}_t\backslash \tilde{A}_t^{s}}.$$
Moreover, if we use $D={\cal O}(\sqrt{d}L)$, then $\sum_{t=0}^T\mathbb{E}[\|b_t\|]={\cal O}(\sqrt{dT})$. 
\end{lemma}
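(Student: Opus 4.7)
The plan is to carry out two independent steps: verify the claimed identity $\mathbb{E}_{u_t}[\tilde{g}_t^{(1)}(x_t)\mathbf{1}_{A_t^s}+\tilde{g}_t^{(2)}(x_t)\mathbf{1}_{\bar{A}_t^s}|x_t] = \nabla f_{\delta,t}(x_t)$ that is already flagged in equation \eqref{lazoexpect}, and then control $\sum_t \mathbb{E}[\|b_t\|]$ via a pointwise bound on $b_t$ combined with Assumption \ref{as_sym1}. Since the decomposition in \eqref{lazoexpect} is already established algebraically, once these two pieces are in place the lemma follows immediately.

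For the unbiasedness of the symmetric part, the key observation is that $A_t^s$ and $\bar{A}_t^s$ partition $\mathbb{S}$ (by definition of $\bar{A}_t^s$ as the complement), and both are symmetric under $u_t \mapsto -u_t$. I will use two elementary symmetry identities. First, for any quantity $c$ not depending on $u_t$ (in particular $f_{t-1}(w_{t-1})$, which is measurable with respect to the history up to time $t-1$), $\mathbb{E}_{u_t}[\frac{du_t}{\delta}\, c\, \mathbf{1}_{A_t^s}|x_t] = 0$, because the integrand is antisymmetric on a symmetric set integrated against the uniform measure on $\mathbb{S}$. Second, the change of variables $u_t \mapsto -u_t$ together with symmetry of $\bar{A}_t^s$ yields $\mathbb{E}_{u_t}[\frac{du_t}{2\delta}(f_t(w_t) - f_t(x_t-\delta u_t))\mathbf{1}_{\bar{A}_t^s}|x_t] = \mathbb{E}_{u_t}[\frac{du_t}{\delta} f_t(w_t)\mathbf{1}_{\bar{A}_t^s}|x_t]$. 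Adding the two contributions collapses to $\mathbb{E}_{u_t}[\frac{du_t}{\delta} f_t(w_t) (\mathbf{1}_{A_t^s}+\mathbf{1}_{\bar{A}_t^s})|x_t] = \mathbb{E}_{u_t}[\frac{du_t}{\delta} f_t(w_t)|x_t]$, which is the standard one-point identity equal to $\nabla f_{\delta,t}(x_t)$.

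For the bias magnitude, I plan to bound $\|b_t\|$ on $A_t\setminus A_t^s$ by the triangle inequality. On $A_t$ the rule $D_t^b(w_t,w_{t-1})\le D$ gives $|f_t(w_t)-f_{t-1}(w_{t-1})|\le D\eta L$, hence $\|\tilde{g}_t^{(1)}(x_t)\|\le dD\eta L/\delta$. The Lipschitz assumption together with $\|u_t\|=1$ yields $\|\tilde{g}_t^{(2)}(x_t)\|\le dL$. Substituting $\eta=R/(L\sqrt{dT})$ and $\delta=R\sqrt{d/T}$ gives $\eta/\delta = 1/(dL)$, so $dD\eta L/\delta = D$, and therefore $\|b_t\|\le (D+dL)\mathbf{1}_{A_t\setminus A_t^s}$. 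Taking expectations and summing,
\begin{equation*}
\sum_{t=0}^T \mathbb{E}[\|b_t\|] \le (D+dL)\sum_{t=0}^T \mathbb{P}(A_t\setminus A_t^s) = (D+dL)\cdot\mathcal{O}(\sqrt{T/d}),
\end{equation*}
by Assumption \ref{as_sym1}. With $D=\mathcal{O}(\sqrt{d}L)$, the factor $D+dL$ is $\mathcal{O}(dL)$, so the total is $\mathcal{O}(L\sqrt{dT}) = \mathcal{O}(\sqrt{dT})$.

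The only delicate point is the first step: checking that the one-point residual term and the two-point symmetric term, when restricted to $A_t^s$ and $\bar{A}_t^s$ respectively, really do combine into the full-sphere expectation of $\frac{du_t}{\delta} f_t(w_t)$. This hinges on the symmetry of $A_t^s$ (by its very definition as the largest symmetric subset of $A_t$) and the fact that $\bar{A}_t^s$ is its complement on the full sphere; once these are used, the remaining manipulations are algebraic. The bound on $\sum_t \mathbb{E}[\|b_t\|]$ is then a straightforward calculation given the stepsize/perturbation scalings and the asymptotic symmetry assumption.
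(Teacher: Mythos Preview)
Your proposal is correct and follows essentially the same approach as the paper's proof: the symmetry argument for the first part (showing that the $A_t^s$ and $\bar{A}_t^s$ contributions collapse to the full-sphere one-point identity $\mathbb{E}_{u_t}[\frac{du_t}{\delta}f_t(w_t)|x_t]=\nabla f_{\delta,t}(x_t)$) and the triangle-inequality bound $\|b_t\|\le(\frac{dD\eta L}{\delta}+dL)\mathbf{1}_{A_t\setminus A_t^s}$ for the second part are exactly what the paper does. The only cosmetic difference is that the paper writes out $b_t$ explicitly as $\frac{du_t}{2\delta}(f_t(w_t)-2f_{t-1}(w_{t-1})+f_t(x_t-\delta u_t))\mathbf{1}_{A_t\setminus A_t^s}$ before splitting, whereas you apply the triangle inequality directly to $\tilde{g}_t^{(1)}-\tilde{g}_t^{(2)}$; the resulting bounds and final $\mathcal{O}(\sqrt{dT})$ conclusion are identical.
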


\begin{proof}
Recall $A_t\triangleq \{u_t|D_t^b(w_t, w_{t-1})\leq D\}$ denote the region where LAZOb uses the one-point query and $\bar{A}_t$ denote the complementary set of $A_t$ where LAZOb uses the two-point query. 
We have
\begin{align}
\nonumber
    &\mathbb{E}\left[\tilde{g}_t^b(x_t)\Big|x_t\right]\\
    =&\mathbb{E}\left[\frac{du_t}{\delta}(f_t(x_t+\delta u_t)-f_{t-1}(x_{t-1}+\delta u_{t-1}))\mathbf{1}_{A_t}\Big|x_t\right]\\\nonumber
    &+\mathbb{E}\left[\frac{du_t}{2\delta}(f_t(x_t+\delta u_t)-f_t(x_t-\delta u_t))\mathbf{1}_{\bar{A}_t}\Big|x_t\right]\\\nonumber
    =&\mathbb{E}\left[\frac{du_t}{\delta}(f_t(x_t+\delta u_t)-f_{t-1}(x_{t-1}+\delta u_{t-1}))\mathbf{1}_{A_t^s}\Big|x_t\right]\\\nonumber
    &+\mathbb{E}\left[\frac{du_t}{2\delta}(f_t(x_t+\delta u_t)-f_t(x_t-\delta u_t))\mathbf{1}_{\bar{A}_t^s}\Big|x_t\right]\\\nonumber
    &+\mathbb{E}\left[\frac{du_t}{2\delta}(f_t(x_t+\delta u_t)-2f_{t-1}(x_{t-1}+\delta u_{t-1})+f_t(x_t-\delta u_t))\mathbf{1}_{A_t\backslash A_t^{s}}\Big|x_t\right]\\\nonumber
    =&\mathbb{E}\left[\frac{du_t}{\delta}f_t(x_t+\delta u_t)\mathbf{1}_{A_t^s}\Big|x_t\right]+\mathbb{E}\left[\frac{du_t}{\delta}f_t(x_t+\delta u_t)\mathbf{1}_{\bar{A}_t^s}\Big|x_t\right]\\\nonumber
    &+\mathbb{E}\left[\frac{du_t}{2\delta}(f_t(x_t+\delta u_t)-2f_{t-1}(x_{t-1}+\delta u_{t-1})+f_t(x_t-\delta u_t))\mathbf{1}_{A_t\backslash A_t^{s}}\Big|x_t\right]\\
    =&\nabla f_{\delta,t}(x_t)+\mathbb{E}\left[\frac{du_t}{2\delta}(f_t(x_t+\delta u_t)-2f_{t-1}(x_{t-1}+\delta u_{t-1})+f_t(x_t-\delta u_t))\mathbf{1}_{A_t\backslash A_t^{s}}\Big|x_t\right]
    \label{eq:bias30}
\end{align}
where $\bar{A}_t, \bar{A}_t^s$ and $A_t\backslash A_t^s$ denote the complementary set of $A_t$, $A_t^{s}$ and the difference of sets $A_t$ and $A_t^s$. The first two terms in the third equality are due to the symmetricity of the sets $A_t^s$ and $\bar{A}_t^s$.

From equation \eqref{eq:bias30}, we can get that 
\begin{align*}
\label{btbound}
    \mathbb{E}\left[\|b_t\|\right]\leq&\frac{d}{2\delta}\mathbb{E}\left[\left|2(f_t(x_t+\delta u_t)-f_{t-1}(x_{t-1}+\delta u_{t-1}))+f_t(x_t-\delta u_t)-f_t(x_t+\delta u_t)\right|\mathbf{1}_{A_t\backslash A_t^s}\right]\\
    \leq&\frac{d}{\delta}\mathbb{E}\left[\left|f_t(x_t+\delta u_t)-f_{t-1}(x_{t-1}+\delta u_{t-1})\right|\mathbf{1}_{A_t\backslash A_t^s}\right]\\
    &+\frac{d}{2\delta}\mathbb{E}\left[|f_t(x_t+\delta u_t)-f_t(x_t-\delta u_t)|\mathbf{1}_{A_t\backslash A_t^s}\right]\\
    \numberthis
    \leq& (\frac{D\eta dL}{\delta}+dL)\mathbb{P}(A_t\backslash A_t^s)
\end{align*}
where the first inequality is derived from adding and subtracting $f_t(x_t+\delta u_t)$; the second inequality is because $\mathbb{E}\left[|X+Y|\right]\leq\mathbb{E}\left[|X|\right]+\mathbb{E}\left[|Y|\right]$; the first term in the third inequality is given by the fact that if $u\in A_t\backslash A_t^s\subseteq A_t$, then  $\left|f_t(x_t+\delta u_t)-f_{t-1}(x_{t-1}+\delta u_{t-1})\right|\leq D\eta L$ and the second term is due to the Lipschitz condition.

Thus, plugging $\eta=\frac{R}{L\sqrt{dT}}$, $\delta=R\sqrt{\frac{d}{T}}$, $D={\cal O}(\sqrt{d}L)$ and $\sum_{t=0}^T\mathbb{P}(A_t\backslash A_t^s)={\cal O}(\sqrt{T}/\sqrt{d})$ to \eqref{btbound}, we can get that $\sum_{t=0}^T\mathbb{E}\left[\|b_t\|\right]={\cal O}(\sqrt{dT})$. 
\end{proof}

\subsubsection{The second moment bound for LAZOb}

\begin{lemma}[{\citep[Lemma 9]{shamir2017optimal}}]
For any function $h : \mathbb{R}^d \rightarrow \mathbb{R}$ which is $L$-Lipschitz continuous, it holds that if $u \in \mathbb{R}^d$ is uniformly distributed on the Euclidean unit sphere, then there exists a constant $c={\cal O}(1)$ such that 
\begin{equation}
    \label{eq:concentration}
    \sqrt{\mathbb{E}\left[(h(u)-\mathbb{E}[h(u)])^{4}\right]} \leq c \frac{L^{2}}{d}.
\end{equation}
\label{sham}
\end{lemma}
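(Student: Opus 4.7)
The plan is to derive this from the classical Lévy concentration inequality on the sphere. Specifically, I would first invoke the well-known fact that for an $L$-Lipschitz function $h: \mathbb{R}^d \to \mathbb{R}$ and $u$ drawn uniformly from $\mathbb{S}^{d-1}$, the deviation $X \triangleq h(u) - \mathbb{E}[h(u)]$ satisfies a subgaussian tail bound of the form $\mathbb{P}(|X| > t) \leq 2 \exp(-\kappa d t^2 / L^2)$ for some absolute constant $\kappa > 0$. This is a standard consequence of the isoperimetric inequality on the sphere applied to Lipschitz functions, and would be cited rather than reproved.

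Given that tail bound, the second step is purely computational: convert the tail bound into a fourth-moment bound via the layer-cake representation
\[
\mathbb{E}[X^4] = \int_0^\infty 4 t^3 \, \mathbb{P}(|X| > t)\, dt \leq 8 \int_0^\infty t^3 \exp\!\bigl(-\kappa d t^2 / L^2\bigr)\, dt.
\]
Substituting $s = \kappa d t^2 / L^2$ reduces the integrand to $\frac{L^4}{2 \kappa^2 d^2} s e^{-s}$, and using $\int_0^\infty s e^{-s} ds = 1$ yields $\mathbb{E}[X^4] \leq \frac{4 L^4}{\kappa^2 d^2}$. Taking square roots gives the claimed bound $\sqrt{\mathbb{E}[X^4]} \leq c L^2 / d$ with $c = 2/\kappa = \mathcal{O}(1)$.

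The only conceptually nontrivial ingredient is Lévy's concentration inequality itself, which rests on the sphere's isoperimetric profile and standard reductions (extending $h$ to a Lipschitz function on $\mathbb{R}^d$ and relating sublevel-set measures to spherical caps). Beyond that step, the argument is a routine tail-to-moment conversion, so the main obstacle is really just invoking the correct concentration result with the right dimension dependence $d$ (as opposed to $\sqrt{d}$, which would be the wrong scaling and would not yield the desired $1/d$ factor). Once the subgaussian concentration is in hand, the fourth-moment estimate falls out immediately, and no adaptation to the LAZO context is required since the lemma is used in its abstract form.
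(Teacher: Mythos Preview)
The paper does not give its own proof of this lemma; it is stated verbatim as a citation of \cite[Lemma~9]{shamir2017optimal} and used as a black box. So there is nothing in the paper to compare your argument against.

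That said, your proposal is correct and is exactly the standard route: L\'evy's concentration inequality on $\mathbb{S}^{d-1}$ gives the subgaussian tail $\mathbb{P}(|h(u)-\mathbb{E}[h(u)]|>t)\le 2\exp(-\kappa d t^2/L^2)$, and the layer-cake integration you wrote out converts this to the fourth-moment bound with the right $L^4/d^2$ scaling. The only small caveat worth flagging is that L\'evy's inequality is usually stated as concentration around the \emph{median} rather than the mean, so one either needs the mean version directly (which is also standard) or the elementary observation that $|\mathbb{E}[h(u)]-\mathrm{med}(h(u))|=\mathcal{O}(L/\sqrt{d})$, which is absorbed into the constant. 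With that detail handled, your argument goes through and matches what Shamir's original proof does.
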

\begin{lemma}
\label{lm:con}
Under Assumption \ref{as1}, for any symmetric set $A \subseteq \mathbb{R}^d$ with respect to $u \in \mathbb{R}^d$, any given $x \in \mathbb{R}^d$ and any given $t$, we have that 
\begin{equation}
    \mathbb{E}_u\left[\frac{d^2}{4\delta^2}(f_t(x+\delta u)-f_t(x-\delta u))^2\mathbf{1}_{A}\right]\leq cdL^2\sqrt{\mathbb{P}(A)}.
\end{equation}
\end{lemma}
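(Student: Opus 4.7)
The plan is to reduce the claim to the concentration bound of Lemma \ref{sham} via Cauchy--Schwarz. The central observation is that the scaled symmetric difference
$$h(u)\triangleq \tfrac{1}{\delta}\bigl(f_t(x+\delta u)-f_t(x-\delta u)\bigr)$$
is both $2L$-Lipschitz in $u$ and odd, i.e.\ $h(-u)=-h(u)$. The Lipschitzness follows immediately from Assumption \ref{as1} and the triangle inequality, since
$|h(u)-h(v)|\le \tfrac{1}{\delta}\bigl(L\delta\|u-v\|+L\delta\|u-v\|\bigr)=2L\|u-v\|$.
Oddness combined with the fact that $u$ is uniform on $\mathbb{S}$ (a symmetric distribution) yields $\mathbb{E}[h(u)]=0$, so the centered fourth moment in Lemma \ref{sham} reduces to the raw fourth moment.

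Applying Lemma \ref{sham} to $h$ with Lipschitz constant $2L$ gives
$\sqrt{\mathbb{E}[h(u)^4]}=\sqrt{\mathbb{E}[(h(u)-\mathbb{E}[h(u)])^4]}\le c(2L)^2/d = 4cL^2/d$.
I would then rewrite the quantity of interest as $\mathbb{E}_u[\tfrac{d^2}{4}h(u)^2 \mathbf{1}_A]$ and estimate it by Cauchy--Schwarz:
$$\mathbb{E}_u\!\left[\tfrac{d^2}{4}h(u)^2\mathbf{1}_A\right]\le \tfrac{d^2}{4}\sqrt{\mathbb{E}[h(u)^4]}\sqrt{\mathbb{E}[\mathbf{1}_A^2]}\le \tfrac{d^2}{4}\cdot\tfrac{4cL^2}{d}\cdot\sqrt{\mathbb{P}(A)}=cdL^2\sqrt{\mathbb{P}(A)},$$
after absorbing constants into the generic $c={\cal O}(1)$. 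Note that $\mathbb{E}[\mathbf{1}_A^2]=\mathbb{P}(A)$ since $\mathbf{1}_A$ is 0/1-valued.

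The calculation is essentially routine; the only subtle step is recognizing that Lemma \ref{sham} should be applied to the scaled symmetric difference $h$ (rather than, say, to $f_t(x+\delta u)$ directly), because the scaling by $1/\delta$ is exactly what makes the Lipschitz constant $\delta$-independent and lets the $\delta$-dependence cancel cleanly against the $d^2/\delta^2$ prefactor. Interestingly, the symmetry hypothesis on $A$ is not actually used in this particular inequality; it is imposed because this lemma will be invoked in the sequel only for the symmetric sets $A_t^s$ and $\bar{A}_t^s$, for which the associated bias terms can be controlled separately.
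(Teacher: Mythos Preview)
Your proof is correct, but it differs from the paper's in a meaningful way. The paper does \emph{not} apply Lemma~\ref{sham} to the symmetric difference $h(u)$. Instead it introduces a free constant $\alpha$, uses $(a-b)^2\le 2(a-\alpha)^2+2(b-\alpha)^2$ to split the square, and then invokes the symmetry of $A$ together with the symmetry of the distribution of $u$ to identify $\mathbb{E}_u[(f_t(x-\delta u)-\alpha)^2\mathbf{1}_A]$ with $\mathbb{E}_u[(f_t(x+\delta u)-\alpha)^2\mathbf{1}_A]$. Only after this merge does it apply Cauchy--Schwarz and then Lemma~\ref{sham} to the single-sided map $u\mapsto f_t(x+\delta u)$ (which is $L\delta$-Lipschitz), choosing $\alpha=\mathbb{E}_u[f_t(x+\delta u)]$.

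Your route---applying Lemma~\ref{sham} directly to $h$, with oddness giving $\mathbb{E}[h(u)]=0$---is shorter and, as you correctly observe, does not use the symmetry of $A$ at all. The paper's approach genuinely needs that hypothesis for the merge step; yours shows the lemma in fact holds for arbitrary measurable $A$. Both land on the same bound $cdL^2\sqrt{\mathbb{P}(A)}$ (your factor of $4$ cancels exactly, so no absorption is even needed).
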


\begin{proof}
First, it follows that for any $\alpha\in\mathbb{R}$,
\begin{align*}
    &\mathbb{E}_u\left[\frac{d^2}{4\delta^2}(f_t(x+\delta u)-f_t(x-\delta u))^2\mathbf{1}_{A}\right]\\
    =&\mathbb{E}_u\left[\frac{d^2}{4\delta^2}(f_t(x+\delta u)-\alpha-f_t(x-\delta u)+\alpha)^2\mathbf{1}_{A}\right]\\
    \leq &\frac{d^2}{2\delta^2}\mathbb{E}_u\left[((f_t(x+\delta u)-\alpha)^2+(f_t(x-\delta u)-\alpha)^2)\mathbf{1}_A\right]\\
    \leq &\frac{d^2}{2\delta^2}\mathbb{E}_u\left[(f_t(x+\delta u)-\alpha)^2\mathbf{1}_A\right]+\frac{d^2}{2\delta^2}\mathbb{E}_u\left[(f_t(x+\delta u)-\alpha)^2\mathbf{1}_A\right]\\
    =&\frac{d^2}{\delta^2}\mathbb{E}_u\left[(f_t(x+\delta u)-\alpha)^2\mathbf{1}_A\right]\\
    \leq& \frac{d^2}{\delta^2}\sqrt{\mathbb{E}_u\left[(f_t(x+\delta u)-\alpha)^4\right]}\sqrt{\mathbb{E}_u\left[\mathbf{1}_A\right]}\\
    \leq&\frac{d^2}{\delta^2}c\frac{\delta^2L^2}{d}\sqrt{\mathbb{P}(A)}=cdL^2\sqrt{\mathbb{P}(A)}
\end{align*}
where the first inequality is due to $(X+Y)^2\leq 2(X^2+Y^2)$; the second inequality is due to the symmetric distribution of $u$ and the symmetricity of $A$; the third inequality is because $\mathbb{E}\left[XY\right]\leq\sqrt{\mathbb{E}\left[X^2\right]}\sqrt{\mathbb{E}\left[Y^2\right]}$. If we choose $\alpha=\mathbb{E}_u\left[f_t(x_t+\delta u)\right]$ and apply Lemma \ref{sham}, then the last inequality holds since $f_t(x+\delta u)$ is $L\delta$-Lipschitz. This completes the proof. 
\end{proof}

To be self-contained, we restate the LAZOb part of Lemma \ref{clip2bound} by the following lemma and prove it.

\begin{lemma}
\label{clip2bound_re}
The second moment bound of the gradient estimator $\tilde{g}_t^b(x_t)$ satisfies that there exists some constant $c$ such that
\begin{align}
\label{eq:clip2_supp}
    \mathbb{E}\left[\|\tilde{g}_t^b(x_t)\|^2\Big|x_t\right]\leq \frac{D^2\eta^2d^2L^2}{\delta^2} +cdL^2.
\end{align}
\end{lemma}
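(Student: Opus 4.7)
The plan is to split the second moment into the two disjoint regions $A_t$ (where LAZOb plays the one-point residual estimator) and $\bar{A}_t$ (where it plays the symmetric two-point estimator), and bound each piece separately using the defining condition of $D_t^b$ and the known bound for the symmetric two-point estimator.

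First I would write
\[
\mathbb{E}\!\left[\|\tilde{g}_t^b(x_t)\|^2 \,\Big|\, x_t\right]
= \mathbb{E}\!\left[\|\tilde{g}_t^{(1)}(x_t)\|^2 \mathbf{1}_{A_t} \,\Big|\, x_t\right]
+ \mathbb{E}\!\left[\|\tilde{g}_t^{(2)}(x_t)\|^2 \mathbf{1}_{\bar{A}_t} \,\Big|\, x_t\right].
\]
For the first term, I would use the defining inequality of $A_t$, namely $D_t^b(w_t,w_{t-1}) \le D$, which by Definition \ref{df:temp} directly gives $|f_t(w_t)-f_{t-1}(w_{t-1})| \le D\eta L$ pointwise on $A_t$. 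Combined with $\|u_t\|=1$, this yields the deterministic pointwise bound
\[
\|\tilde{g}_t^{(1)}(x_t)\|^2 \mathbf{1}_{A_t}
\;=\; \frac{d^2}{\delta^2}\,|f_t(w_t)-f_{t-1}(w_{t-1})|^2\,\mathbf{1}_{A_t}
\;\le\; \frac{d^2 D^2 \eta^2 L^2}{\delta^2}\,\mathbf{1}_{A_t}.
\]
Taking conditional expectation bounds the first term by $\frac{d^2 D^2 \eta^2 L^2}{\delta^2}\,\mathbb{P}(A_t\mid x_t)\le \frac{d^2 D^2 \eta^2 L^2}{\delta^2}$.

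For the second term, I would drop the indicator (the integrand is nonnegative) and invoke the standard symmetric two-point bound already recorded in \eqref{eq.tg_2}, or equivalently apply Lemma \ref{lm:con} with $A=\mathbb{R}^d$, to obtain
\[
\mathbb{E}\!\left[\|\tilde{g}_t^{(2)}(x_t)\|^2 \mathbf{1}_{\bar{A}_t} \,\Big|\, x_t\right]
\;\le\; \mathbb{E}\!\left[\|\tilde{g}_t^{(2)}(x_t)\|^2 \,\Big|\, x_t\right]
\;\le\; c d L^2,
\]
for some absolute constant $c={\cal O}(1)$. Summing the two pieces gives the claimed bound.

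The main subtlety is that $\bar{A}_t$ need not be symmetric, so Lemma \ref{lm:con} does not apply directly on that region with a tight $\sqrt{\mathbb{P}(\bar{A}_t)}$ factor; however, since we only need the absolute bound $cdL^2$ (the $\sqrt{\mathbb{P}}$ refinement would only be needed if we wanted to couple the symmetric-two-point contribution to a small probability), enlarging the integration domain to all of $\mathbb{R}^d$ is harmless and gives a clean proof. The only thing to check is that $\|\tilde{g}_t^{(2)}(x_t)\|^2 = \frac{d^2}{4\delta^2}(f_t(x_t+\delta u_t)-f_t(x_t-\delta u_t))^2$ is indeed of the form bounded by Lemma \ref{lm:con} (with $\alpha=\mathbb{E}_{u_t}[f_t(x_t+\delta u_t)]$ and exploiting symmetry in $u_t\mapsto -u_t$ over the full sphere), so no extra assumptions are introduced.
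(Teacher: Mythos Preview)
Your proposal is correct and follows essentially the same approach as the paper: split into the regions $A_t$ and $\bar{A}_t$, bound the one-point residual piece pointwise via the threshold condition $|f_t(w_t)-f_{t-1}(w_{t-1})|\le D\eta L$, and bound the two-point piece by enlarging $\mathbf{1}_{\bar{A}_t}\le 1$ and invoking Lemma~\ref{lm:con} on the full (symmetric) sphere. Your remark about the asymmetry of $\bar{A}_t$ being harmless here is exactly the point, and matches the paper's argument.
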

\begin{proof}
Using the definition of LAZOb estimator in \eqref{eq:clipping}, we have
\begin{align}
\nonumber
    \mathbb{E}\left[\|\tilde{g}_t^b(x_t)\|^2\Big|x_t\right]=&\mathbb{E}\left[\frac{d^2}{\delta^2}(f_t(x_t+\delta u_t)-f_{t-1}(x_{t-1}+\delta u_{t-1}))^2\mathbf{1}_{A_t}\Big|x_t\right]\\\nonumber
    &+\mathbb{E}\left[\frac{d^2}{4\delta^2}(f_t(x_t+\delta u_t)-f_t(x_t-\delta u_t))^2\mathbf{1}_{\bar{A}_t}\Big|x_t\right]\\\nonumber
    \leq&\mathbb{E}\left[\frac{d^2}{\delta^2}(f_t(x_t+\delta u_t)-f_{t-1}(x_{t-1}+\delta u_{t-1}))^2\mathbf{1}_{A_t}\Big|x_t\right]\\\nonumber
    &+\mathbb{E}\left[\frac{d^2}{4\delta^2}(f_t(x_t+\delta u_t)-f_t(x_t-\delta u_t))^2\Big|x_t\right]\\
    \leq& \frac{D^2\eta^2d^2L^2}{\delta^2} +cdL^2
    \label{eq:LAZOb_second}
\end{align}
where the first inequality is due to $\|u_t\|=1$ and $\mathbf{1}_{A_t}\times\mathbf{1}_{\bar{A}_t}=0$; the second term in the first inequality is due to $\mathbf{1}_{\bar{A}_t}\leq 1$; and the last term is derived from Lemma \ref{lm:con} and $\mathbb{P}(A_t)\leq 1$. 
\end{proof}

\subsubsection{The regret bound for LAZO$\mathrm{b}$}
To be self-contained, we restate the LAZOb part of Theorem \ref{thm:lazoc} as follows.

\begin{theorem}
\label{thm:lazoc_re}
Under Assumptions \ref{as1}--\ref{as_sym1}, we run LAZOb for $T$ iterations with $\eta=\frac{R}{L\sqrt{dT}}$, and $\delta=R\sqrt{\frac{d}{T}}$. 
If we use $D={\cal O}(\sqrt{d}L)$ for LAZOb, then the regret for LAZOb satisfy
\begin{align*}
\mathbb{E}[{\cal R }_T(\texttt{LAZOb})]= {\cal O}(\sqrt{dT}).
\end{align*}
\end{theorem}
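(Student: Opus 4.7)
The plan is to follow the template of Lemma~\ref{lm3}, but now carry along the extra bias $b_t$ that Lemma~\ref{lm:bt-bias_re} identifies, and then close the loop with the second-moment bound of Lemma~\ref{clip2bound_re}. I would start with the projection step $x_{t+1}=\Pi_\mathcal{X}(x_t-\eta \tilde g_t^b(x_t))$ and use non-expansiveness to obtain
\[
\langle \tilde g_t^b(x_t),\, x_t-x^*\rangle \;\leq\; \tfrac{1}{2\eta}\bigl(\|x_t-x^*\|^2-\|x_{t+1}-x^*\|^2\bigr) + \tfrac{\eta}{2}\|\tilde g_t^b(x_t)\|^2,
\]
take conditional expectation given $x_t$, and substitute the biased-gradient identity $\mathbb{E}[\tilde g_t^b(x_t)\mid x_t]=\nabla f_{\delta,t}(x_t)+\mathbb{E}[b_t\mid x_t]$. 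Convexity of $f_{\delta,t}$ combined with Cauchy--Schwarz and Assumption~\ref{as2} (so $\|x_t-x^*\|\leq 2R$) then gives
\[
\mathbb{E}\bigl[f_{\delta,t}(x_t)-f_{\delta,t}(x^*)\bigr] \;\leq\; \tfrac{1}{2\eta}\mathbb{E}\bigl[\|x_t-x^*\|^2-\|x_{t+1}-x^*\|^2\bigr] + \tfrac{\eta}{2}\mathbb{E}\|\tilde g_t^b(x_t)\|^2 + 2R\,\mathbb{E}\|b_t\|.
\]

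Summing from $t=0$ to $T$ makes the distance differences telescope, and converting $f_{\delta,t}\mapsto f_t$ through $|f_{\delta,t}(x)-f_t(x)|\leq \delta L$ exactly as in Lemma~\ref{lm3} yields
\[
\mathbb{E}[\mathcal{R}_T(\texttt{LAZOb})] \;\leq\; \tfrac{\|x_0-x^*\|^2}{2\eta} + 2L\delta T + \tfrac{\eta}{2}\sum_{t=0}^T \mathbb{E}\|\tilde g_t^b(x_t)\|^2 + 2R\sum_{t=0}^T \mathbb{E}\|b_t\|.
\]
Now I would plug in the tuned parameters and the two supporting lemmas. For the variance term, Lemma~\ref{clip2bound_re} gives $\mathbb{E}\|\tilde g_t^b(x_t)\|^2 \leq D^2\eta^2 d^2 L^2/\delta^2 + cdL^2$; with $\eta=R/(L\sqrt{dT})$, $\delta=R\sqrt{d/T}$ and $D=\mathcal{O}(\sqrt{d}L)$, a direct calculation shows $D^2\eta^2 d^2 L^2/\delta^2 = \mathcal{O}(dL^2)$, so $\tfrac{\eta}{2}\sum_t \mathbb{E}\|\tilde g_t^b(x_t)\|^2 = \mathcal{O}(LR\sqrt{dT})$. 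The distance term is $\tfrac{(2R)^2}{2\eta}=\mathcal{O}(RL\sqrt{dT})$ and the smoothing term is $2L\delta T = \mathcal{O}(LR\sqrt{dT})$. Finally, the bias estimate of Lemma~\ref{lm:bt-bias_re} gives $2R\sum_t \mathbb{E}\|b_t\| = \mathcal{O}(R\sqrt{dT})$. Adding the four contributions yields the claimed $\mathcal{O}(\sqrt{dT})$ regret.

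The main obstacle is the bias control: Lemma~\ref{lm3} implicitly assumes $\mathbb{E}[\tilde g_t\mid x_t]=\nabla f_{\delta,t}(x_t)$, which fails here because the asymmetric piece $A_t\setminus A_t^s$ in the lazy-query region contributes $b_t$. Isolating $b_t$ cleanly (so that only $\sum_t \mathbb{E}\|b_t\|$, and not $\sum_t \mathbb{E}\|b_t\|^2$, appears in the bound) is the point where Assumption~\ref{as_sym1} must be invoked, and the threshold $D=\mathcal{O}(\sqrt{d}L)$ has to be chosen simultaneously to make both the bias bound and the $D^2/\delta^2$ factor in the variance bound come out at the same $\mathcal{O}(\sqrt{dT})$ order. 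A secondary verification step is checking that the $\delta^{-2}$ blow-up in the residual one-point contribution is exactly cancelled by the $\eta^2$ factor, which is the reason the particular scalings of $\eta$ and $\delta$ are chosen.
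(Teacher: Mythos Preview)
Your proposal is correct and follows essentially the same route as the paper: expand the projected update via non-expansiveness, insert the biased-gradient identity $\mathbb{E}[\tilde g_t^b\mid x_t]=\nabla f_{\delta,t}(x_t)+\mathbb{E}[b_t\mid x_t]$ from Lemma~\ref{lm:bt-bias_re}, use convexity and Cauchy--Schwarz to isolate a $R\,\mathbb{E}\|b_t\|$ bias term, telescope, pass from $f_{\delta,t}$ to $f_t$ via the $\delta L$ smoothing error, and then substitute the second-moment bound of Lemma~\ref{clip2bound_re} together with the $\sum_t\mathbb{E}\|b_t\|=\mathcal{O}(\sqrt{dT})$ estimate. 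The only differences from the paper are cosmetic (the paper writes the bias contribution directly as $(\tfrac{D\eta dL}{\delta}+dL)R\sum_t\mathbb{P}(A_t\setminus A_t^s)$ rather than invoking $\sum_t\mathbb{E}\|b_t\|$, and uses slightly different constants in front of $R$), none of which affect the argument or the final order.
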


\begin{proof}
 
Using $\mathbb{E}\left[\tilde{g}_t^b(x_t)|x_t\right]=\nabla f_{\delta,t}(x_t)+\mathbb{E}\left[b_t\big|x_t\right]$ in \eqref{eq.unbias} and taking expectation of both sides of \eqref{eq:32}, we get for all $x\in\mathcal{X}$,
\begin{align}
\mathbb{E}\left[f_{\delta, t}\left(x_{t}\right)-f_{\delta, t}(x)\right] &\leq\mathbb{E}\left[\left\langle\mathbb{E}\left[\tilde{g}_t^b(x_t)-b_t|x_t\right], x_{t}-x\right\rangle\right]\nonumber\\
&=\mathbb{E}\left[\mathbb{E}\left[\left\langle\tilde{g}_t^b(x_t)-b_t, x_{t}-x\right\rangle|x_t\right]\right]\nonumber\\
&=\mathbb{E}\left[\left\langle\tilde{g}_t^b(x_t), x_{t}-x\right\rangle\right]-\mathbb{E}\left[\langle b_t, x_{t}-x\rangle\right]\nonumber\\
&\leq\mathbb{E}\left[\left\langle\tilde{g}_t^b(x_t), x_{t}-x\right\rangle\right]+R\mathbb{E}\left[\|b_t\|\right]\nonumber\\
&\leq\mathbb{E}\left[\left\langle\tilde{g}_t^b(x_t), x_{t}-x\right\rangle\right]+4(\frac{D\eta dL}{\delta}+dL)R\mathbb{P}(A_t\backslash A_t^s).
\label{eq:btbias}
\end{align}
Taking expectations on both sides of inequality \eqref{eq:31} with respect to $u_t$, substituting the resulting bound into \eqref{eq:btbias} and summing from $t=0$ to $T$, we obtain that
\begin{align}
\label{eq:37b}
\nonumber
\mathbb{E}\left[\sum_{t=0}^{T} f_{\delta, t}\left(x_{t}\right)-\sum_{t=0}^{T} f_{\delta,t}(x)\right] &\leq \frac{1}{2 \eta}\left\|x_{0}-x\right\|^{2}+\frac{\eta}{2} \mathbb{E}\left[\sum_{t=0}^{T}\left\|\tilde{g}_{t}^b\left(x_{t}\right)\right\|^{2}\right]\\
&+4(\frac{D\eta dL}{\delta}+dL)R\sum_{t=0}^T\mathbb{P}(A_t\backslash A_t^s).
\end{align}
Then similar to \eqref{eq:38}, we obtain that
\begin{align*}
\label{eq:gtc_regret0}
&\mathbb{E}\left[\sum_{t=0}^{T} f_{t}\left(x_{t}\right)-\sum_{t=0}^{T} f_{t}(x)\right]\\
=&\mathbb{E}\left[\sum_{t=0}^{T} f_{\delta, t}\left(x_{t}\right)-\sum_{t=0}^{T} f_{\delta, t}(x) +\sum_{t=0}^{T}\left(f_{t}(x_{t})-f_{\delta, t}(x_{t})\right)-\sum_{t=0}^{T}\left(f_{t}(x)-f_{\delta, t}(x)\right)\right]\\ \leq &\frac{1}{2 \eta}\left\|x_{0}-x\right\|^{2}+\frac{\eta}{2} \mathbb{E}\left[\sum_{t=0}^{T}\left\|\tilde{g}_{t}^b\left(x_{t}\right)\right\|^{2}\right]+4(\frac{D\eta dL}{\delta}+dL)R\sum_{t=0}^T\mathbb{P}(A_t\backslash A_t^s)+2 L \delta T.\numberthis
\end{align*}
Plugging $\eta=\frac{R}{L\sqrt{dT}}$, $\delta=R\sqrt{\frac{d}{T}}$ to the second moment bound in Lemma \ref{clip2bound}, then we can get that
\begin{align}
\label{lazobsc}
    \mathbb{E}\left[\|\tilde{g}_t^b(x_t)\|^2\right]\leq D^2+cdL^2.
\end{align}
Then plugging $x^*=\arg\min_{x\in\mathcal{X}} \sum_{t=0}^T f_t(x)$ and the second moment bound for the $\tilde{g}_t^b(x_t)$ into \eqref{eq:gtc_regret}, and according to the definition of regret in \eqref{eq.regret}, we can reach the conclusion that
\begin{align*}
\label{regret_lazob}
    \!\!\mathbb{E}[{\cal R }_T(\texttt{LAZOb})]&\leq\frac{R^{2}}{2 \eta}+\frac{\eta}{2}(D^2+cdL^2)T+4(D+dL)R\sum_{t=0}^T\mathbb{P}(A_t\backslash A_t^s)+2 L \delta T.\numberthis
\end{align*}
Then plugging $\eta=\frac{R}{L\sqrt{dT}}$, $\delta=R\sqrt{\frac{d}{T}}$, $D={\cal O}(\sqrt{d}L)$ and $\sum_{t=0}^T\mathbb{P}(A_t\backslash A_t^s)={\cal O}(\sqrt{T}/\sqrt{d})$ into \eqref{regret_lazob}, we can get that
\begin{align*}
    \!\!\mathbb{E}[{\cal R }_T(\texttt{LAZOb})]= {\cal O}(R\sqrt{dT}).\numberthis
\end{align*}
This completes the proof.
\end{proof}
\vspace{0.05cm}
\subsection{LAZOa estimator}\label{A5_lazoa}
\vspace{0.1cm}
\allowdisplaybreaks
Similar to the derivation for LAZOb, we can denote $\tilde{A}_t$ in the subsection to be the the region triggering one-point in LAZOa, that is $\tilde{A}_t\triangleq \{u_t|D_t^a(w_t, w_{t-1})\leq D\}$. 

First, we prove that under some conditions, the LAZOa gradient estimator is bounded.

\begin{lemma}
\label{lazoabound}
Under Assumptions \ref{as1}--\ref{as2}, we run \eqref{eq:sgd} with $\tilde{g}_t(x_t)$ and set $\eta=\frac{R}{L\sqrt{dT}}$, $\delta=R\sqrt{\frac{d}{T}}$. If $D<\frac{L}{\sqrt{10}}$, then for all $t$, $\|\tilde{g}^a_t(x_t)\|\leq d^2L^2$. 
\end{lemma}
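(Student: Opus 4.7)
The plan is to establish the bound by induction on $t$, exploiting the fact that in the one-point branch the lazy-query rule gives direct pointwise control on the ratio appearing in the instance-dependent bound \eqref{eq:Qt}.

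For the base case $t=0$, the algorithm invokes the symmetric two-point estimator. Lipschitz continuity (Assumption~\ref{as1}) gives $|f_0(w_0)-f_0(x_0-\delta u_0)|\le 2L\delta$, and together with $\|u_0\|=1$ this yields $\|\tilde{g}_0^a(x_0)\|^2 \le d^2L^2$. For the inductive step, I assume $\|\tilde{g}_{t-1}^a(x_{t-1})\|^2\le d^2L^2$ and split into the two branches of \eqref{eq:clipping}.

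If the rule triggers the two-point estimator ($D_t^a(w_t,w_{t-1})>D$), the same pointwise Lipschitz argument used in the base case gives $\|\tilde{g}_t^a(x_t)\|^2 = \|\tilde{g}_t^{(2)}(x_t)\|^2 \le d^2L^2$. If instead the rule reuses the delayed query, I invoke the instance-dependent bound \eqref{eq:Qt}. That bound is derived only from $\|x_t-x_{t-1}\|^2\le \eta^2\|\tilde{g}_{t-1}\|^2$ via nonexpansiveness of the projection, so it transfers verbatim with $\tilde{g}_{t-1}^{(1)}$ replaced by $\tilde{g}_{t-1}^a$. Using $D_t^a(w_t,w_{t-1})\le D$ and the inductive hypothesis, I would obtain
\[
\|\tilde{g}_t^a(x_t)\|^2 \;\le\; D^2\!\left(8d^2 + \frac{2d^2\eta^2}{\delta^2}\|\tilde{g}_{t-1}^a(x_{t-1})\|^2\right).
\]

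The calibration of $\eta$ and $\delta$ then closes the induction: with $\eta = R/(L\sqrt{dT})$ and $\delta=R\sqrt{d/T}$ one has $\eta^2/\delta^2 = 1/(L^2d^2)$, so the right-hand side simplifies to $D^2(8d^2 + 2d^2) = 10D^2d^2$, which is strictly smaller than $d^2L^2$ under the threshold assumption $D<L/\sqrt{10}$.

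The main (minor) obstacle is the observation that \eqref{eq:Qt} survives the replacement of the pure one-point residual iterate by a LAZOa iterate; this requires only noting that the derivation of \eqref{eq:Qt} uses the gradient-descent update exclusively through the projection-nonexpansiveness inequality, which applies to any stochastic direction plugged into \eqref{eq:sgd}. Apart from this, the argument is essentially a bookkeeping calculation that carefully exploits the threshold $D<L/\sqrt{10}$ to absorb the factor $10$ coming from $8d^2+2d^2$.
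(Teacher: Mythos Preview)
Your proposal is correct and follows essentially the same route as the paper's proof: induction on $t$, with the base case handled by the pointwise Lipschitz bound on the symmetric two-point estimator, and the inductive step split into the two-point branch (again Lipschitz) and the one-point branch via \eqref{eq:Qt} with $\tilde{g}_{t-1}^{(1)}$ replaced by $\tilde{g}_{t-1}^a$, followed by the same computation $\eta^2/\delta^2=1/(d^2L^2)$ and $10d^2D^2<d^2L^2$. Your explicit remark that \eqref{eq:Qt} transfers because its derivation only uses projection nonexpansiveness on the update rule is a point the paper leaves implicit.
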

\begin{proof}
Since the first step of LAZOa is using the two-point estimator, then $\|\tilde{g}^a_0(x_0)\|\leq d^2L^2$. 
With the choice of $\eta$ and $\delta$, we have that $\frac{\eta^2}{\delta^2}=\frac{1}{d^2L^2}$. Then using \eqref{eq:Qt}, and assuming $\|\tilde{g}_{t}^a(x_{t})\|^{2}\leq d^2L^2$ holds for any $t<m$, then 
\begin{align*}
    \|\tilde{g}_{m}^a(x_{m})\|^{2}&=\|\tilde{g}_{m}^{(1)}(x_{m})\mathbf{1}_{\tilde{A}_t}+\tilde{g}_{m}^{(2)}(x_{m})\mathbf{1}_{\bar{\tilde{A}}_t}\|^{2}\\
    &=\|\tilde{g}_{m}^{(1)}(x_{m})\|^{2}\mathbf{1}_{\tilde{A}_t}+\|\tilde{g}_{m}^{(2)}(x_{m})\|^{2}\mathbf{1}_{\bar{\tilde{A}}_t}\\ &\leq2d^2D^2\left(4+\frac{\eta^2}{\delta^2}\|\tilde{g}_{m-1}^a(x_{m-1})\|^2\right)\mathbf{1}_{\tilde{A}_t}+d^2L^2\mathbf{1}_{\bar{\tilde{A}}_t}\\
    &< 2d^2\frac{L^2}{10}\left(4+\frac{1}{d^2L^2}\|\tilde{g}_{m-1}^a(x_{m-1})\|^2\right)\mathbf{1}_{\tilde{A}_t}+d^2L^2\mathbf{1}_{\bar{\tilde{A}}_t}\\
    &\leq d^2L^2\mathbf{1}_{\tilde{A}_t}+d^2L^2\mathbf{1}_{\bar{\tilde{A}}_t}=d^2L^2.
    \numberthis
\end{align*}
Thus we can arrive at the conclusion using induction.
\end{proof}
\vspace{0.05cm}
\subsubsection{Bias for LAZO$\mathrm{a}$}
\vspace{0.1cm}
To be self-contained, we restate the LAZOa part of Lemma \ref{lm:bt-bias} as follows.
\begin{lemma}
If $\eta=\frac{R}{L\sqrt{dT}}$, $\delta=R\sqrt{\frac{d}{T}}$ and $D={\cal O}(\frac{L}{\sqrt{d}})<\frac{L}{\sqrt{10}}$, then we have 
\begin{equation}
\sum_{t=0}^T\mathbb{E}\left[\|\tilde{b}_t\|\right]={\cal O}(\sqrt{dT}). 
\end{equation}
\end{lemma}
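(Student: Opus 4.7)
The plan is to mirror the LAZOb bias bound in \eqref{btbound} step by step, but route through the alternative temporal variation $D_t^a$, which normalizes by $\|w_t - w_{t-1}\|$ instead of $\eta L$. So the extra work compared to the LAZOb case is to produce a deterministic upper bound on $\|w_t - w_{t-1}\|$, and then everything collapses in the same way. The condition $D < L/\sqrt{10}$ is exactly what invokes Lemma \ref{lazoabound}, which is the engine that makes the $\|w_t-w_{t-1}\|$ bound uniform in $t$.

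First I would establish the per-iteration step size bound
\[
\|w_t - w_{t-1}\| \le \|x_t - x_{t-1}\| + \delta\|u_t - u_{t-1}\| \le \eta\|\tilde{g}^a_{t-1}(x_{t-1})\| + 2\delta,
\]
and apply Lemma \ref{lazoabound} (valid because $D < L/\sqrt{10}$) to get $\|\tilde{g}^a_{t-1}(x_{t-1})\| \le dL$, hence $\|w_t - w_{t-1}\| \le \eta dL + 2\delta$. With the prescribed $\eta = R/(L\sqrt{dT})$ and $\delta = R\sqrt{d/T}$, both terms are $\mathcal{O}(\delta)$, so $\|w_t - w_{t-1}\| = \mathcal{O}(\delta)$. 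This is the LAZOa-specific ingredient that has no analogue in the LAZOb derivation.

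Next I would repeat the decomposition from \eqref{btbound}: adding and subtracting $f_t(x_t+\delta u_t)$ and using the triangle inequality,
\[
\mathbb{E}[\|\tilde{b}_t\|] \le \frac{d}{\delta}\mathbb{E}\!\left[|f_t(w_t) - f_{t-1}(w_{t-1})|\,\mathbf{1}_{\tilde{A}_t\backslash\tilde{A}_t^s}\right] + \frac{d}{2\delta}\mathbb{E}\!\left[|f_t(x_t+\delta u_t) - f_t(x_t-\delta u_t)|\,\mathbf{1}_{\tilde{A}_t\backslash\tilde{A}_t^s}\right].
\]
For the first term, on $\tilde{A}_t$ the definition of $D_t^a$ gives $|f_t(w_t) - f_{t-1}(w_{t-1})| \le D\|w_t - w_{t-1}\| \le D(\eta dL + 2\delta)$; for the second, Lipschitz continuity gives $2\delta L$. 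Plugging in the chosen $\eta,\delta$, one checks that $\eta dL/\delta = 1$, so the bracket becomes $dD(1 + 2) + dL = 3dD + dL$, and with $D = \mathcal{O}(L/\sqrt{d})$ this is $\mathcal{O}(dL)$. Thus $\mathbb{E}[\|\tilde{b}_t\|] \le \mathcal{O}(dL)\,\mathbb{P}(\tilde{A}_t\backslash\tilde{A}_t^s)$.

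Finally, summing over $t$ and invoking the asymptotic symmetricity Assumption \ref{as_sym1} applied to $\tilde{A}_t$ (which states $\sum_{t=0}^{T}\mathbb{P}(\tilde{A}_t\backslash\tilde{A}_t^s) = \mathcal{O}(\sqrt{T/d})$) yields
\[
\sum_{t=0}^{T}\mathbb{E}[\|\tilde{b}_t\|] \le \mathcal{O}(dL)\cdot\mathcal{O}(\sqrt{T/d}) = \mathcal{O}(\sqrt{dT}),
\]
as claimed. The main technical obstacle I anticipate is the need to carry Lemma \ref{lazoabound} forward cleanly: that lemma is proved under the \emph{deterministic} one-point residual recursion, and here the second moment of the gradient estimate also feeds back through the LAZOa update; verifying that the induction in Lemma \ref{lazoabound} still closes under the LAZOa iteration (alternating between one-point residual and symmetric two-point branches) is the one place where I would check the details carefully, though the structure already handles both branches via $\mathbf{1}_{\tilde{A}_t} + \mathbf{1}_{\bar{\tilde{A}}_t} = 1$.
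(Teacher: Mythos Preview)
Your proposal is correct and follows essentially the same approach as the paper: decompose $\tilde{b}_t$ by adding and subtracting $f_t(x_t+\delta u_t)$, bound the one-point-residual piece via the $D_t^a$ rule together with the deterministic bound $\|w_t-w_{t-1}\|\le \eta\|\tilde g_{t-1}^a(x_{t-1})\|+2\delta$ and Lemma~\ref{lazoabound}, bound the symmetric piece via Lipschitz continuity, and then sum using Assumption~\ref{as_sym1}. Your worry at the end is unfounded: Lemma~\ref{lazoabound} in the paper is already stated and proved for the LAZOa iterates $\tilde g_t^a$ (its induction explicitly handles both branches via $\mathbf{1}_{\tilde A_t}+\mathbf{1}_{\bar{\tilde A}_t}=1$), so it applies directly without further checking.
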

\begin{proof}
It is easy to see that \eqref{eq:bias30} holds for $\tilde{g}_t^a(x_t)$ if changing $A_t\backslash A_t^s$ to $\tilde{A}_t\backslash \tilde{A}_t^s$. 

Then according to the definition of $\tilde{A}_t$, we can get that 
\begin{align*}\label{bt2_bound}
    &\mathbb{E}\left[\|\tilde{b}_t\|\right]\\
    \leq&\frac{d}{2\delta}\mathbb{E}\left[\left|2(f_t(x_t+\delta u_t)-f_{t-1}(x_{t-1}+\delta u_{t-1}))+f_t(x_t-\delta u_t)-f_t(x_t+\delta u_t)\right|\mathbf{1}_{\tilde{A}_t\backslash \tilde{A}_t^s}\right]\\
    \leq& \frac{d}{\delta}\mathbb{E}\left[|f_t(x_t+\delta u_t)-f_{t-1}(x_{t-1}+\delta u_{t-1})|\mathbf{1}_{\tilde{A}_t\backslash \tilde{A}_t^s}\right]+\frac{d}{\delta}\mathbb{E}\left[|f_t(x_t+\delta u_t)-f_t(x_t-\delta u_t)|\mathbf{1}_{\tilde{A}_t\backslash \tilde{A}_t^s}\right]\\
    \leq& dL\mathbb{P}(\tilde{A}_t\backslash \tilde{A}_t^s)+\frac{d}{\delta}\mathbb{E}\left[|f_t(x_t+\delta u_t)-f_{t-1}(x_{t-1}+\delta u_{t-1})|\mathbf{1}_{\tilde{A}_t\backslash \tilde{A}_t^s}\right]\\
    \leq& dL\mathbb{P}(\tilde{A}_t\backslash \tilde{A}_t^s)+\frac{dD}{\delta}\mathbb{E}\left[\|x_t-x_{t-1}+\delta (u_t-u_{t-1})\|\mathbf{1}_{\tilde{A}_t\backslash \tilde{A}_t^s}\right]\\
    \leq& dL\mathbb{P}(\tilde{A}_t\backslash \tilde{A}_t^s)+2dD\mathbb{P}(\tilde{A}_t\backslash \tilde{A}_t^s)+\frac{dD\eta}{\delta}\mathbb{E}\left[\|\tilde{g}_{t-1}^a(x_{t-1})\|\mathbf{1}_{\tilde{A}_t\backslash \tilde{A}_t^s}\right]\\
    \leq& dL\mathbb{P}(\tilde{A}_t\backslash \tilde{A}_t^s)+2dD\mathbb{P}(\tilde{A}_t\backslash \tilde{A}_t^s)+\frac{d^2DL\eta}{\delta}\mathbb{P}(\tilde{A}_t\backslash \tilde{A}_t^s)
    \numberthis
\end{align*}
where the first inequality is due to $\mathbb{E}\left[|X+Y|\right]\leq \mathbb{E}\left[|X|\right]+\mathbb{E}\left[|Y|\right]$; the first term in the third inequality is because the Lipschitz condition; the fourth inequality is derived from the fact that if $u\in \tilde{A}_t\backslash \tilde{A}_t^s\subseteq \tilde{A}_t$, then $\left|f_t(x_t+\delta u_t)-f_{t-1}(x_{t-1}+\delta u_{t-1})\right|\leq D\|x_t-x_{t-1}+\delta (u_t-u_{t-1})\|$ and the second term is due to the Lipschitz condition; the second term in the fifth inequality is due to $\mathbb{E}\left[\|X+Y\|\right]\leq \mathbb{E}\left[|X|\right]+\mathbb{E}\left[|Y|\right]$, $\|u_t-u_{t-1}\|\leq 4$ and $\|x_t-x_{t-1}\|\leq \eta \|\tilde{g}_{t-1}^a(x_{t-1})\|$; the last inequality is according to Lemma \ref{lazoabound}.

Thus, plugging $\eta=\frac{R}{L\sqrt{dT}}$, $\delta=R\sqrt{\frac{d}{T}}$, $D={\cal O}(\frac{L}{\sqrt{d}})<\frac{L}{\sqrt{10}}$ and $\sum_{t=0}^T\mathbb{P}(\tilde{A}_t\backslash \tilde{A}_t^s)={\cal O}(\sqrt{T}/\sqrt{d})$ to \eqref{bt2_bound}, we can get that $\sum_{t=0}^T\mathbb{E}\left[\|\tilde{b}_t\|\right]={\cal O}(\sqrt{dT})$. 
\end{proof}

\subsubsection{The second moment bound for LAZO$\mathrm{a}$}
To be self-contained, we restate the LAZOa part of Lemma \ref{clip2bound} as follows. 
\begin{lemma}
\label{clip2bound_rere}
Under Assumptions \ref{as1}--\ref{as3}, the second moment bound of the gradient estimator $\tilde{g}_t^a(x_t)$ satisfy that there exists a constant $c={\cal O}(1)$ such that
\begin{align}
    \mathbb{E}\left[\|\tilde{g}_t^a(x_t)\|^2\Big|x_t\right]\leq   cdL^2+\frac{2d^2D^2\eta^2}{\delta^2}\|\tilde{g}_{t-1}^a(x_{t-1})\|^2+8d^2D^2.
\end{align}

\end{lemma}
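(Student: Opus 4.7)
The plan is to decompose the second moment according to which branch of the LAZOa rule is active, and then bound each branch using the tools already developed in the paper. Write $\tilde{A}_t=\{u_t\mid D_t^a(w_t,w_{t-1})\le D\}$, so that
\[
\|\tilde{g}_t^a(x_t)\|^2 \;=\; \|\tilde{g}_t^{(1)}(x_t)\|^2\,\mathbf{1}_{\tilde{A}_t} \;+\; \|\tilde{g}_t^{(2)}(x_t)\|^2\,\mathbf{1}_{\bar{\tilde{A}}_t},
\]
since $\mathbf{1}_{\tilde{A}_t}\mathbf{1}_{\bar{\tilde{A}}_t}=0$ and $\|u_t\|=1$. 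I will take the conditional expectation and treat the two terms separately.

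For the one-point branch, I would reuse the instance-dependent bound \eqref{eq:Qt}. Its derivation relied only on the update rule $x_t = \Pi_{\mathcal X}(x_{t-1}-\eta\tilde g_{t-1}(x_{t-1}))$ and the non-expansiveness of $\Pi_{\mathcal X}$, so the same argument carries over verbatim with $\tilde g_{t-1}^{(1)}$ replaced by $\tilde g_{t-1}^{a}$, giving
\[
\|\tilde{g}_t^{(1)}(x_t)\|^2 \;\le\; \frac{|f_t(w_t)-f_{t-1}(w_{t-1})|^2}{\|w_t-w_{t-1}\|^2}\left(8d^2+\frac{2d^2\eta^2}{\delta^2}\|\tilde{g}_{t-1}^{a}(x_{t-1})\|^2\right).
\]
On the event $\tilde{A}_t$, the definition of $D_t^a$ directly bounds the prefactor by $D^2$. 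Hence, after taking conditional expectation,
\[
\mathbb{E}\!\left[\|\tilde{g}_t^{(1)}(x_t)\|^2\,\mathbf{1}_{\tilde{A}_t}\,\Big|\,x_t\right] \;\le\; 8d^2D^2 + \frac{2d^2 D^2\eta^2}{\delta^2}\|\tilde{g}_{t-1}^{a}(x_{t-1})\|^2.
\]

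For the two-point branch, I would simply use $\mathbf{1}_{\bar{\tilde{A}}_t}\le 1$ to drop the indicator, which removes the non-symmetry of $\bar{\tilde{A}}_t$ as an obstacle. Then Lemma~\ref{lm:con} applied with $A$ equal to the entire unit sphere (a symmetric set) yields
\[
\mathbb{E}\!\left[\|\tilde{g}_t^{(2)}(x_t)\|^2\,\mathbf{1}_{\bar{\tilde{A}}_t}\,\Big|\,x_t\right] \;\le\; \mathbb{E}\!\left[\tfrac{d^2}{4\delta^2}\bigl(f_t(x_t+\delta u_t)-f_t(x_t-\delta u_t)\bigr)^2\,\Big|\,x_t\right] \;\le\; c\,dL^2
\]
for an absolute constant $c=\mathcal O(1)$. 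Summing the two bounds gives the claim.

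The only subtle point is the recursive appearance of $\|\tilde{g}_{t-1}^a(x_{t-1})\|$, which is why I am careful to rerun the derivation of \eqref{eq:Qt} with the LAZOa iterate rather than just citing it. Everything else is a routine application of the symmetric-set second-moment lemma and the branch-wise bound defining $\tilde{A}_t$; I do not anticipate a serious obstacle beyond keeping track of which estimator's norm appears on the right-hand side of the recursion.
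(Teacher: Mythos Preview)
Your proposal is correct and essentially matches the paper's proof: both decompose into the $\tilde{A}_t$ and $\bar{\tilde{A}}_t$ branches, bound the one-point branch via the instance-dependent inequality \eqref{eq:Qt} (using $D_t^a\le D$ on $\tilde{A}_t$ to cap the prefactor by $D^2$), and bound the two-point branch by Lemma~\ref{lm:con}. The only cosmetic difference is that the paper enlarges $\mathbf{1}_{\bar{\tilde{A}}_t}$ to $\mathbf{1}_{\bar{\tilde{A}}_t^s}$ before invoking Lemma~\ref{lm:con} (obtaining $cdL^2\sqrt{\mathbb{P}(\bar{\tilde{A}}_t^s\mid x_t)}$ and then bounding the probability by $1$), whereas you enlarge all the way to the full sphere; both routes land on the same $cdL^2$ term.
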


\begin{proof}
Using the definition of $\tilde{g}_t^a(x_t)$, we have
\begin{align}
\nonumber
    \mathbb{E}\left[\|\tilde{g}_t^a(x_t)\|^2\Big|x_t\right]&=\mathbb{E}\left[\frac{d^2}{\delta^2}(f_t(x_t+\delta u_t)-f_{t-1}(x_{t-1}+\delta u_{t-1}))^2\mathbf{1}_{\tilde{A}_t}\Big|x_t\right]\\\nonumber
    &+\mathbb{E}\left[\frac{d^2}{4\delta^2}(f_t(x_t+\delta u_t)-f_t(x_t-\delta u_t))^2\mathbf{1}_{\bar{\tilde{A}}_t}\Big|x_t\right]\\\nonumber
    &\leq\mathbb{E}\left[\frac{d^2}{\delta^2}(f_t(x_t+\delta u_t)-f_{t-1}(x_{t-1}+\delta u_{t-1}))^2\mathbf{1}_{\tilde{A}_t}\Big|x_t\right]\\\nonumber
    &+\mathbb{E}\left[\frac{d^2}{4\delta^2}(f_t(x_t+\delta u_t)-f_t(x_t-\delta u_t))^2\mathbf{1}_{\bar{\tilde{A}}_t^s}\Big|x_t\right]\\
    &\leq cdL^2\sqrt{\mathbb{P}(\bar{\tilde{A}}_t^s\big|x_t)}+\mathbb{E}\left[\frac{d^2}{\delta^2}(f_t(x_t+\delta u_t)-f_{t-1}(x_{t-1}+\delta u_{t-1}))^2\mathbf{1}_{\tilde{A}_t}\Big|x_t\right]\nonumber\\
    &\leq cdL^2+\frac{d^2D^2}{\delta^2}\mathbb{E}\left[\|x_t-x_{t-1}+\delta(u_t-u_{t-1})\|^2\mathbf{1}_{\tilde{A}_t}\Big|x_t\right]\nonumber\\
    &\leq cdL^2+\frac{2d^2D^2}{\delta^2}\mathbb{E}\left[\|x_t-x_{t-1}\|^2\Big|x_t\right]+8d^2D^2\nonumber\\
    &\leq cdL^2+\frac{2d^2D^2\eta^2}{\delta^2}\|\tilde{g}_{t-1}^a(x_{t-1})\|^2+8d^2D^2
    \label{eq:LAZOa_second}
\end{align}
where the first inequality is due to $\|u_t\|=1$ and $\mathbf{1}_{\tilde{A}_t}\times\mathbf{1}_{\bar{\tilde{A}}_t}=0$; the second inequality is due to $\bar{\tilde{A}}_t\subset \bar{\tilde{A}}_t^s$; the second inequality is due to Lemma \ref{sham}; the third inequality is derived from the fact that if $u\in \tilde{A}_t\backslash \tilde{A}_t^s\subseteq \tilde{A}_t$, then $\left|f_t(x_t+\delta u_t)-f_{t-1}(x_{t-1}+\delta u_{t-1})\right|\leq D\|x_t-x_{t-1}+\delta (u_t-u_{t-1})\|$; the fourth inequality is due to $\mathbb{E}\left[\|X+Y\|\right]\leq \mathbb{E}\left[|X|\right]+\mathbb{E}\left[|Y|\right]$, $\|u_t-u_{t-1}\|\leq 4$; the last inequality is due to $\|x_t-x_{t-1}\|\leq \eta \|\tilde{g}_{t-1}^a(x_{t-1})\|$. 
\end{proof}

\subsubsection{Regret for LAZO$\mathrm{a}$}
To be self-contained, we restate the LAZOa part of Theorem \ref{thm:lazoc} as follows.

\begin{theorem}
\label{thm:lazoc_rere}
Under Assumptions \ref{as1}--\ref{as_sym1} (replacing $A_t$ with $\tilde{A}_t$), we run LAZOa for $T$ iterations with $\eta=\frac{R}{L\sqrt{dT}}$, $\delta=R\sqrt{\frac{d}{T}}$. 
If we use $D={\cal O}(\frac{L}{\sqrt{d}})<\frac{L}{\sqrt{10}}$ for LAZOa, then the regret satisfy
\begin{align*}
\mathbb{E}[{\cal R }_T(\texttt{LAZOa})]= {\cal O}(\sqrt{dT}).
\end{align*}
\end{theorem}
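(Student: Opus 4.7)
The plan is to follow the same template as the LAZOb proof of Theorem~\ref{thm:lazoc_re}, but substituting the LAZOa-specific bias and second-moment bounds from Lemma~\ref{lm:bt-bias} and Lemma~\ref{clip2bound_rere} (with $A_t$ replaced by $\tilde{A}_t$ and $b_t$ by $\tilde{b}_t$). Starting from convexity of $f_{\delta,t}$ together with the bias decomposition in Lemma~\ref{lm:bt-bias}, and bounding $\|x_t - x\|$ by $R$ via Assumption~\ref{as2}, I would obtain the per-step inequality
\[
\mathbb{E}[f_{\delta,t}(x_t) - f_{\delta,t}(x)] \leq \mathbb{E}\!\left[\langle \tilde{g}_t^a(x_t), x_t - x\rangle\right] + R\,\mathbb{E}[\|\tilde{b}_t\|].
\]
Combining this with the standard projection-based one-step bound on $\langle \tilde{g}_t^a(x_t), x_t - x\rangle$ from nonexpansiveness and update \eqref{eq:sgd}, then adding the smoothing error $|f_t - f_{\delta,t}| \leq L\delta$, summing from $t = 0$ to $T$, and setting $x = x^\ast$ yields
\[
\mathbb{E}[{\cal R}_T(\texttt{LAZOa})] \leq \frac{R^2}{2\eta} + \frac{\eta}{2}\sum_{t=0}^{T} \mathbb{E}[\|\tilde{g}_t^a(x_t)\|^2] + R\sum_{t=0}^{T}\mathbb{E}[\|\tilde{b}_t\|] + 2L\delta T.
\]

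The main obstacle, and where LAZOa genuinely departs from LAZOb, is that Lemma~\ref{clip2bound_rere} only provides a recursive bound rather than a closed-form one. The plan for this step is to substitute the chosen parameters directly: $\eta^2/\delta^2 = 1/(d^2 L^2)$ and $D^2 = {\cal O}(L^2/d)$, which make the linear coefficient $2d^2 D^2 \eta^2/\delta^2 = {\cal O}(1/d)$ and the additive term $cdL^2 + 8d^2 D^2 = {\cal O}(dL^2)$. Taking full expectations in Lemma~\ref{clip2bound_rere} yields the scalar recursion $a_t \leq {\cal O}(dL^2) + {\cal O}(1/d)\, a_{t-1}$ with contraction factor strictly below $1$ for $d$ large enough, so unrolling (or equivalently summing and rearranging as $(1 - {\cal O}(1/d))\sum_t a_t \leq {\cal O}(dL^2 T)$) gives $\sum_{t=0}^{T}\mathbb{E}[\|\tilde{g}_t^a(x_t)\|^2] = {\cal O}(dL^2 T)$.

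Finally I would plug this sum, along with the bias bound $\sum_{t=0}^{T}\mathbb{E}[\|\tilde{b}_t\|] = {\cal O}(\sqrt{dT})$ from Lemma~\ref{lm:bt-bias}, and the parameter choices $\eta = R/(L\sqrt{dT})$ and $\delta = R\sqrt{d/T}$ into the regret inequality. A routine check shows each of the four terms is of order ${\cal O}(\sqrt{dT})$ up to constants in $R$ and $L$: the first contributes $RL\sqrt{dT}/2$, the second $\tfrac{\eta}{2}\cdot {\cal O}(dL^2 T) = {\cal O}(RL\sqrt{dT})$, the third ${\cal O}(R\sqrt{dT})$, and the fourth $2LR\sqrt{dT}$. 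Summing delivers $\mathbb{E}[{\cal R}_T(\texttt{LAZOa})] = {\cal O}(\sqrt{dT})$ as claimed. Assumption~\ref{as_sym1} (applied to $\tilde{A}_t$) enters only through the bias bound, and no further symmetricity handling is needed at the regret-aggregation stage.
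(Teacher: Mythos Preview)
Your proposal is correct and tracks the paper's proof closely in structure: the convexity-plus-bias step, the nonexpansive projection bound, the smoothing error $|f_t-f_{\delta,t}|\le L\delta$, and the final substitution of parameters all match. The one substantive difference is how you resolve the recursive second-moment bound of Lemma~\ref{clip2bound_rere}. The paper does not unroll the recursion in expectation; instead it invokes Lemma~\ref{lazoabound}, which under $D<L/\sqrt{10}$ gives the almost-sure pointwise bound $\|\tilde g_{t-1}^a(x_{t-1})\|^2\le d^2L^2$, and plugs this directly into \eqref{eq:clip2a} to obtain the closed form $\mathbb{E}[\|\tilde g_t^a(x_t)\|^2]\le 10d^2D^2+cdL^2$. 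Your contraction argument is a valid alternative and avoids appealing to Lemma~\ref{lazoabound} at this step (though that lemma is still used inside the proof of the bias bound $\sum_t\mathbb{E}[\|\tilde b_t\|]=\mathcal{O}(\sqrt{dT})$ that you cite). One small sharpening: the contraction factor $2D^2/L^2$ is already below $1/5$ for \emph{all} $d$ by the hypothesis $D<L/\sqrt{10}$, so the qualifier ``for $d$ large enough'' is unnecessary.
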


\begin{proof}
First, plugging $\eta=\frac{R}{L\sqrt{dT}}$, $\delta=R\sqrt{\frac{d}{T}}$ to the \eqref{bt2_bound}, we can simplify it to
\begin{align*}
    \mathbb{E}\left[\|\tilde{b}_t\|\right]\leq d(L+3D)\mathbb{P}(\tilde{A}_t\backslash \tilde{A}_t^s).
\end{align*}

Similar to the derivation for \eqref{eq:btbias}, we can get that
\begin{equation}
\label{eq:45}
\mathbb{E}\left[\sum_{t=0}^{T} f_{\delta, t}\left(x_{t}\right)-\sum_{t=0}^{T} f_{\delta,t}(x)\right] \leq \frac{1}{2 \eta}\left\|x_{0}-x\right\|^{2}+\frac{\eta}{2} \mathbb{E}\left[\sum_{t=0}^{T}\left\|\tilde{g}_{t}^a\left(x_{t}\right)\right\|^{2}\right]+d(L+3D)R\sum_{t=0}^T\mathbb{P}(\tilde{A}_t\backslash \tilde{A}_t^s). 
\end{equation}
Then similar to \eqref{eq:38}, we obtain that
\begin{align*}
\label{eq:gtc_regret}
&\mathbb{E}\Bigg[\sum_{t=0}^{T} f_{t}\left(x_{t}\right)-\sum_{t=0}^{T} f_{t}(x)\Bigg]\\
=&\mathbb{E}\left[\sum_{t=0}^{T} f_{\delta, t}\left(x_{t}\right)-\sum_{t=0}^{T} f_{\delta, t}(x) +\sum_{t=0}^{T}\left(f_{t}(x_{t})-f_{\delta, t}(x_{t})\right)-\sum_{t=0}^{T}\left(f_{t}(x)-f_{\delta, t}(x)\right)\right]\\ \leq&\frac{1}{2 \eta}\left\|x_{0}-x\right\|^{2}+\frac{\eta}{2} \mathbb{E}\left[\sum_{t=0}^{T}\left\|\tilde{g}_{t}^a\left(x_{t}\right)\right\|^{2}\right]+d(L+3D)R\sum_{t=0}^T\mathbb{P}(\tilde{A}_t\backslash \tilde{A}_t^s)+2 L \delta T.\numberthis
\end{align*}

Second, plugging $\eta=\frac{R}{L\sqrt{dT}}$, $\delta=R\sqrt{\frac{d}{T}}$ to the \eqref{eq:LAZOa_second}, we can simplify the second moment bound of LAZOa estimator to
\begin{align}
\label{lazoasc}
    \mathbb{E}\left[\|\tilde{g}_t^a(x_t)\|^2\right]\leq 10d^2D^2+cdL^2.
\end{align}

Then plugging $x^*=\arg\min_{x\in\mathcal{X}} \sum_{t=0}^T f_t(x)$ and \eqref{lazoasc} into \eqref{eq:gtc_regret}, and according to the definition of regret in \eqref{eq.regret}, we can reach the conclusion that
\begin{align*}
\label{49}
    \!\!\mathbb{E}[{\cal R }_T(\texttt{LAZOa})]&\leq\frac{R^{2}}{2 \eta}+\frac{\eta}{2}(10d^2D^2+cdL^2)T+d(L+3D)R\sum_{t=0}^T\mathbb{P}(\tilde{A}_t\backslash \tilde{A}_t^s)+2 L \delta T.\numberthis
\end{align*}
Finally, plugging $\eta=\frac{R}{L\sqrt{dT}}$ and , $\delta=R\sqrt{\frac{d}{T}}$ and $D={\cal O}(\frac{L}{\sqrt{d}})$ and $\sum_{t=0}^T\mathbb{P}(\tilde{A}_t\backslash \tilde{A}_t^s)={\cal O}(\sqrt{T}/\sqrt{d})$ into \eqref{49}, we can get that
\begin{align*}
    \!\!\mathbb{E}[{\cal R }_T(\texttt{LAZOa})]= {\cal O}(R\sqrt{dT}).\numberthis
\end{align*}
This completes the proof.
\end{proof}

\vspace{0.2cm}
\section{Proofs of the results in convex stochastic optimization}
\vspace{0.2cm}
In this section, we will present the proofs of results in convex stochastic optimization.

With the definition of $f_t(x)=F(x,\xi_t)$, instead of minimizing the regret \eqref{eq.regret}, the goal is to minimize 
\begin{equation}\label{eq.regret2}
\overline{{\cal R}}_T({\cal A})\triangleq  f(\bar{x}_T)-f(x^*)~~~{\rm with}~~~x^*\in \min _{x \in \mathcal{X}} f(x) 
\end{equation}
where the average iterate is defined as $\bar{x}_T=\frac{1}{T}\sum_{t=0}^{T-1}x_t$. 

Then under Assumption \ref{as1}--\ref{as_sym1}, we can reach the same conclusion as that in the OCO case by the similar procedure.

First, similar to the derivation of \eqref{btbound} and \eqref{bt2_bound}, we can get that 
\begin{equation}
\mathbb{E}_{\xi_t,u_t}\left[\tilde{g}_t^a(x_t)\right]=\nabla f_{\delta}(x_t)+\mathbb{E}\left[\tilde{b}_t\Big|x_t\right]~~{\rm and}~~\mathbb{E}_{\xi_t,u_t}\left[\tilde{g}_t^b(x_t)\right]=\nabla f_{\delta}(x_t)+\mathbb{E}\left[b_t\Big|x_t\right]
\end{equation}
with
\begin{align*}
    \mathbb{E}\left[\|b_t\|\right]\leq (\frac{D\eta dL}{\delta}+dL)\mathbb{P}(A_t\backslash A_t^s)~~{\rm and}~~
    \mathbb{E}[\|\tilde{b}_t\|]\leq (dL+2dD+\frac{d^2DL\eta}{\delta})\mathbb{P}(\tilde{A}_t\backslash \tilde{A}_t^s).
\end{align*}

Second, the second moment bounds for LAZOa and LAZOb in Lemma \ref{clip2bound} still hold for convex stochastic optimization if plugging $f_t(x)=F(x,\xi_t)$ into the proof. 

Then, similar to the derivation of \eqref{eq:gtc_regret0} and \eqref{eq:gtc_regret}, for \textbf{LAZOa}, we have
\begin{align*}
&\mathbb{E}\left[\sum_{t=0}^{T} f(x_{t})-\sum_{t=0}^{T} f(x)\right]\\
=&\mathbb{E}\left[\sum_{t=0}^{T} f_{\delta}(x_{t})-\sum_{t=0}^{T} f_{\delta}(x) +\sum_{t=0}^{T}\left(f(x_{t})-f_{\delta}(x_{t})\right)-\sum_{t=0}^{T}\left(f(x)-f_{\delta}(x)\right)\right]\\
\leq&\frac{1}{2\eta}\|x_0-x\|^2+\frac{\eta}{2}\mathbb{E}\left[\sum_{t=0}^T \|\tilde{g}_t^a(x_t)\|^2\right]+R\mathbb{E}[\|\tilde{b}_t\|]+2L\delta T;\numberthis
\end{align*}
for \textbf{LAZOb}, we have that
\begin{align*}
\mathbb{E}\left[\sum_{t=0}^{T} f(x_{t})-\sum_{t=0}^{T} f(x)\right]
\leq\frac{1}{2\eta}\|x_0-x\|^2+\frac{\eta}{2}\mathbb{E}\left[\sum_{t=0}^T \|\tilde{g}_t^b(x_t)\|^2\right]+R\mathbb{E}[\|b_t\|]+2L\delta T.\numberthis
\end{align*}

Thus, we can reach the same conclusion.

\section{Proofs of the results in nonconvex stochastic optimization}\label{nonconvex-proof}

In this section, we will present the proofs of results in nonconvex stochastic optimization.

\subsection{Proof of Lemmas}
Notice that without feasible bounded set assumption, LAZO update \eqref{eq:sgd} will change to
\begin{equation}
x_{t+1}=x_{t}-\eta \tilde{g}_{t}(x_{t})
\label{eq:sgd2}
\end{equation}

\begin{lemma}[{\cite{gao2018information}[Lemma 4.1(b)]}]
\label{gao}
Under Assumption \ref{as_sym1}, for any $x\in\mathbb{R}^d$, we have
\begin{align}
&\left|f_{\delta}(x)-f(x)\right| \leq \frac{\mu \delta^{2}}{2}; \\ &\left\|\nabla f_{\delta}(x)-\nabla f(x)\right\| \leq \frac{\delta d \mu}{2},
\end{align}
where $f_\delta(x)\triangleq\mathbb{E}_{v\sim U(\mathbb{B})}\left[f(x+\delta v)\right]$ is a smoothed version of function $f$. 
\end{lemma}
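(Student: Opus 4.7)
The plan is to prove the two inequalities separately, both leaning on the $\mu$-smoothness of $f$ (Assumption~\ref{as8}), the symmetry of the sampling distributions, and, for the gradient bound, the Stokes-type identity that expresses $\nabla f_\delta$ as a spherical expectation.

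For part~(a), I would start from the definition $f_\delta(x)-f(x)=\mathbb{E}_{v\sim U(\mathbb{B})}[f(x+\delta v)-f(x)]$. The $\mu$-smoothness of $f$ gives the standard quadratic sandwich
\[
\bigl|f(x+\delta v)-f(x)-\langle\nabla f(x),\delta v\rangle\bigr|\le\tfrac{\mu\delta^2}{2}\|v\|^2 .
\]
Taking expectations, the linear term vanishes because $\mathbb{E}_{v\sim U(\mathbb{B})}[v]=0$ by symmetry of the unit ball, and $\mathbb{E}\|v\|^2\le 1$ since $\|v\|\le1$ on $\mathbb{B}$, which yields $|f_\delta(x)-f(x)|\le\mu\delta^2/2$. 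This step is routine.

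For part~(b), the key ingredient is the identity
\[
\nabla f_\delta(x)=\tfrac{d}{\delta}\,\mathbb{E}_{u\sim U(\mathbb{S})}\bigl[f(x+\delta u)\,u\bigr],
\]
obtained by applying Stokes' theorem (or the divergence theorem) to the integral defining $f_\delta$; I would state this as a known lemma. By symmetry on the sphere, $\mathbb{E}_u[u]=0$, so I may replace $f(x+\delta u)$ by $f(x+\delta u)-f(x)$ without changing the expectation. Smoothness again yields
\[
f(x+\delta u)-f(x)=\langle\nabla f(x),\delta u\rangle+e(u),\qquad |e(u)|\le\tfrac{\mu\delta^2}{2}\|u\|^2=\tfrac{\mu\delta^2}{2}.
\]
Plugging this in and using the isotropy identity $\mathbb{E}_{u\sim U(\mathbb{S})}[uu^\top]=\tfrac{1}{d}I$, the linear part produces exactly $d\cdot\tfrac{1}{d}\nabla f(x)=\nabla f(x)$, so
\[
\nabla f_\delta(x)-\nabla f(x)=\tfrac{d}{\delta}\,\mathbb{E}_u[e(u)\,u].
\]
Taking norms, applying Jensen, and using $\|u\|=1$ together with the bound on $|e(u)|$ gives $\|\nabla f_\delta(x)-\nabla f(x)\|\le\tfrac{d\mu\delta}{2}$.

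The main obstacle is the Stokes-theorem gradient identity in part~(b): the algebra afterwards is mechanical, but one must be comfortable switching between the ball-smoothing representation (used to define $f_\delta$) and the sphere representation (used to compute its gradient), and must correctly invoke the second-moment identity $\mathbb{E}[uu^\top]=I/d$ for the uniform distribution on $\mathbb{S}$. Once that identity is in hand, everything else follows from standard manipulations of the smoothness inequality.
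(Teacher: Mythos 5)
Your proof is correct. Note that the paper does not prove this lemma at all---it imports it verbatim from \cite{gao2018information} (Lemma 4.1(b))---so there is no in-paper argument to compare against; your derivation (the smoothness sandwich plus $\mathbb{E}[v]=0$ for the function bound, and the sphere-gradient identity $\nabla f_\delta(x)=\tfrac{d}{\delta}\mathbb{E}_{u\sim U(\mathbb{S})}[f(x+\delta u)u]$ combined with $\mathbb{E}[uu^\top]=\tfrac{1}{d}I$ for the gradient bound) is exactly the standard one used in the cited source. One small remark: the lemma as stated in the paper invokes Assumption \ref{as_sym1} (symmetricity), which is evidently a typo for Assumption \ref{as8} ($\mu$-smoothness); you correctly identified smoothness as the hypothesis actually needed.
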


\begin{lemma}
\label{lm:nvx}
Under Assumption \ref{as1}, \ref{as8}, and assume $\eta=\frac{1}{L\sqrt{dT}}$, $\delta=\sqrt{\frac{d}{T}}$ and $D<\frac{L}{\sqrt{10}}$ for \textbf{LAZOa}, then we have that
\begin{align*}
\sum_{t=0}^T\mathbb{E}\left[\|\nabla f(x_t)\|^2\right]&\leq \frac{2f(x_0)}{\eta}+\frac{\delta^2\mu^2d^2T}{4}+c\mu\eta dL^2T+8d^2D^2\eta\mu T+\frac{2\mu \eta^3 d^4D^2L^2T}{\delta^2}\\
&+2 L(dL+2dD+\frac{d^2DL\eta}{\delta})\sum_{t=1}^T\mathbb{P}(\tilde{A}_t\backslash \tilde{A}_t^s). 
\end{align*}
Similarly, for \textbf{LAZOb}, under Assumption \ref{as1}, \ref{as8}, we have that
\begin{align*}
\sum_{t=0}^T\mathbb{E}\left[\|\nabla f(x_t)\|^2\right]&\leq \frac{2f(x_0)}{\eta}+\frac{\delta^2\mu^2d^2T}{4}+\frac{\mu D^2\eta^3d^2L^2T}{\delta^2} +c\mu\eta dL^2T\\
&+2L(\frac{D\eta dL}{\delta}+dL)\sum_{t=1}^T\mathbb{P}(A_t\backslash A_t^s).
\end{align*}
\end{lemma}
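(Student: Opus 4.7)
}
The plan is to run the standard one-step descent argument for smooth nonconvex SGD, and then carefully account for the two new features introduced by LAZO: (i) the stochastic gradient is biased (target is $\nabla f_{\delta}$, not $\nabla f$, plus the laziness bias $b_t$ or $\tilde b_t$), and (ii) the second moment admits the recursive-type bounds of Lemma \ref{clip2bound}. Start from $\mu$-smoothness (Assumption \ref{as8}) applied to the iterates \eqref{eq:sgd2}:
\begin{equation*}
f(x_{t+1}) \le f(x_t) - \eta \langle \nabla f(x_t), \tilde g_t(x_t)\rangle + \tfrac{\mu\eta^2}{2}\|\tilde g_t(x_t)\|^2.
\end{equation*}
Take conditional expectation given $x_t$ and plug in $\mathbb{E}[\tilde g_t(x_t)|x_t]=\nabla f_\delta(x_t)+\mathbb{E}[e_t|x_t]$, where $e_t\in\{b_t,\tilde b_t\}$ is the LAZO bias from Lemma \ref{lm:bt-bias}.

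Next I would lower-bound $\langle\nabla f(x_t),\nabla f_\delta(x_t)\rangle$ using the identity $\langle a,b\rangle=\tfrac12(\|a\|^2+\|b\|^2-\|a-b\|^2)$ together with Lemma \ref{gao}, giving
\begin{equation*}
\langle\nabla f(x_t),\nabla f_\delta(x_t)\rangle \ge \tfrac12 \|\nabla f(x_t)\|^2 - \tfrac{\delta^2 d^2\mu^2}{8}.
\end{equation*}
The bias cross-term is controlled via Cauchy--Schwarz and Assumption \ref{as1}, namely $|\langle\nabla f(x_t),\mathbb{E}[e_t|x_t]\rangle|\le L\,\mathbb{E}[\|e_t\|\,|\,x_t]$. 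Rearranging, taking total expectation, and summing $t=0,\dots,T$ telescopes $\sum (f(x_t)-f(x_{t+1}))$ to at most $f(x_0)$ (assuming $f$ bounded below by $0$), yielding
\begin{equation*}
\sum_{t=0}^T \mathbb{E}\|\nabla f(x_t)\|^2 \le \frac{2f(x_0)}{\eta} + \frac{\delta^2\mu^2 d^2 T}{4} + 2L\sum_{t=0}^T\mathbb{E}\|e_t\| + \mu\eta\sum_{t=0}^T\mathbb{E}\|\tilde g_t(x_t)\|^2.
\end{equation*}

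The remaining step is to substitute the corresponding bounds. For LAZOb, plug in Lemma \ref{clip2bound_re}, which gives $\mathbb{E}\|\tilde g_t^b(x_t)\|^2\le \frac{D^2\eta^2 d^2L^2}{\delta^2}+cdL^2$, and the bias bound $\mathbb{E}\|b_t\|\le(\frac{D\eta dL}{\delta}+dL)\mathbb{P}(A_t\setminus A_t^s)$ from the proof of Lemma \ref{lm:bt-bias_re}; combining these terms directly matches the LAZOb claim. For LAZOa, substitute Lemma \ref{clip2bound_rere}, which is recursive; here I would invoke Lemma \ref{lazoabound} to deterministically bound $\|\tilde g_{t-1}^a(x_{t-1})\|^2\le d^2L^2$ under $D<L/\sqrt{10}$, converting the recursion into a clean per-iteration bound $\mathbb{E}\|\tilde g_t^a(x_t)\|^2 \le cdL^2 + \frac{2d^4D^2\eta^2L^2}{\delta^2}+8d^2D^2$. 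Combine with $\mathbb{E}\|\tilde b_t\|\le(dL+2dD+\frac{d^2DL\eta}{\delta})\mathbb{P}(\tilde A_t\setminus\tilde A_t^s)$ from \eqref{bt2_bound}, noting that at $t=0$ the algorithm takes a two-point step and $\tilde b_0=0$, so the bias sum can start at $t=1$.

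The main obstacle I foresee is ensuring the recursive second-moment bound of LAZOa does not blow up when plugged into the descent inequality. Summing $\frac{2d^2D^2\eta^2}{\delta^2}\|\tilde g_{t-1}^a\|^2$ naively would propagate an error that grows with $T$; this is why the deterministic envelope from Lemma \ref{lazoabound} is essential, and why the condition $D<L/\sqrt{10}$ appears. Everything else is routine algebra once the descent inequality, the bias decomposition from Lemma \ref{lm:bt-bias}, and the two second-moment lemmas are assembled.
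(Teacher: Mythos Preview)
Your proposal is correct and follows essentially the same route as the paper: smoothness descent, the bias decomposition $\mathbb{E}[\tilde g_t|x_t]=\nabla f_\delta(x_t)+\mathbb{E}[e_t|x_t]$, Young's inequality on $\langle\nabla f,\nabla f_\delta\rangle$ combined with Lemma~\ref{gao}, the Lipschitz bound $\|\nabla f(x_t)\|\le L$ for the bias cross-term, and then plugging in Lemma~\ref{clip2bound} together with the deterministic envelope $\|\tilde g_{t-1}^a\|^2\le d^2L^2$ from Lemma~\ref{lazoabound} for LAZOa. Your identification of the recursive second-moment bound as the only real obstacle, and of Lemma~\ref{lazoabound} (hence $D<L/\sqrt{10}$) as its fix, matches the paper exactly.
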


\begin{proof}
First, similar to the derivation of \eqref{btbound} and \eqref{bt2_bound}, we can get that 
\begin{equation}
\mathbb{E}_{\xi_t,u_t}\left[\tilde{g}_t^a(x_t)\right]=\nabla f_{\delta}(x_t)+\mathbb{E}\left[\tilde{b}_t\Big|x_t\right]~~{\rm and}~~\mathbb{E}_{\xi_t,u_t}\left[\tilde{g}_t^b(x_t)\right]=\nabla f_{\delta}(x_t)+\mathbb{E}\left[b_t\Big|x_t\right]
\end{equation}
with
\begin{align*}
    \|\mathbb{E}\left[b_t|x_t\right]\|&\leq\mathbb{E}\left[\|b_t\||x_t\right]\leq (\frac{D\eta dL}{\delta}+dL)\mathbb{P}(A_t\backslash A_t^s|x_t)
\end{align*}
and 
\begin{align*}
    \|\mathbb{E}[\tilde{b}_t|x_t]\|&\leq\mathbb{E}[\|\tilde{b}_t\||x_t]\leq (dL+2dD+\frac{d^2DL\eta}{\delta})\mathbb{P}(\tilde{A}_t\backslash \tilde{A}_t^s|x_t).
\end{align*}
\textbf{For LAZOa}, taking expectation conditioned on $x_t$, we have that 
\begin{align*}
&\mathbb{E}_{\xi_t,u_t}\left[f(x_{t+1})\right]-f(x_t)\\
\leq& -\eta\langle\nabla f(x_t),\mathbb{E}_{\xi_t,u_t}\left[\tilde{g}_t^{a}(x_t)\right]\rangle+\frac{\eta^2\mu}{2}\mathbb{E}_{\xi_t,u_t}\left[\|\tilde{g}_t^{a}(x_t)\|^2\right]\\
=&-\eta\langle\nabla f(x_t),\nabla f_\delta(x_t)\rangle-\eta\langle\nabla f(x_t),\mathbb{E}\left[\tilde{b}_t|x_t\right]\rangle+\frac{\eta^2\mu}{2}\mathbb{E}_{\xi_t,u_t}\left[\|\tilde{g}_t^{a}(x_t)\|^2\right]\\
\leq&-\eta\langle\nabla f(x_t),\nabla f(x_t)+\nabla f_\delta(x_t)-\nabla f(x_t)\rangle+\eta L\|\mathbb{E}\left[\tilde{b}_t|x_t\right]\|\\
&+\frac{\mu\eta^2cdL^2}{2} +4d^2D^2\eta^2\mu+\frac{\mu\eta^4 d^2D^2 }{\delta^2}d^2L^2\\
\leq& -\frac{\eta}{2}\|\nabla f(x_t)\|^2+\frac{\eta}{2}\|\nabla f_\delta(x_t)-\nabla f(x)\|^2+\frac{\mu\eta^2cdL^2}{2} +4d^2D^2\eta^2\mu\\
&+\eta L(dL+2dD+\frac{d^2DL\eta}{\delta})\mathbb{P}(\tilde{A}_t\backslash \tilde{A}_t^s|x_t)+\frac{\mu\eta^4 d^2D^2 }{\delta^2}d^2L^2\\
\leq& -\frac{\eta}{2}\|\nabla f(x_t)\|^2+\frac{\eta\delta^2 d^2\mu^2}{8}+\frac{\mu\eta^2cdL^2}{2} +4d^2D^2\eta^2\mu+\frac{\mu\eta^4 d^2D^2 }{\delta^2}d^2L^2\\
&+\eta L(dL+2dD+\frac{d^2DL\eta}{\delta})\mathbb{P}(\tilde{A}_t\backslash \tilde{A}_t^s|x_t)\numberthis
\label{eq:68}
\end{align*}
where the first inequality is due to the fact that $f$ is $\mu$-smooth and \eqref{eq:sgd2}; the first equality and the second inequality is due to Lemma \ref{lm:bt-bias}; the third inequality is derived from \eqref{eq:LAZOa_second}; the fifth inequality is according to Lemma \ref{gao}. Then taking the total expectation and summing it from $t=0$ to $T$, we can reach the conclusion.

\textbf{For LAZOb}, taking expectation conditioned on $x_t$, we have that 
\begin{align*}\label{eq:68-2}
&\mathbb{E}_{\xi_t,u_t}\left[f(x_{t+1})\right]-f(x_t)\\
\leq &-\eta\langle\nabla f(x_t),\mathbb{E}_{\xi_t,u_t}\left[\tilde{g}_t^{b}(x_t)\right]\rangle+\frac{\eta^2\mu}{2}\mathbb{E}_{\xi_t,u_t}\left[\|\tilde{g}_t^{b}(x_t)\|^2\right]\\
=&-\eta\langle\nabla f(x_t),\nabla f_\delta(x_t)\rangle-\eta\langle\nabla f(x_t),\mathbb{E}\left[b_t|x_t\right]\rangle+\frac{\eta^2\mu}{2}\mathbb{E}_{\xi_t,u_t}\left[\|\tilde{g}_t^{b}(x_t)\|^2\right]\\
\leq&-\eta\langle\nabla f(x_t),\nabla f(x_t)+\nabla f_\delta(x_t)-\nabla f(x_t)\rangle+\eta L\|\mathbb{E}\left[b_t|x_t\right]\|\\
&+\frac{\mu D^2\eta^4d^2L^2}{2\delta^2} +\frac{c\mu\eta^2 dL^2}{2}\\
\leq& -\frac{\eta}{2}\|\nabla f(x_t)\|^2+\frac{\eta}{2}\|\nabla f_\delta(x_t)-\nabla f(x)\|^2+\eta L(\frac{D\eta dL}{\delta}+dL)\mathbb{P}(A_t\backslash A_t^s|x_t)\\
&+\frac{\mu D^2\eta^4d^2L^2}{2\delta^2} +\frac{c\mu\eta^2 dL^2}{2}\\
\leq& -\frac{\eta}{2}\|\nabla f(x_t)\|^2+\frac{\eta\delta^2 d^2\mu^2}{8}+\frac{\mu D^2\eta^4d^2L^2}{2\delta^2} +\frac{c\mu\eta^2 dL^2}{2}\\
&+\eta L(\frac{D\eta dL}{\delta}+dL)\mathbb{P}(A_t\backslash A_t^s|x_t)\numberthis
\end{align*}
where the first inequality follows that $f$ is $\mu$-smooth and \eqref{eq:sgd2}; the first equality and the second inequality is due to Lemma \ref{lm:bt-bias}; the third inequality is derived from Lemma \ref{clip2bound}; the fourth inequality is according to Lemma \ref{gao}. Thus, taking the total expectation and summing it from $t=0$ to $T$, we can get the conclusion.
\end{proof}

\subsection{Proof of Theorem \ref{them_ncvx}}
Under additional Assumption \ref{as_sym1} (replacing $A_t$ by $\tilde{A}_t$) and based on Lemma \ref{lm:nvx}, if we choose 
$\eta=\frac{1}{L\sqrt{dT}}$, $\delta=\sqrt{\frac{d}{T}}$ and use $D={\cal O}(\frac{L}{\sqrt{d}})<\frac{L}{\sqrt{10}}$ for LAZOa, we can get that 
\begin{align*}
    \mathbb{E}[{\cal R }_T^{\rm nc}(\texttt{LAZOa})]\leq {\cal O}(\sqrt{dT}).
\end{align*}

Similarly, under additional Assumption \ref{as_sym1} and based on Lemma \ref{lm:nvx}, if we choose 
$\eta=\frac{1}{L\sqrt{dT}}$, $\delta=\sqrt{\frac{d}{T}}$ and $D={\cal O}(\sqrt{d}L)$ for LAZOb, we can also get
\begin{align*}
\mathbb{E}[{\cal R }_T^{\rm nc}(\texttt{LAZOb})]\leq {\cal O}(\sqrt{dT}).
\end{align*}

\section{Experiment details}\label{experiment_detail}
\begin{figure*}[t]
    \centering
    \includegraphics[width=1\textwidth]{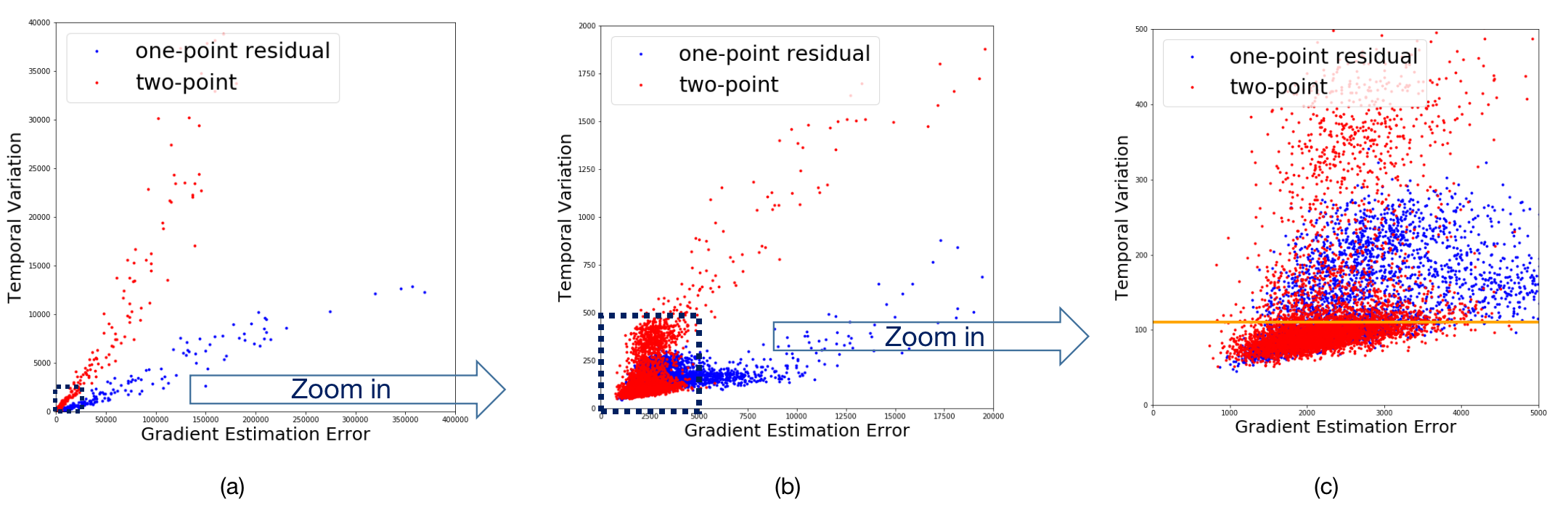}
    \vspace{-0.8cm}
    \caption{Temporal variation v.s. gradient estimation error under \textbf{one-point residual} method (\blue{blue}) and \textbf{two-point} algorithm (\red{red}) along the training trajectory for the online linear regression task. The difference of (a), (b) and (c) is the scale. The orange line in (c) indicates an ideal choice of $D$ for LAZOa in this example. 
    % More experiment can be found in Section 4. Experiment details are in Appendix.
}
    \label{fig:intuition}
\end{figure*}

\begin{figure*}[t]
    \centering
    \begin{tabular}{cccc}
\includegraphics[width=.2\textwidth]{./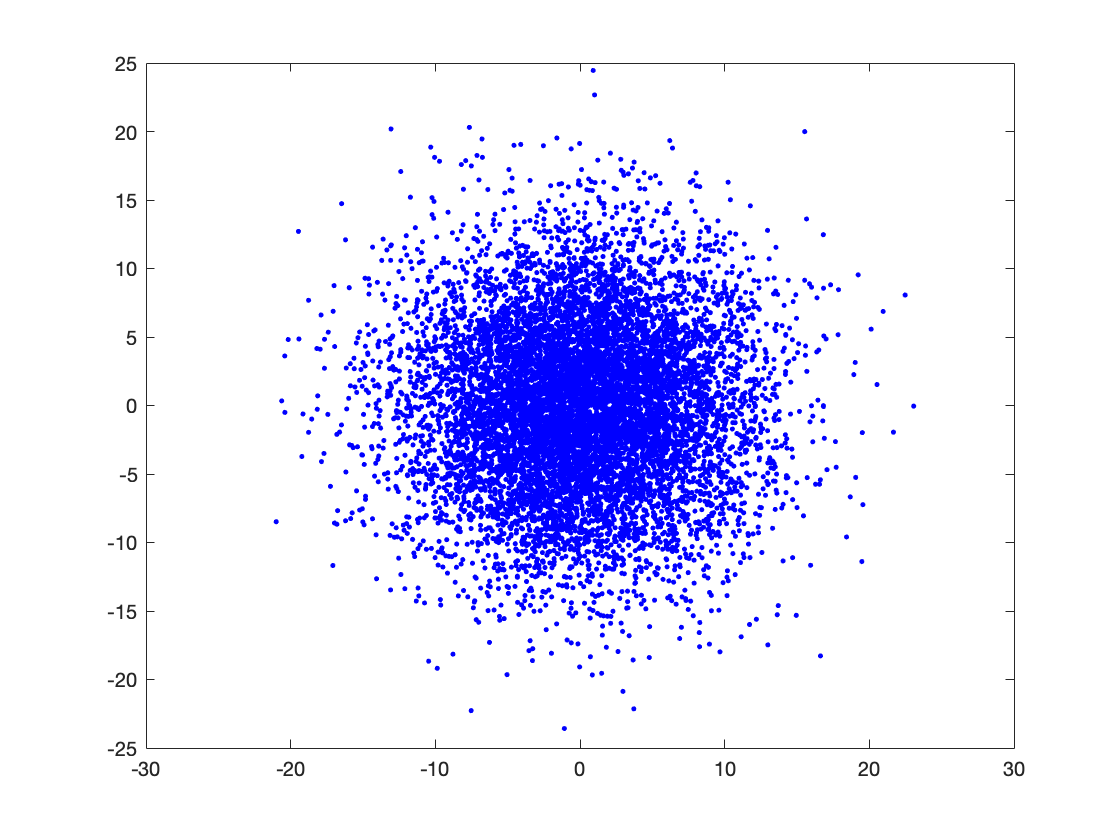} 
\hspace{-0.2cm} & \hspace{-0.2cm}
\includegraphics[width=.2\textwidth]{./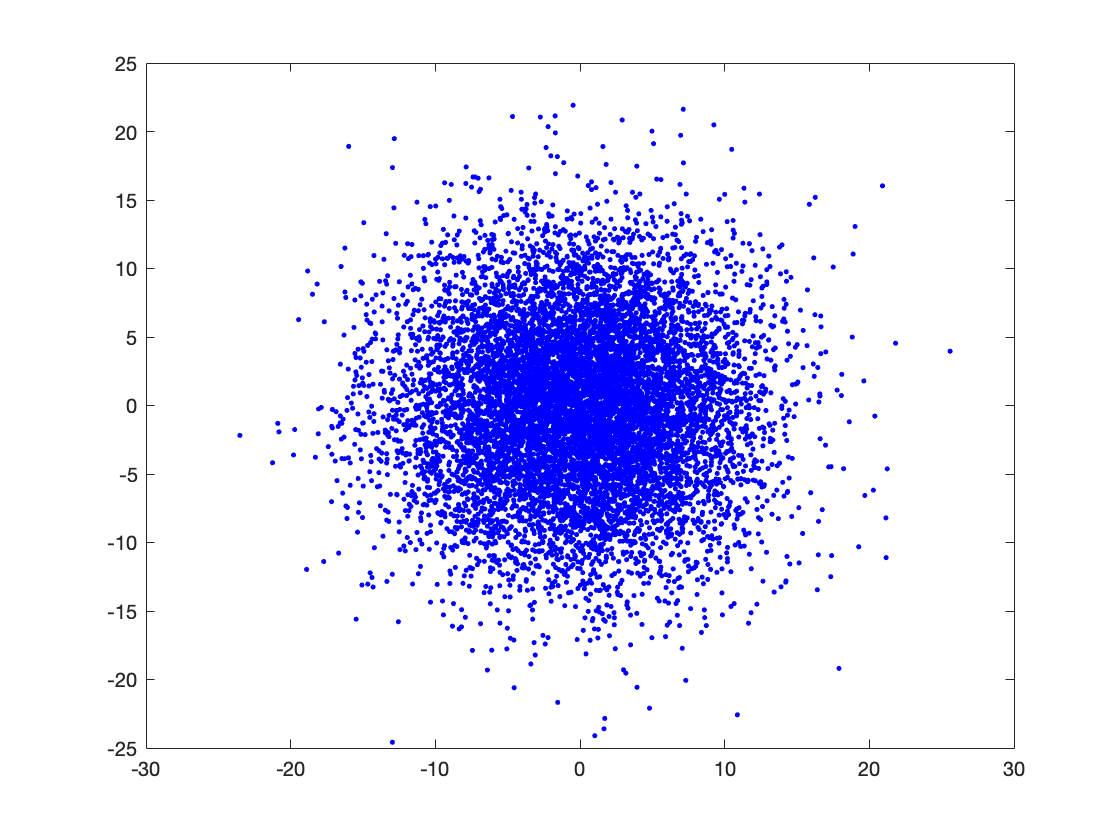} 
\hspace{-0.2cm} & \hspace{-0.2cm} \includegraphics[width=.2\textwidth]{./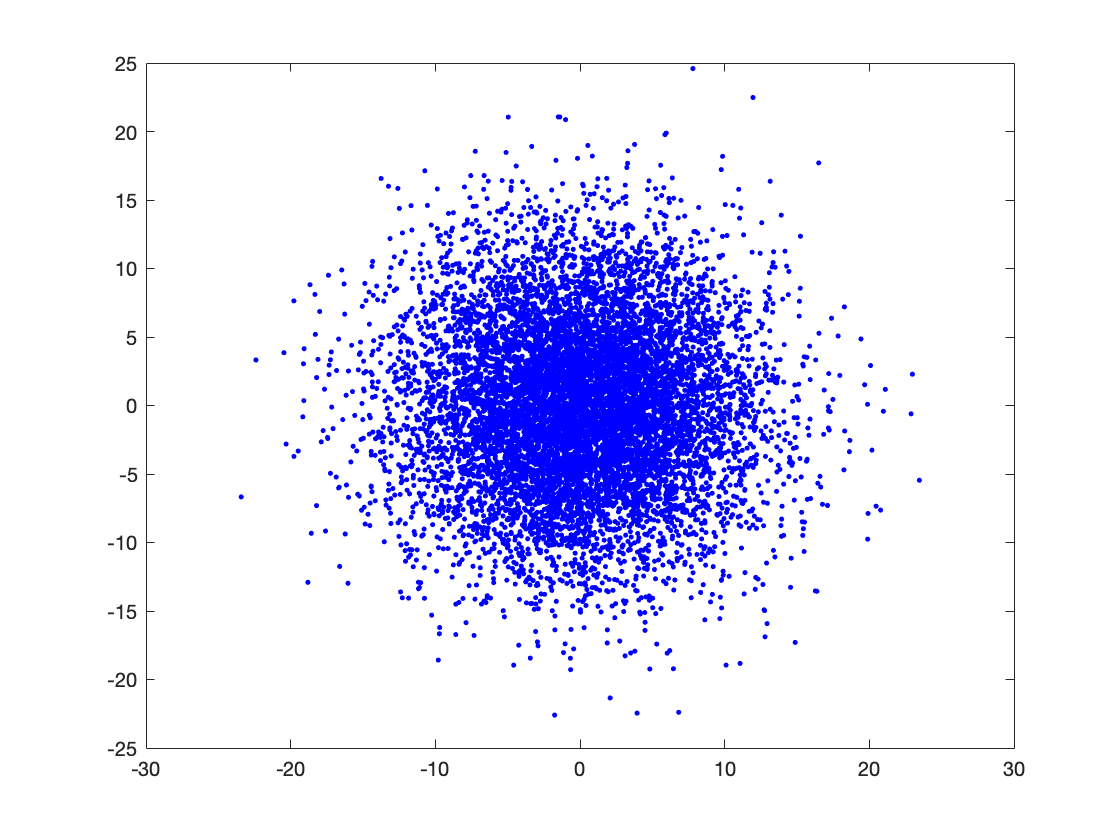}
\hspace{-0.2cm} & \hspace{-0.2cm}
\includegraphics[width=.2\textwidth]{./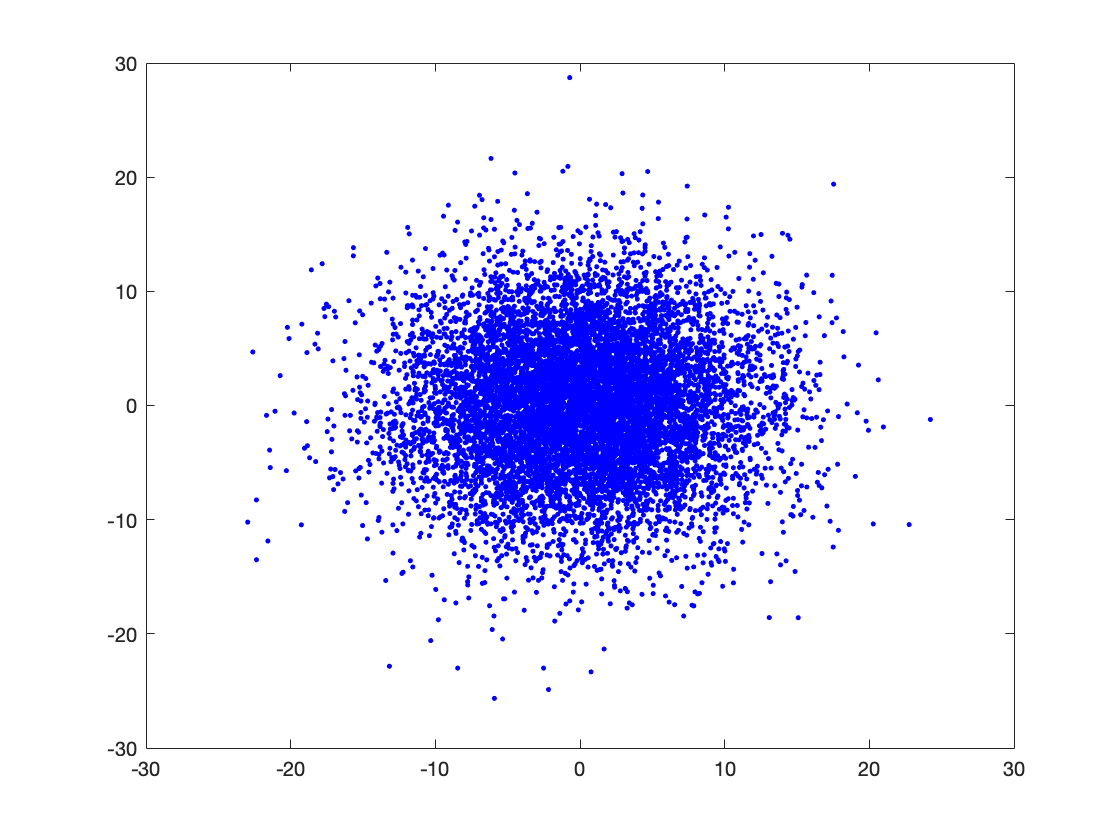}\\
    \footnotesize(a) Iteration 2, Repeat 1 &\footnotesize(b) Iteration 2, Repeat 2  &\footnotesize(c) Iteration 2, Repeat 3 &\footnotesize(d) Iteration 2, Repeat 4\\
    \includegraphics[width=.2\textwidth]{./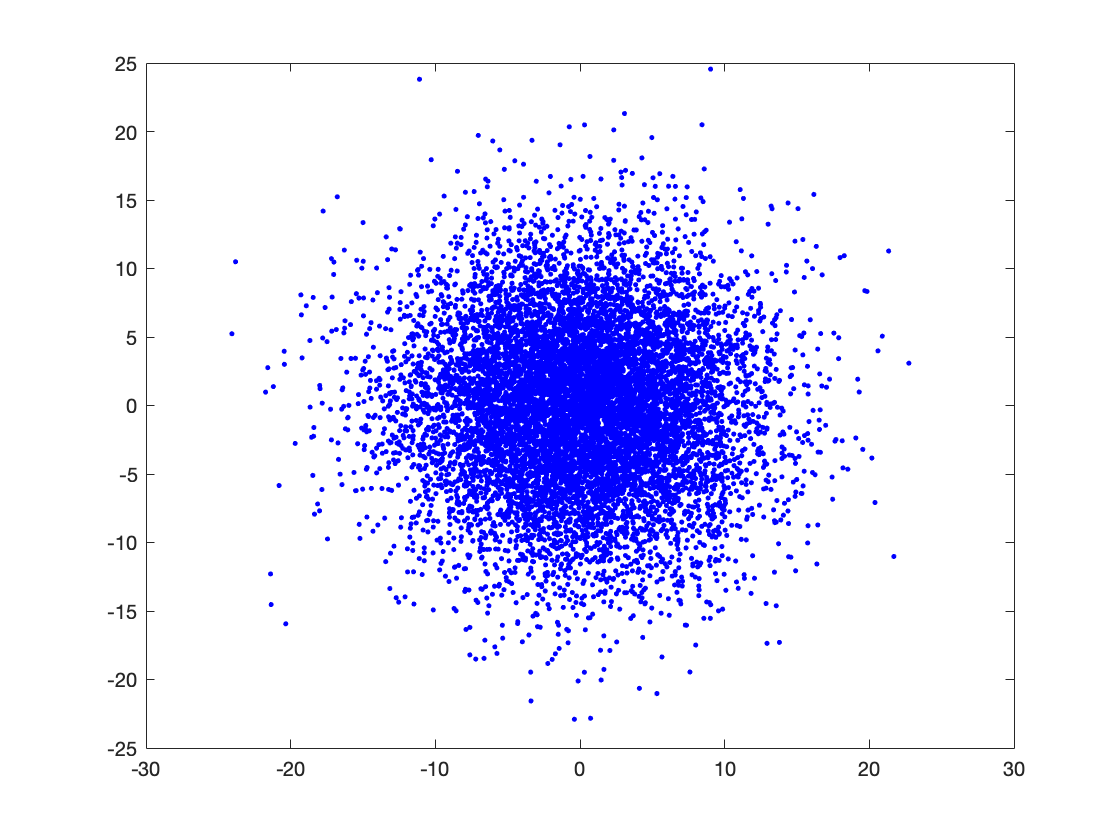} 
\hspace{-0.2cm} & \hspace{-0.2cm}
\includegraphics[width=.2\textwidth]{./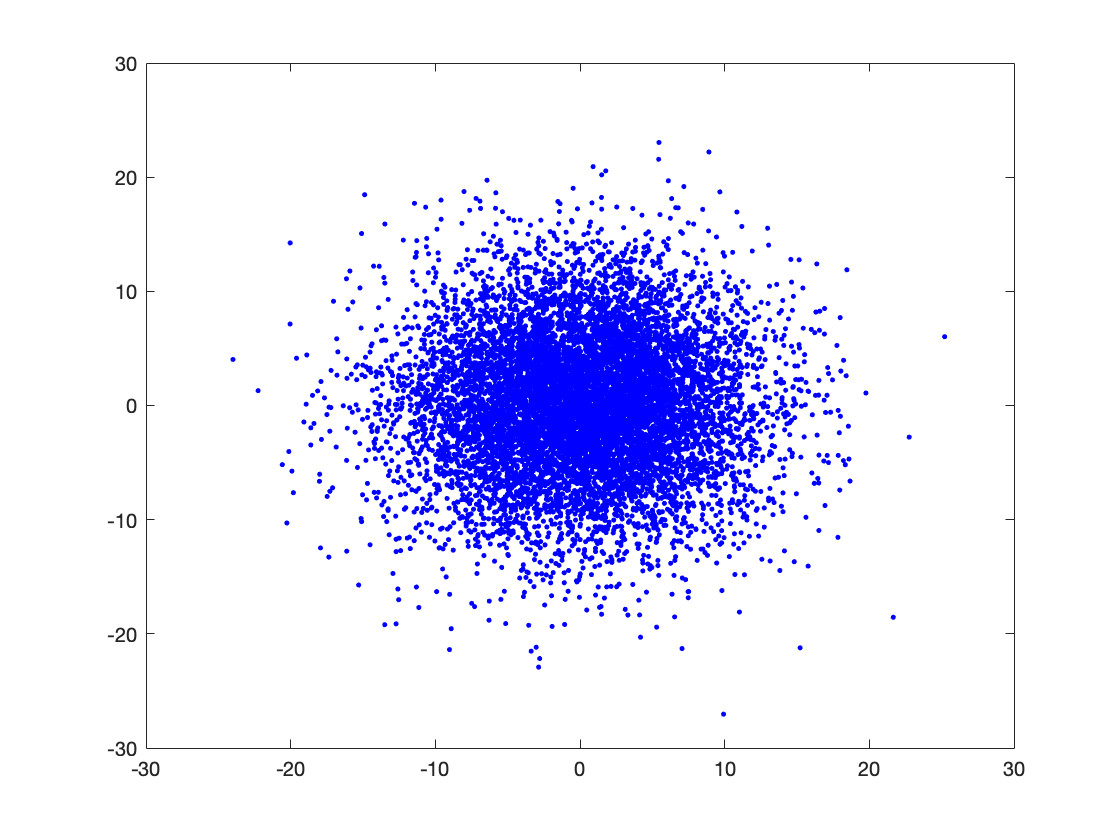} 
\hspace{-0.2cm} & \hspace{-0.2cm} \includegraphics[width=.2\textwidth]{./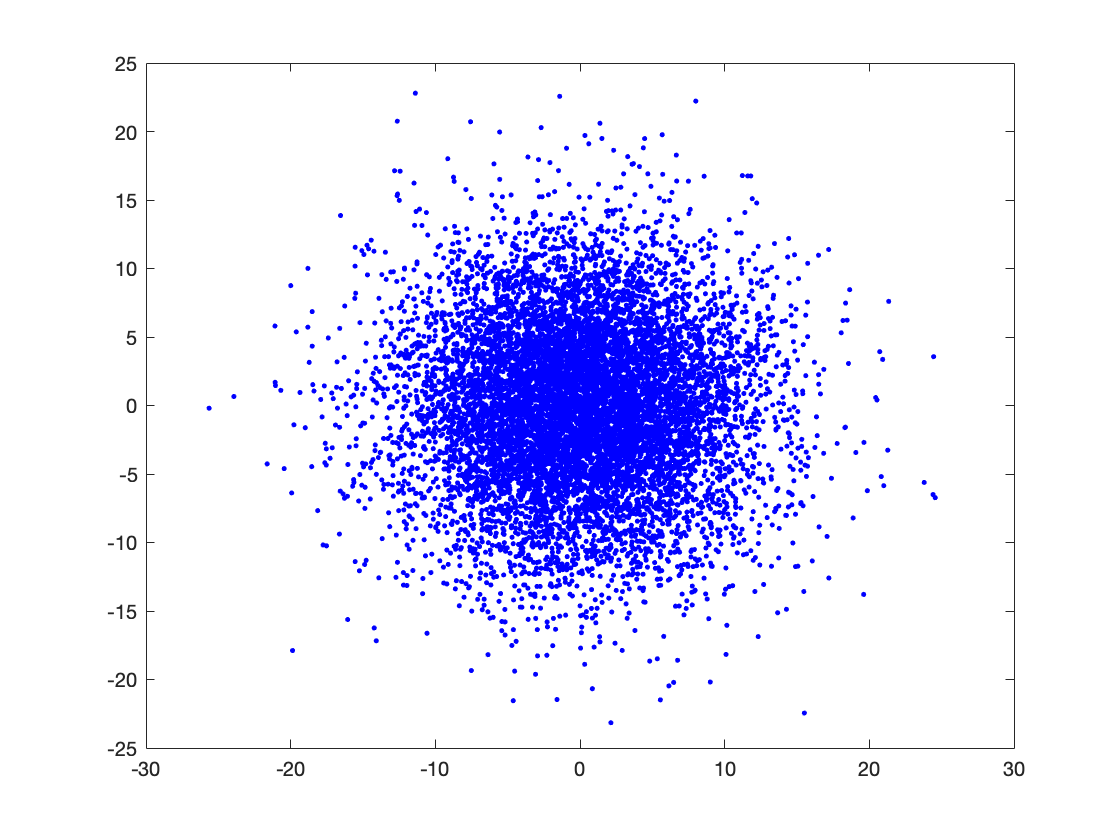}
\hspace{-0.2cm} & \hspace{-0.2cm}
\includegraphics[width=.2\textwidth]{./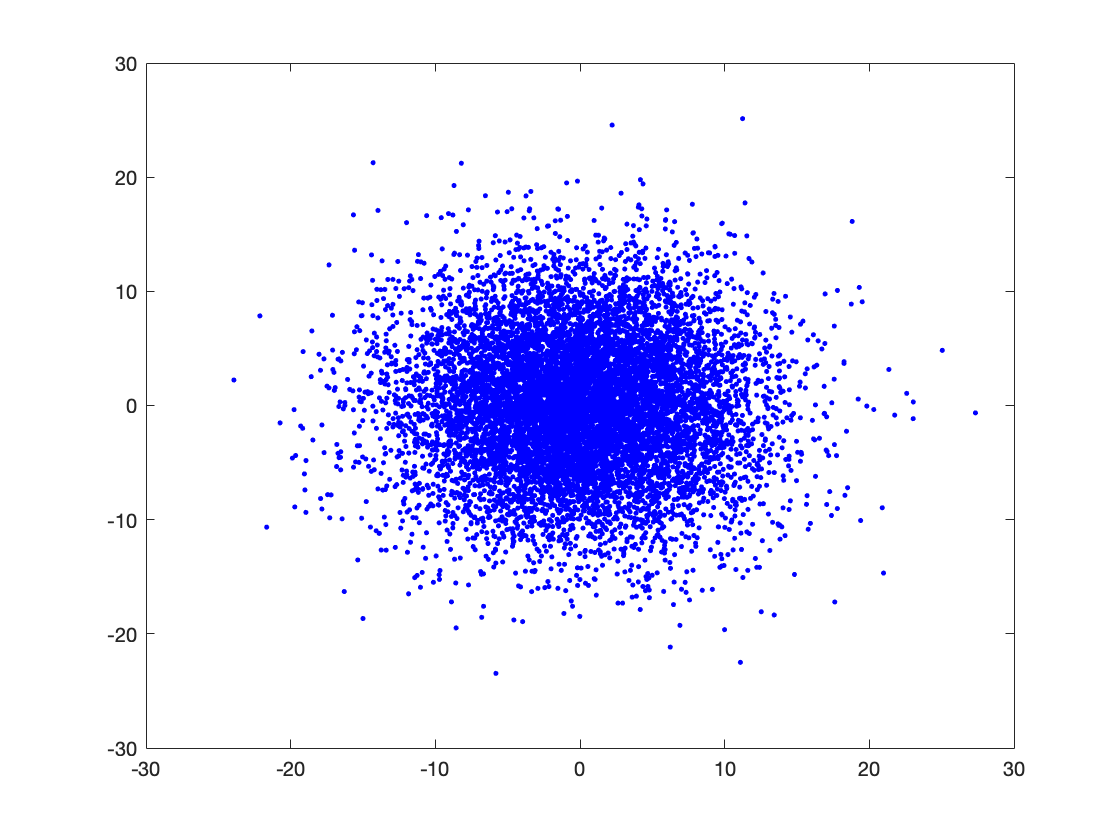}\\
    \footnotesize(e) Iteration 2, Repeat 5 &\footnotesize(f) Iteration 2, Repeat 6  &\footnotesize(g) Iteration 2, Repeat 7 &\footnotesize(h) Iteration 2, Repeat 8\\
    
    \includegraphics[width=.2\textwidth]{./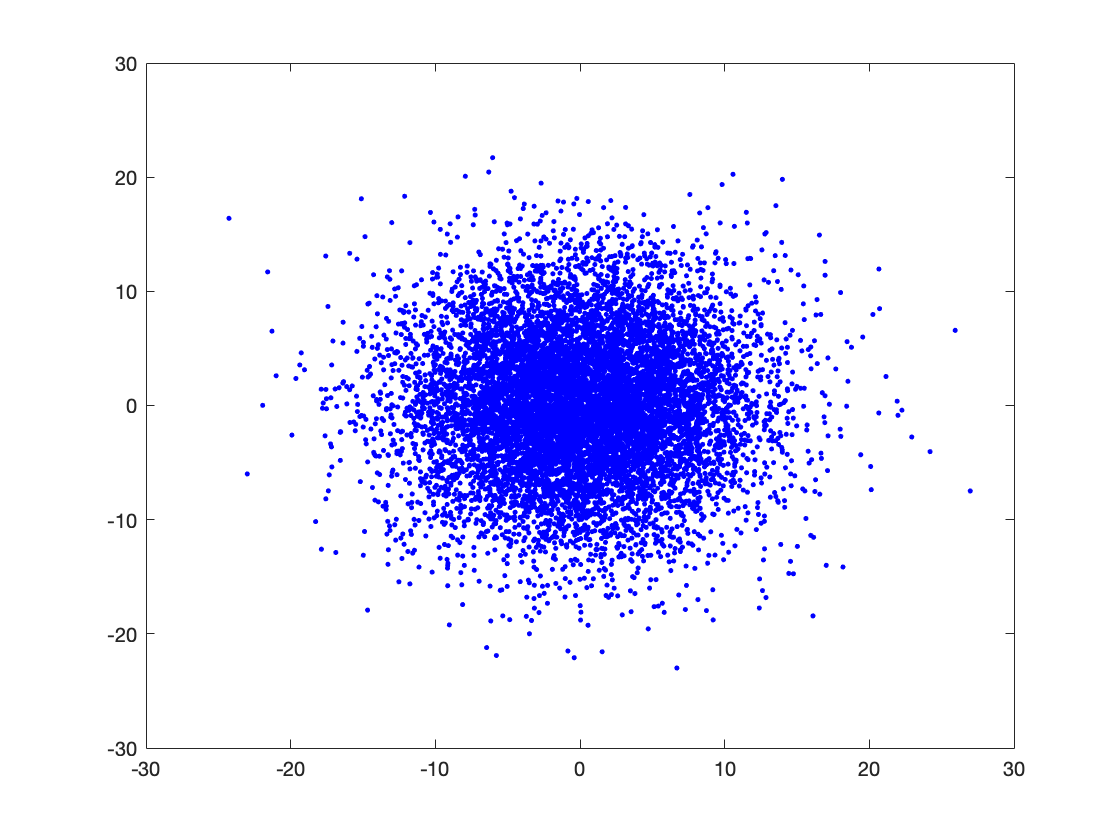} 
\hspace{-0.2cm} & \hspace{-0.2cm}
\includegraphics[width=.2\textwidth]{./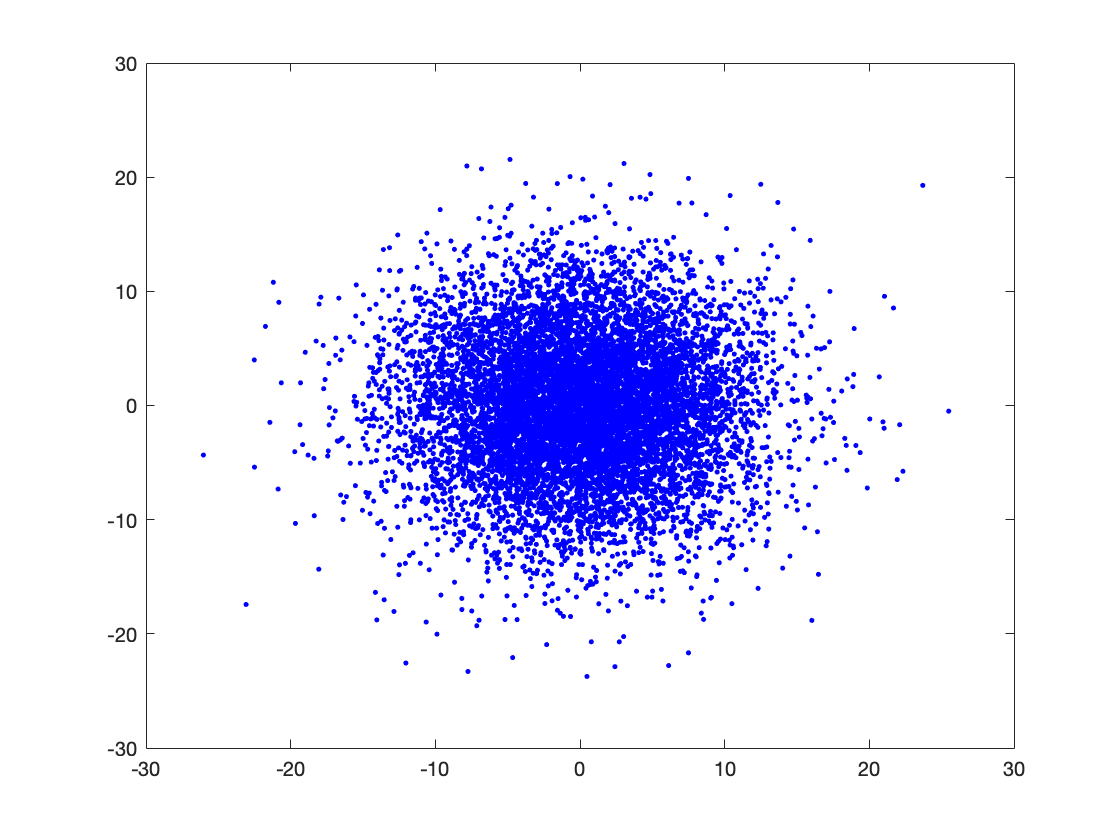} 
\hspace{-0.2cm} & \hspace{-0.2cm} \includegraphics[width=.2\textwidth]{./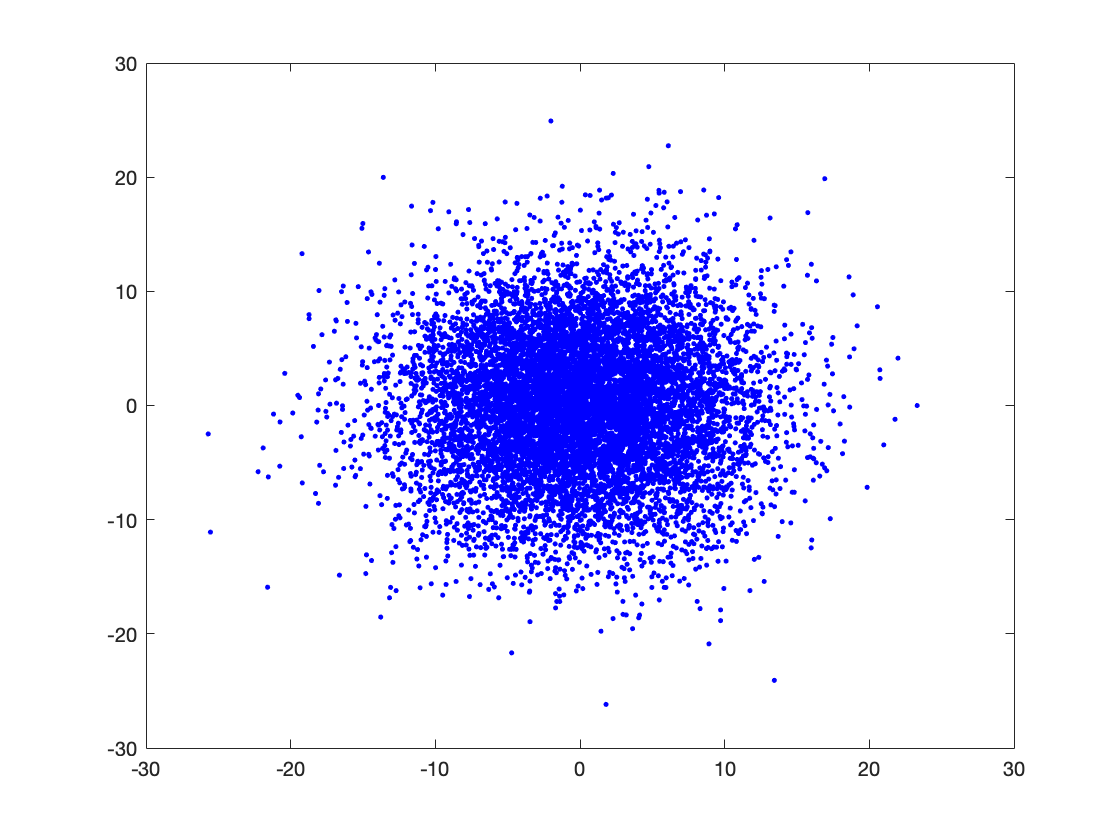}
\hspace{-0.2cm} & \hspace{-0.2cm}
\includegraphics[width=.2\textwidth]{./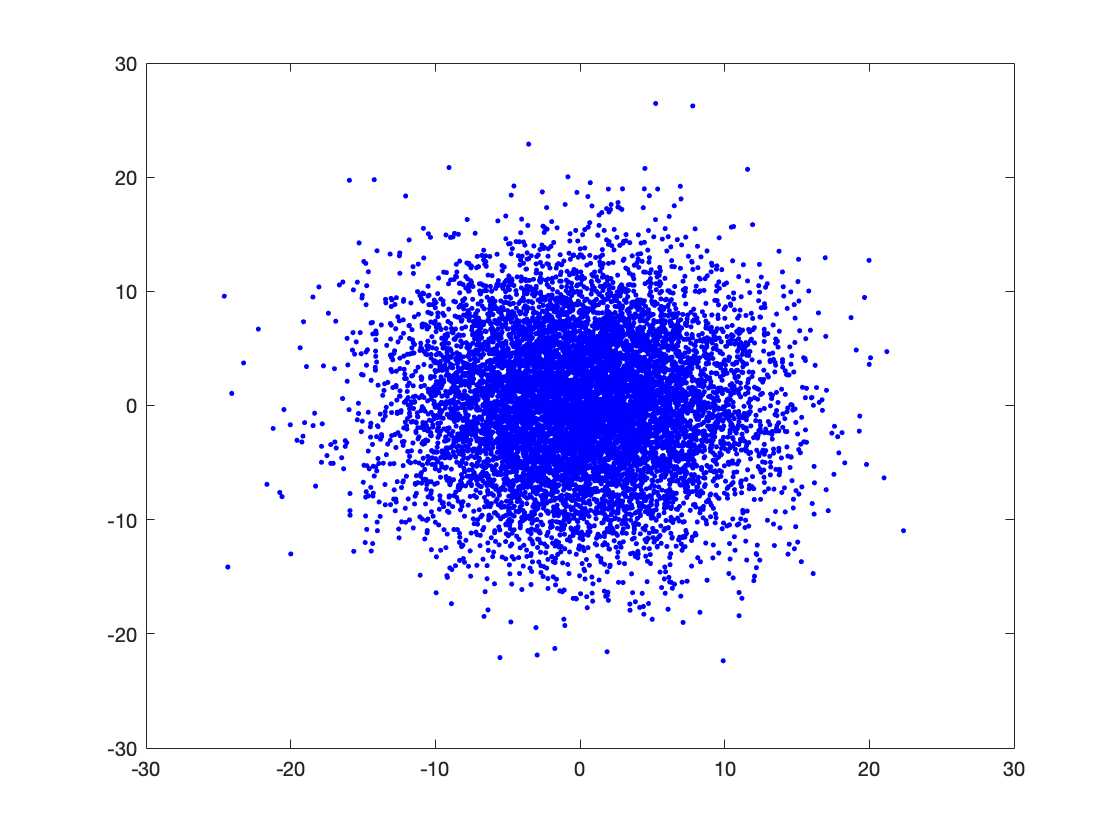}\\
    \footnotesize(i) Iteration 20, Repeat 1 &\footnotesize(j) Iteration 20, Repeat 2  &\footnotesize(k) Iteration 20, Repeat 3 &\footnotesize(l) Iteration 20, Repeat 4\\
    \includegraphics[width=.2\textwidth]{./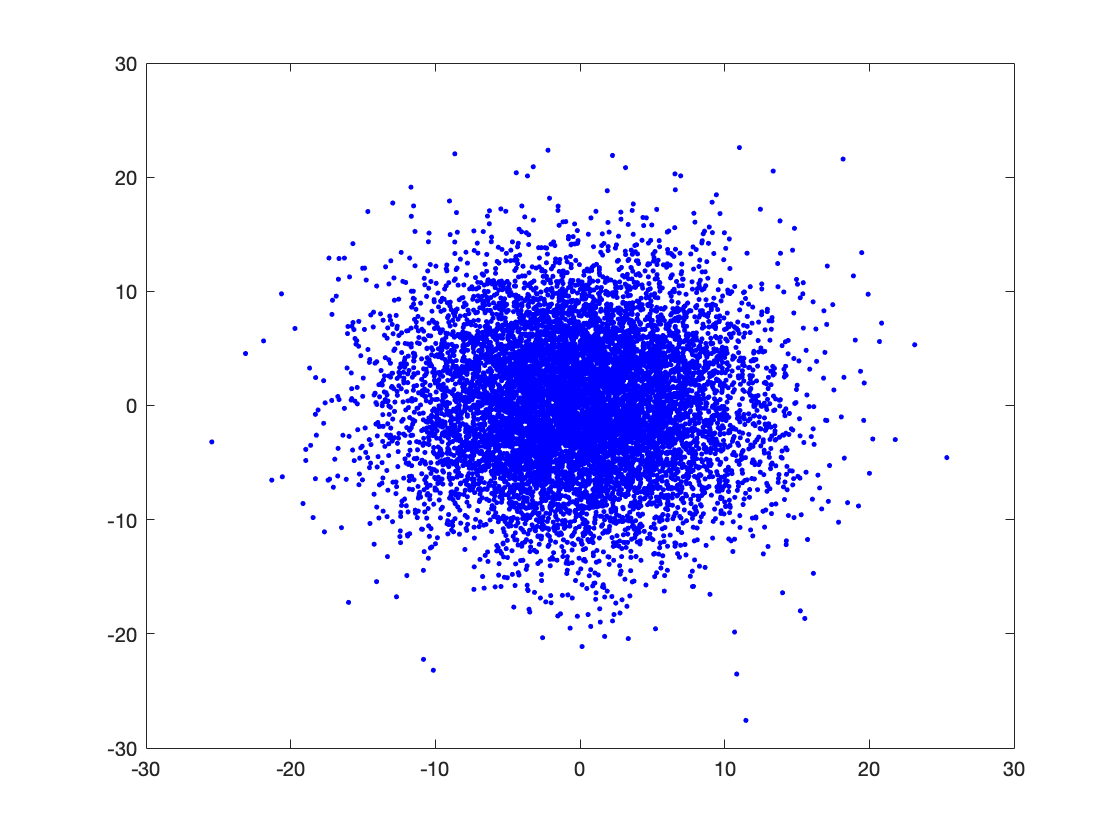} 
\hspace{-0.2cm} & \hspace{-0.2cm}
\includegraphics[width=.2\textwidth]{./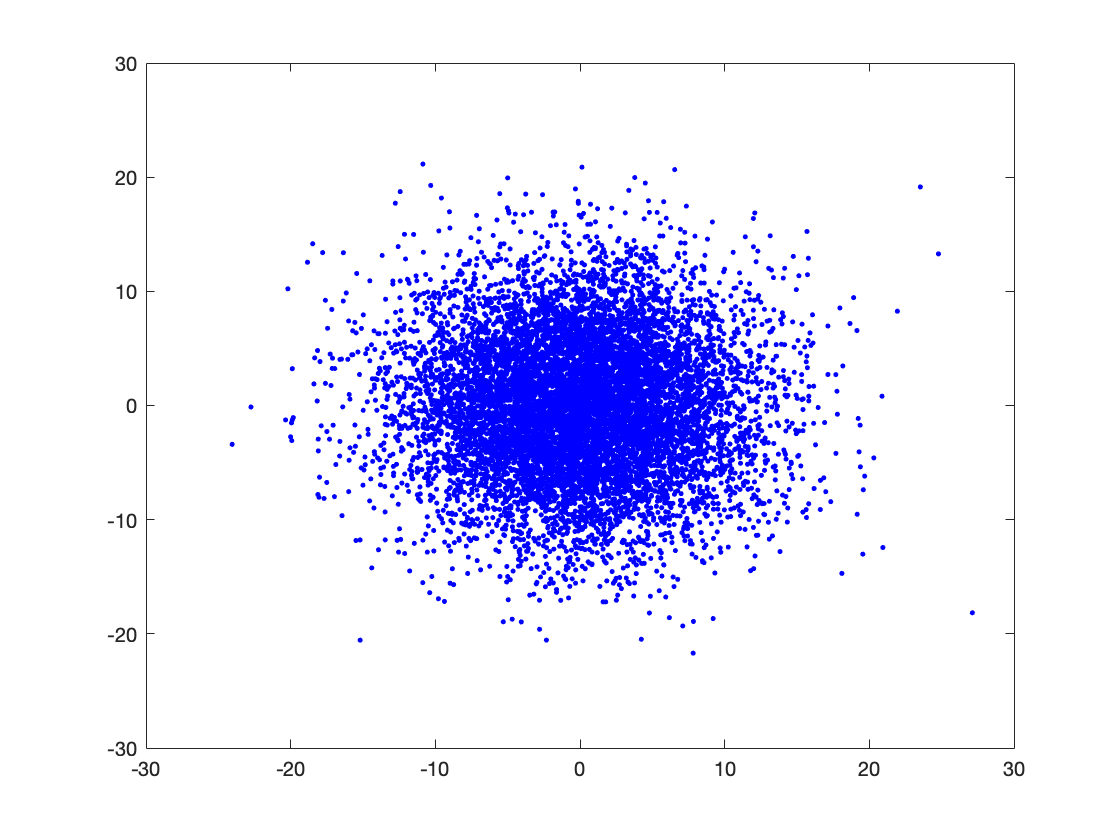} 
\hspace{-0.2cm} & \hspace{-0.2cm} \includegraphics[width=.2\textwidth]{./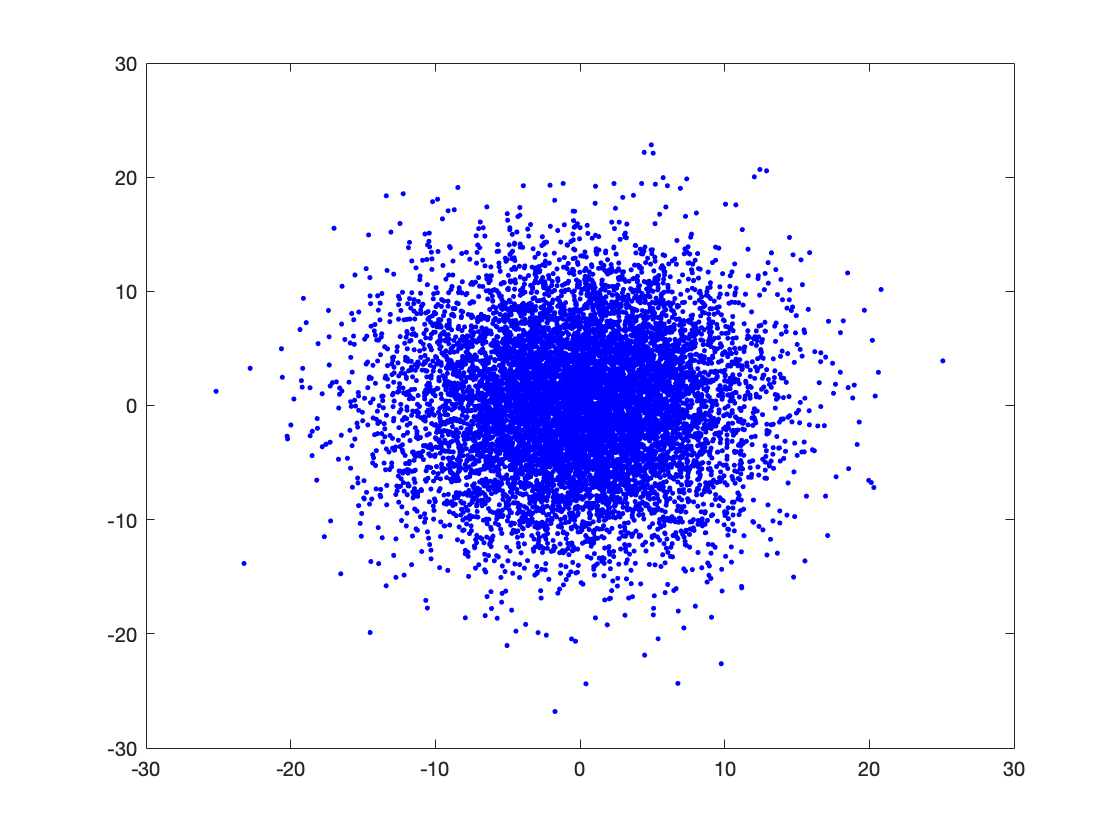}
\hspace{-0.2cm} & \hspace{-0.2cm}
\includegraphics[width=.2\textwidth]{./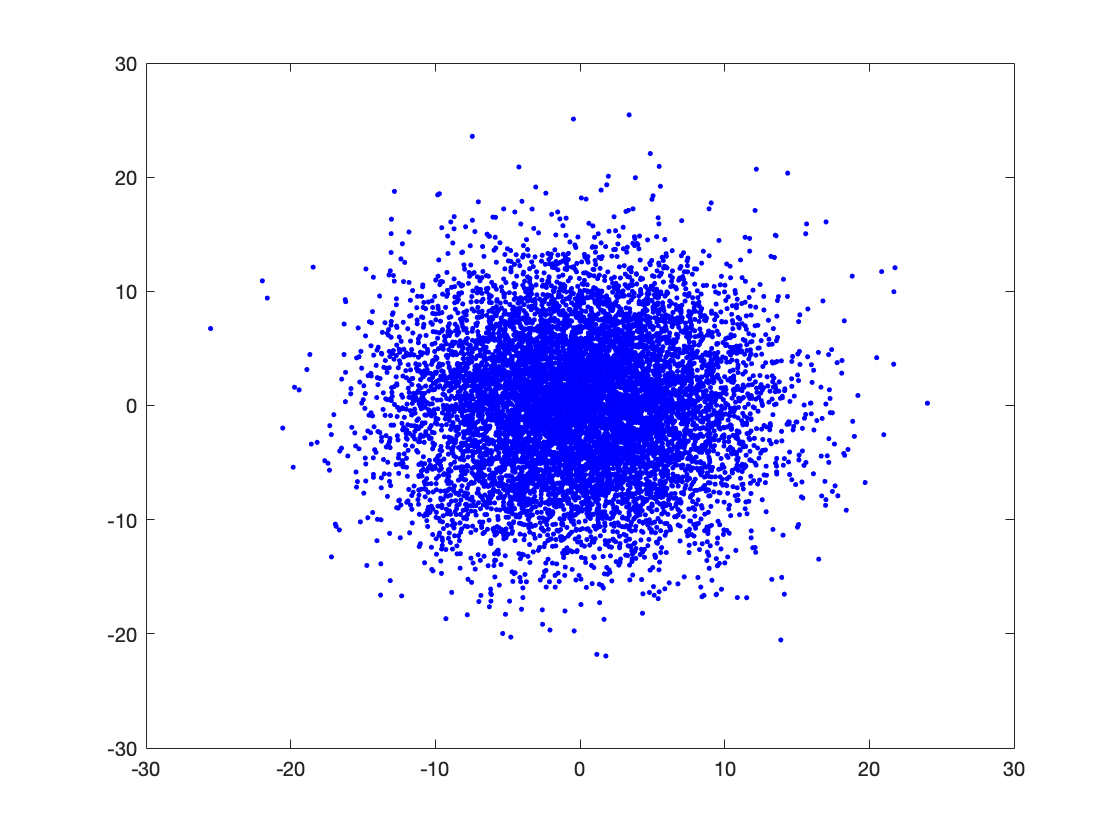}\\
    \footnotesize(m) Iteration 20, Repeat 5 &\footnotesize(n) Iteration 20, Repeat 6  &\footnotesize(o) Iteration 20, Repeat 7 &\footnotesize(p) Iteration 20, Repeat 8\\
    
    \includegraphics[width=.2\textwidth]{./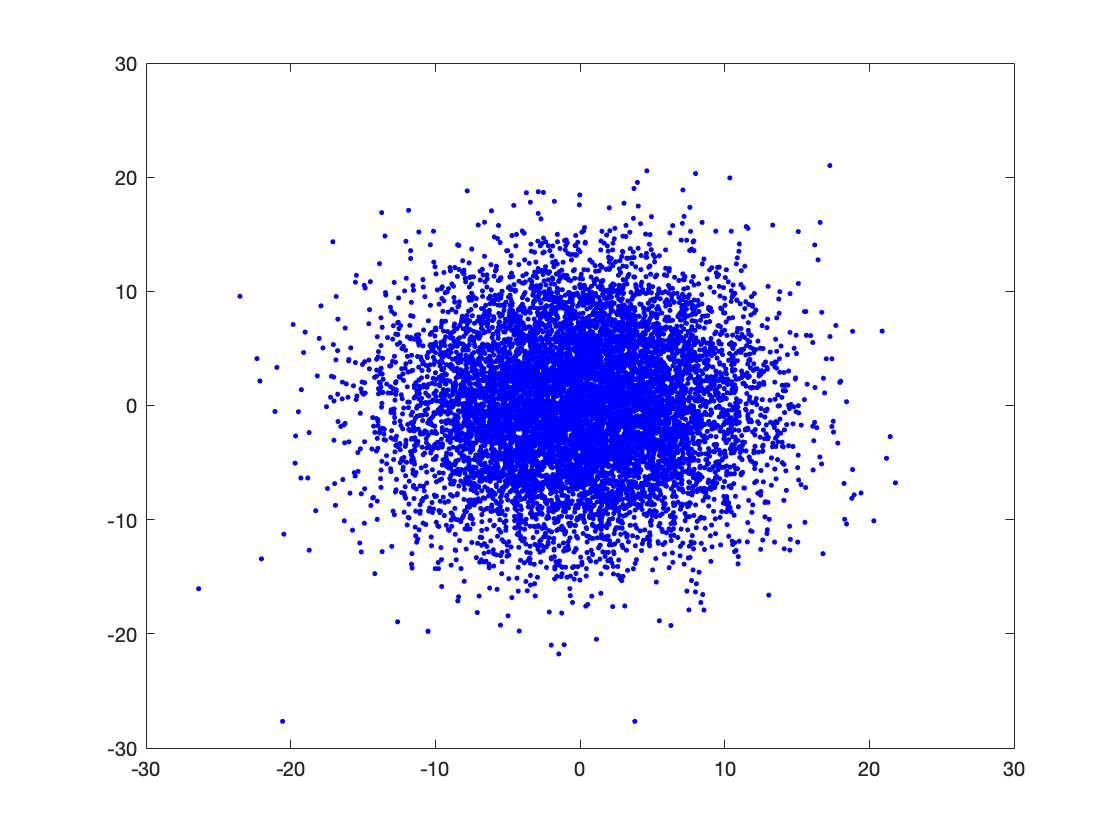} 
\hspace{-0.2cm} & \hspace{-0.2cm}
\includegraphics[width=.2\textwidth]{./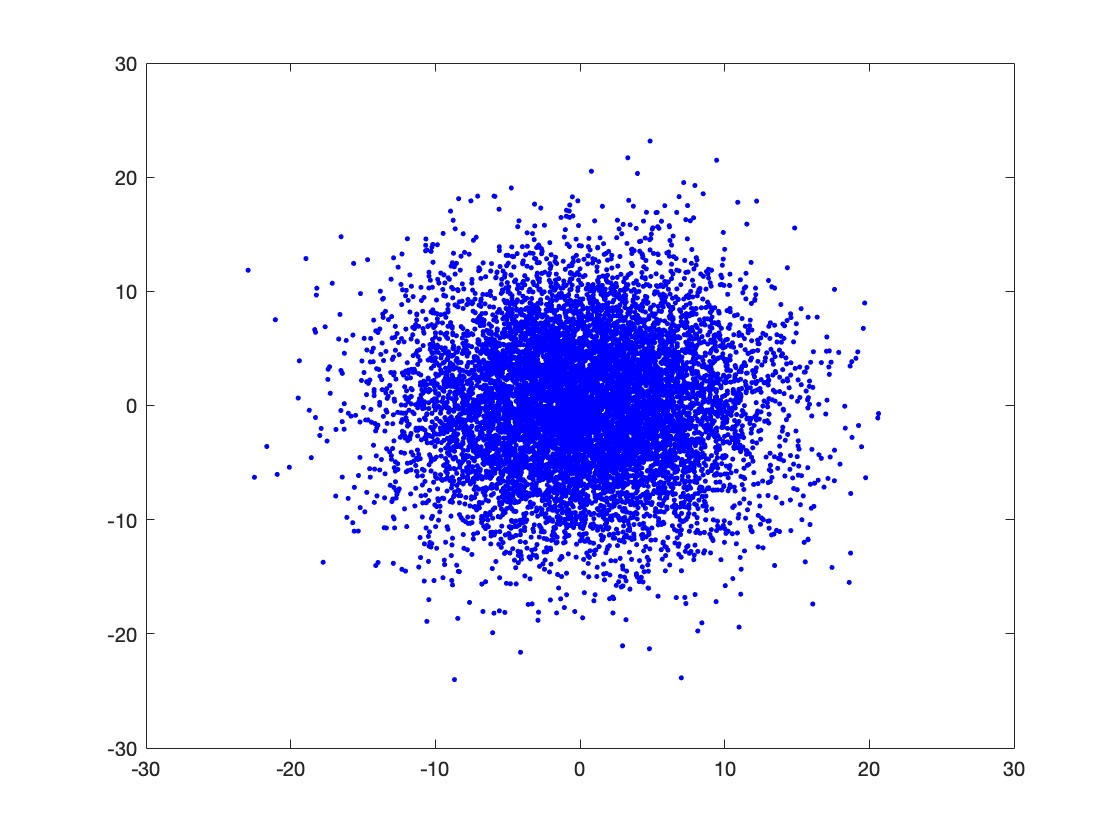} 
\hspace{-0.2cm} & \hspace{-0.2cm} \includegraphics[width=.2\textwidth]{./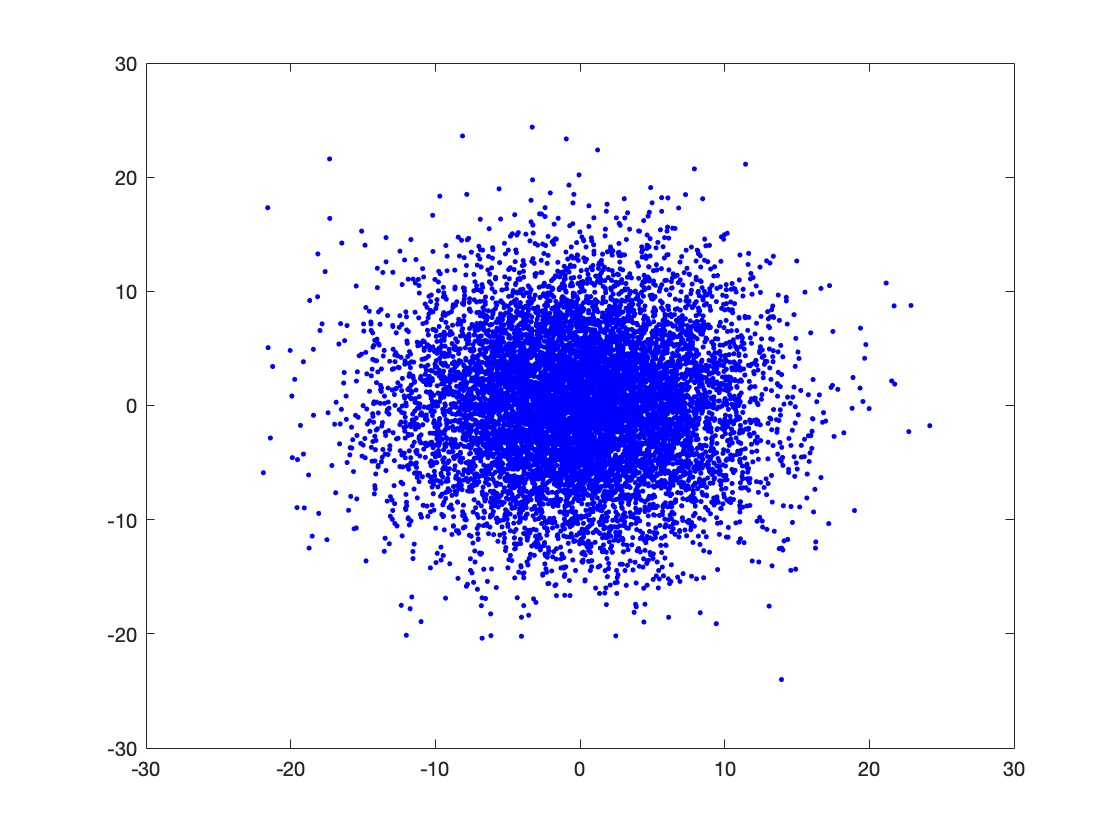}
\hspace{-0.2cm} & \hspace{-0.2cm}
\includegraphics[width=.2\textwidth]{./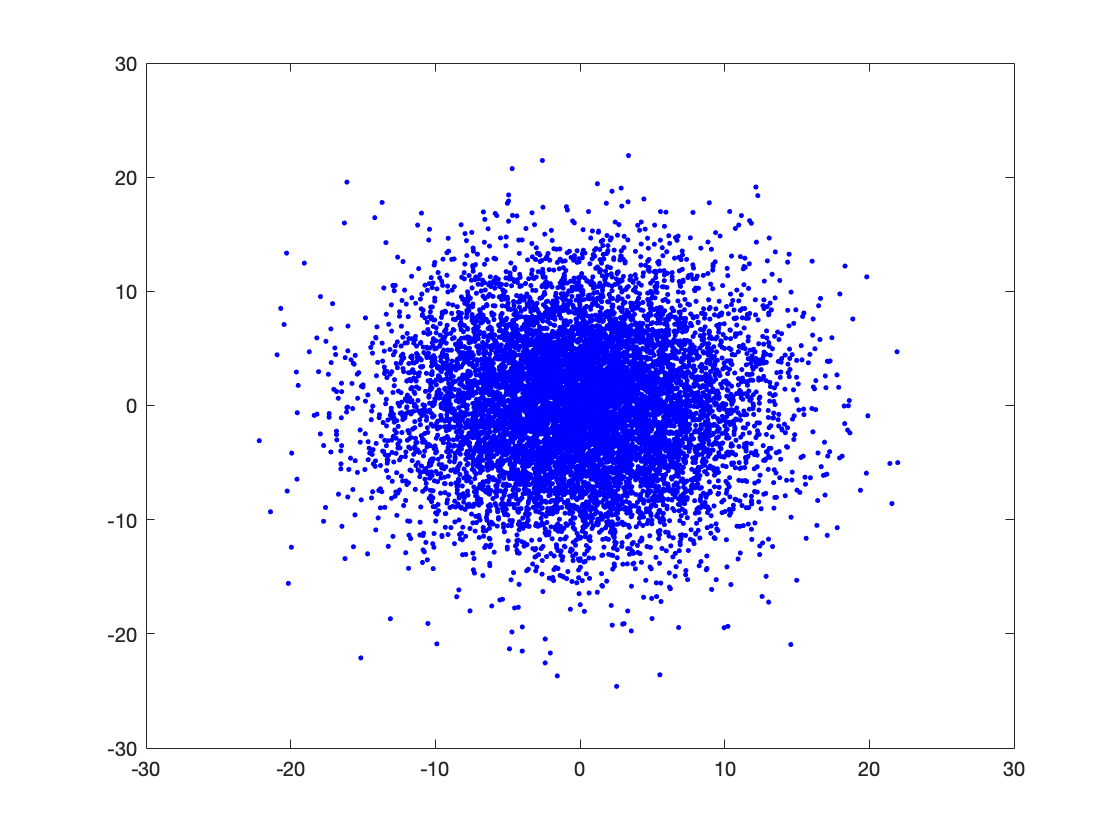}\\
    \footnotesize(q) Iteration 100, Repeat 1 &\footnotesize(r) Iteration 100, Repeat 2  &\footnotesize(s) Iteration 100, Repeat 3 &\footnotesize(t) Iteration 100, Repeat 4\\
    \includegraphics[width=.2\textwidth]{./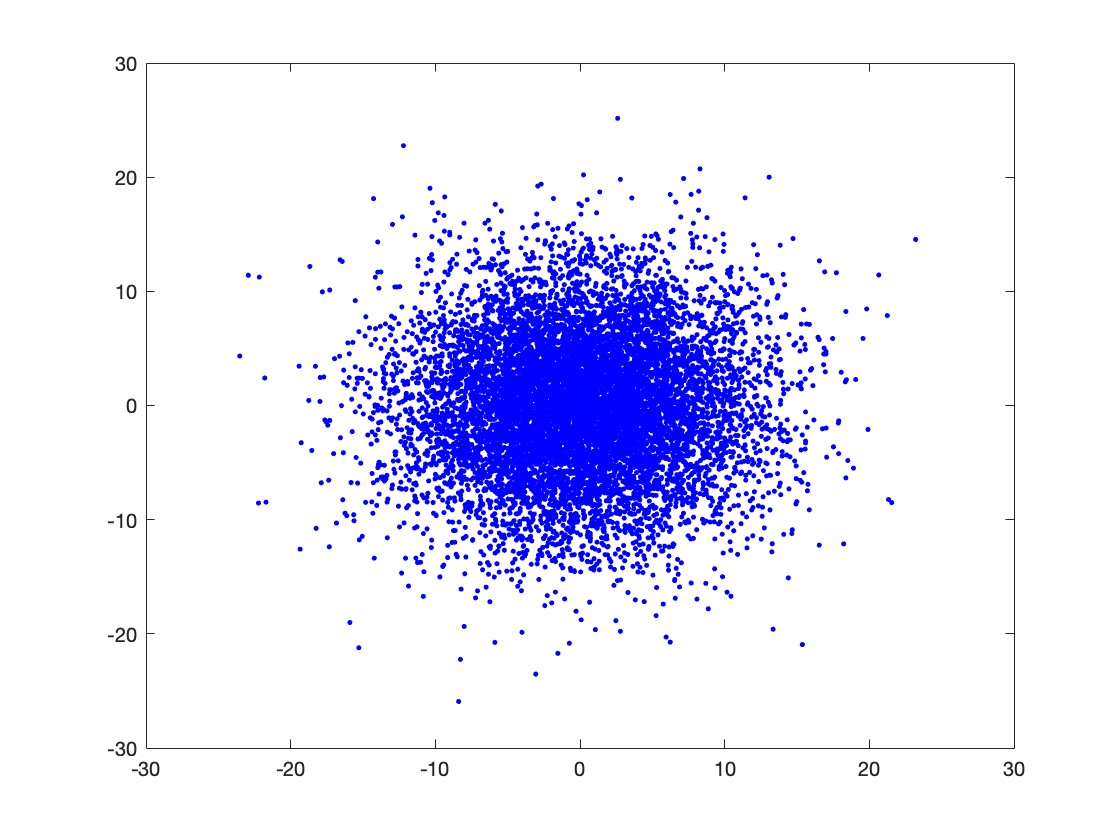} 
\hspace{-0.2cm} & \hspace{-0.2cm}
\includegraphics[width=.2\textwidth]{./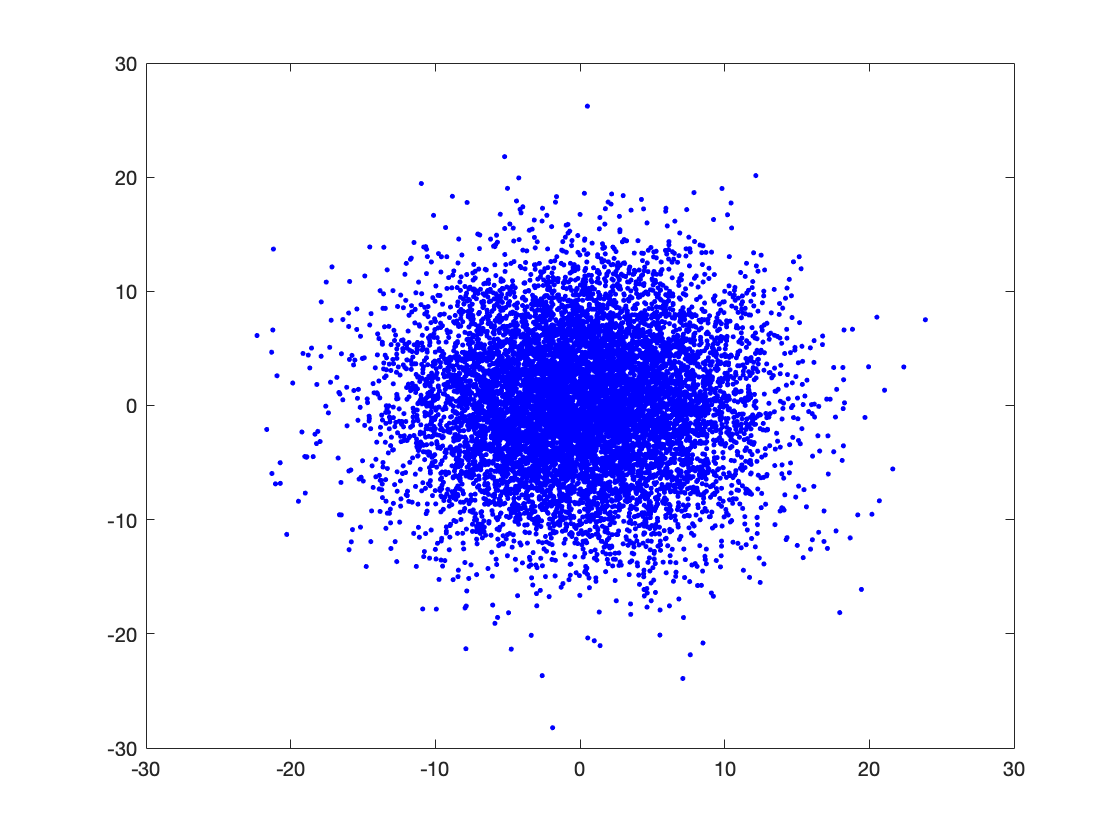} 
\hspace{-0.2cm} & \hspace{-0.2cm} \includegraphics[width=.2\textwidth]{./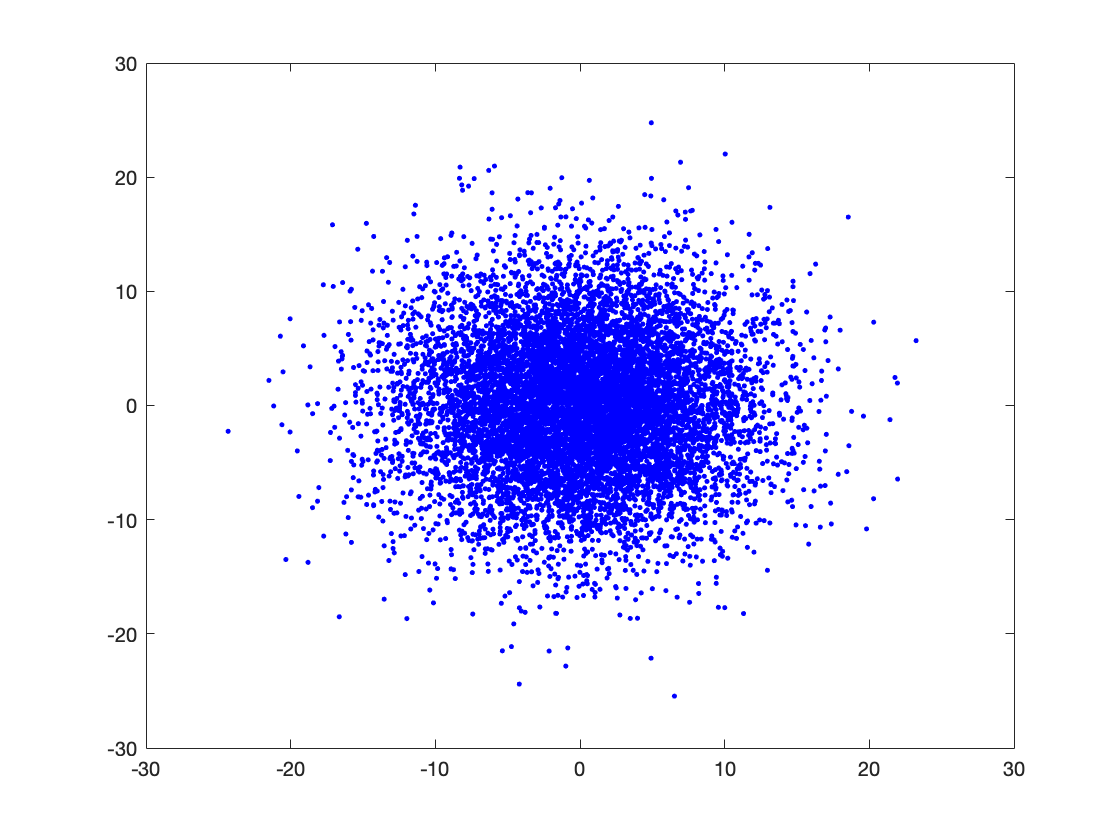}
\hspace{-0.2cm} & \hspace{-0.2cm}
\includegraphics[width=.2\textwidth]{./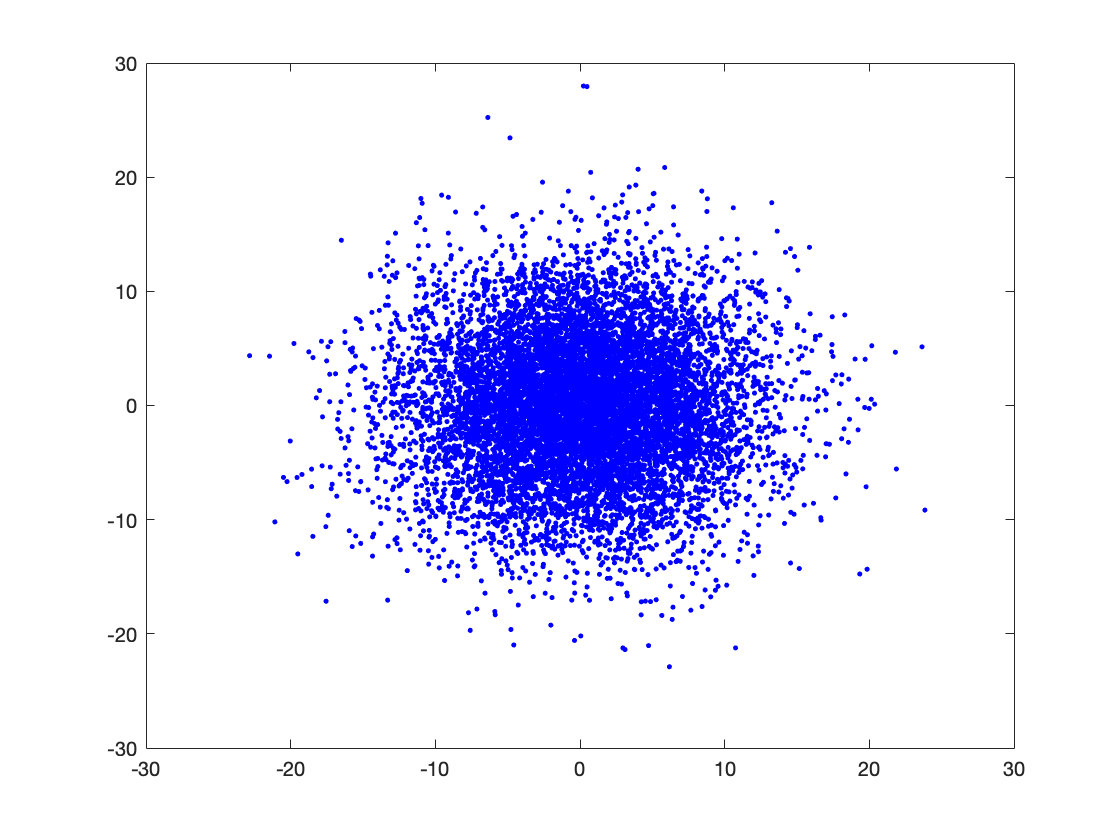}\\
    \footnotesize(u) Iteration 100, Repeat 5 &\footnotesize(v) Iteration 100, Repeat 6  &\footnotesize(w) Iteration 100, Repeat 7 &\footnotesize(x) Iteration 100, Repeat 8\\
    \end{tabular}
        % \vspace{-0.2cm}
    \caption{Distribution of $u_t\in \tilde{A}_t$ projected using 24 random matrices for iterations 2, 20 and 100 of LAZOa in LQR example.}
    \label{fig:symm_appendix2}
 \vspace{-0.2cm}
\end{figure*}

\begin{figure*}[t]
    \centering
    \begin{tabular}{cccc}
\includegraphics[width=.2\textwidth]{./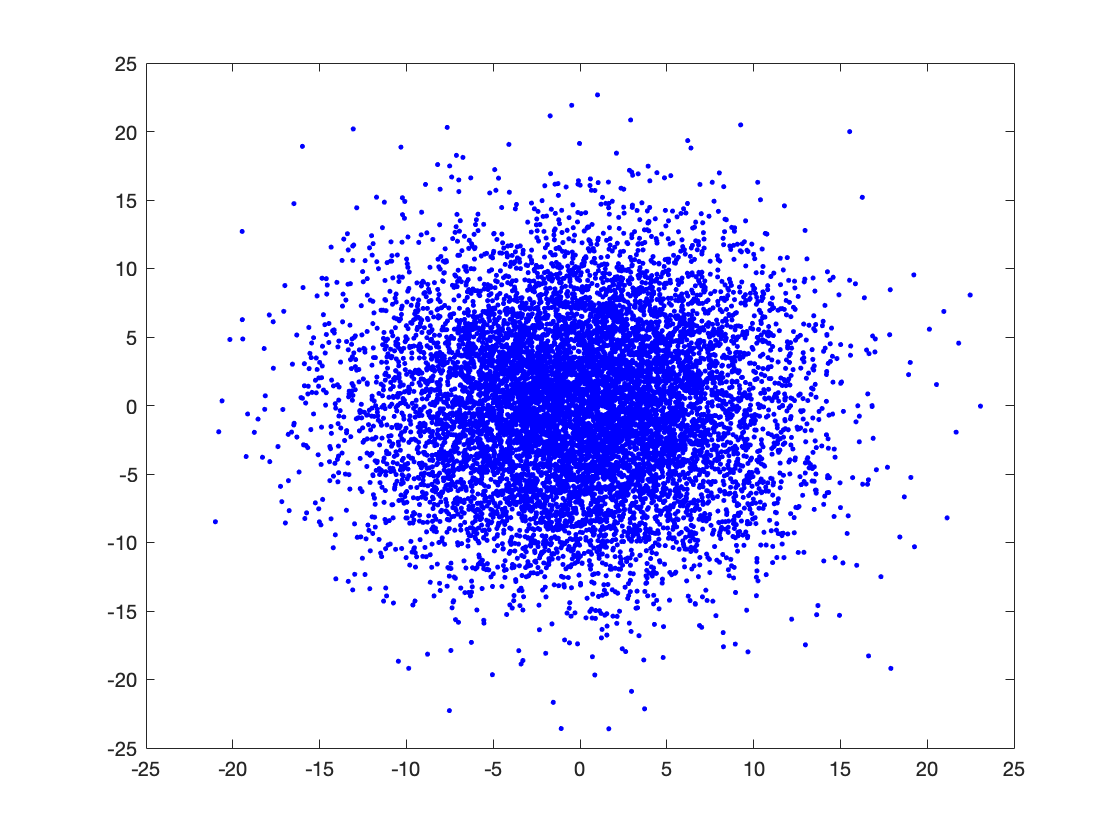} 
\hspace{-0.2cm} & \hspace{-0.2cm}
\includegraphics[width=.2\textwidth]{./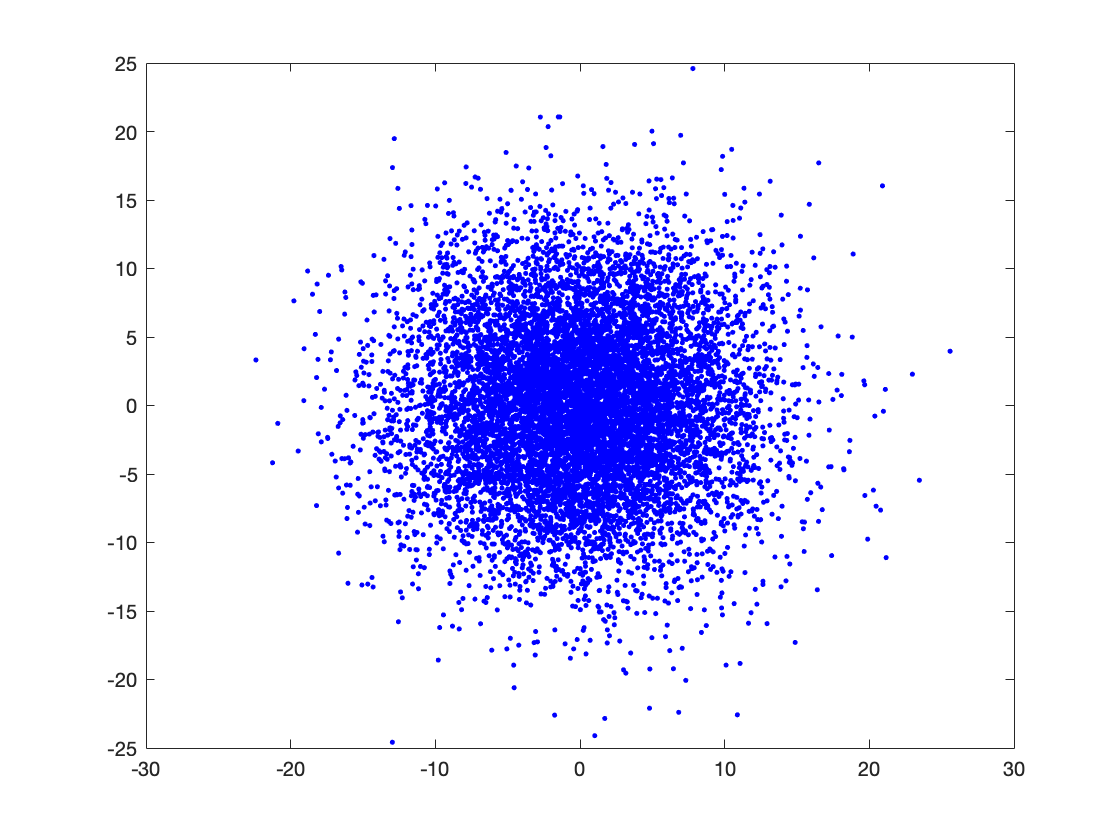} 
\hspace{-0.2cm} & \hspace{-0.2cm} \includegraphics[width=.2\textwidth]{./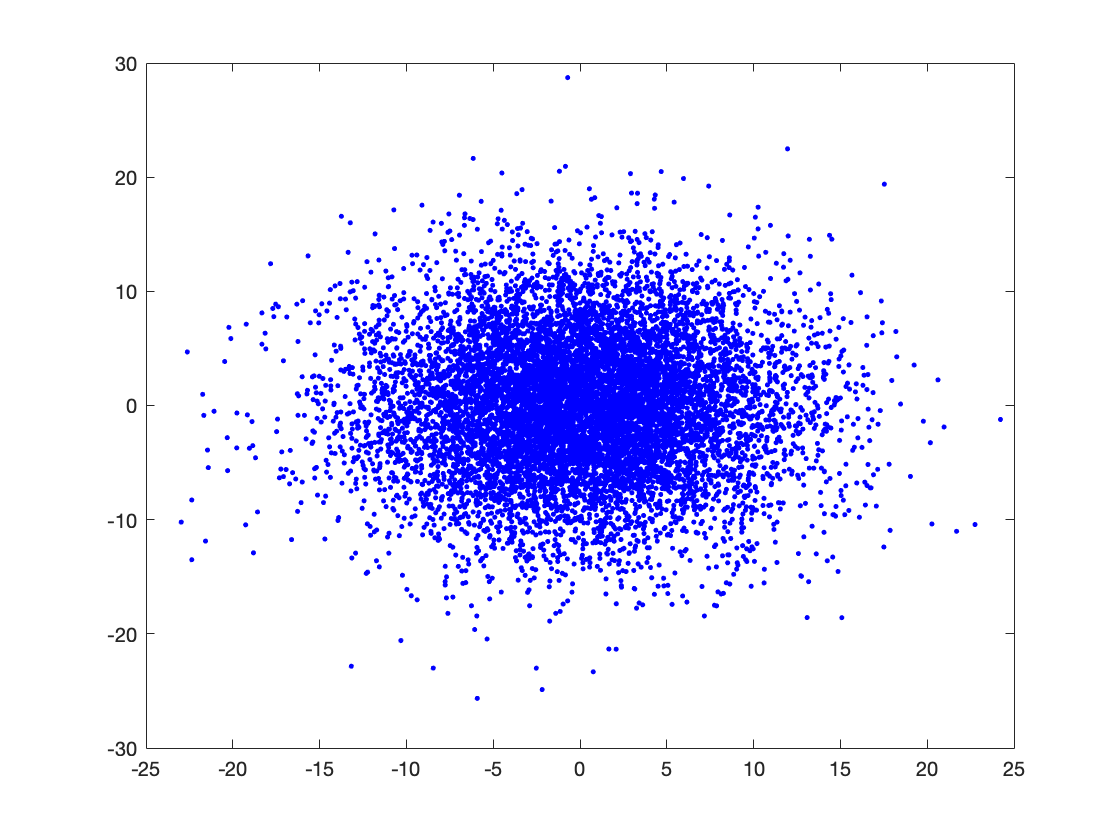}
\hspace{-0.2cm} & \hspace{-0.2cm}
\includegraphics[width=.2\textwidth]{./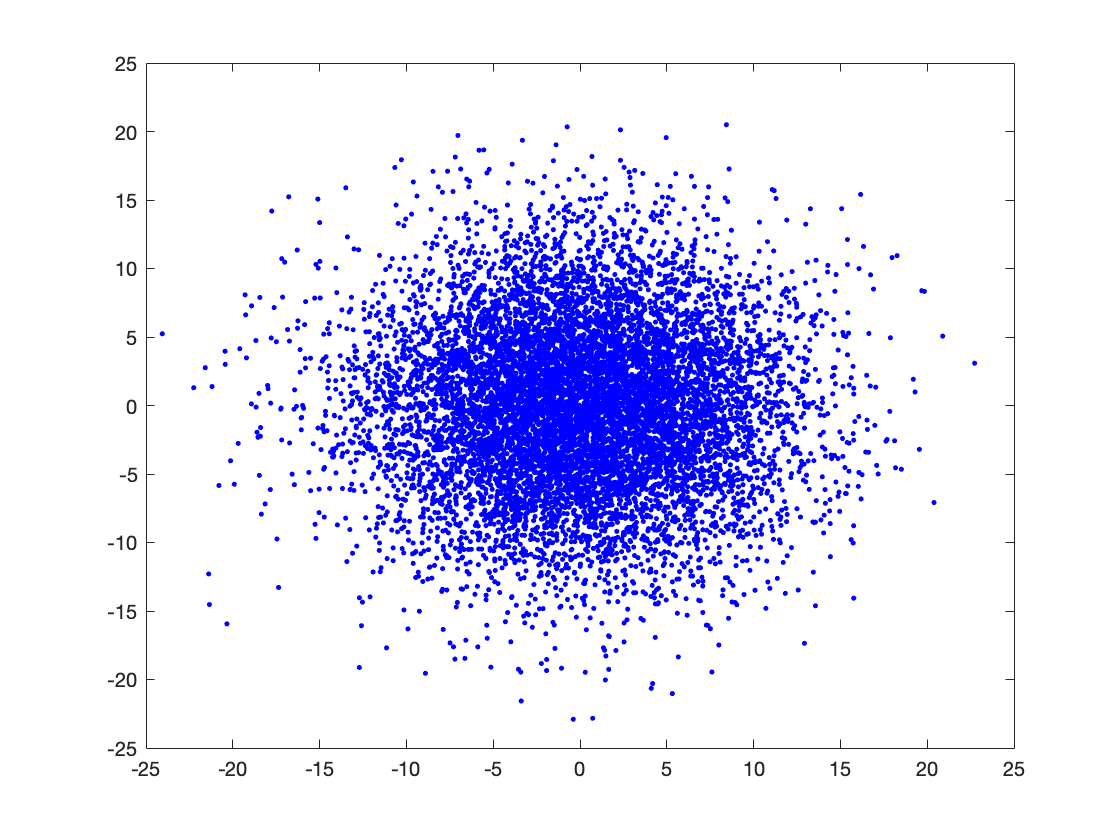}\\
    \footnotesize(a) Iteration 2, Repeat 1 &\footnotesize(b) Iteration 2, Repeat 2  &\footnotesize(c) Iteration 2, Repeat 3 &\footnotesize(d) Iteration 2, Repeat 4\\
    \includegraphics[width=.2\textwidth]{./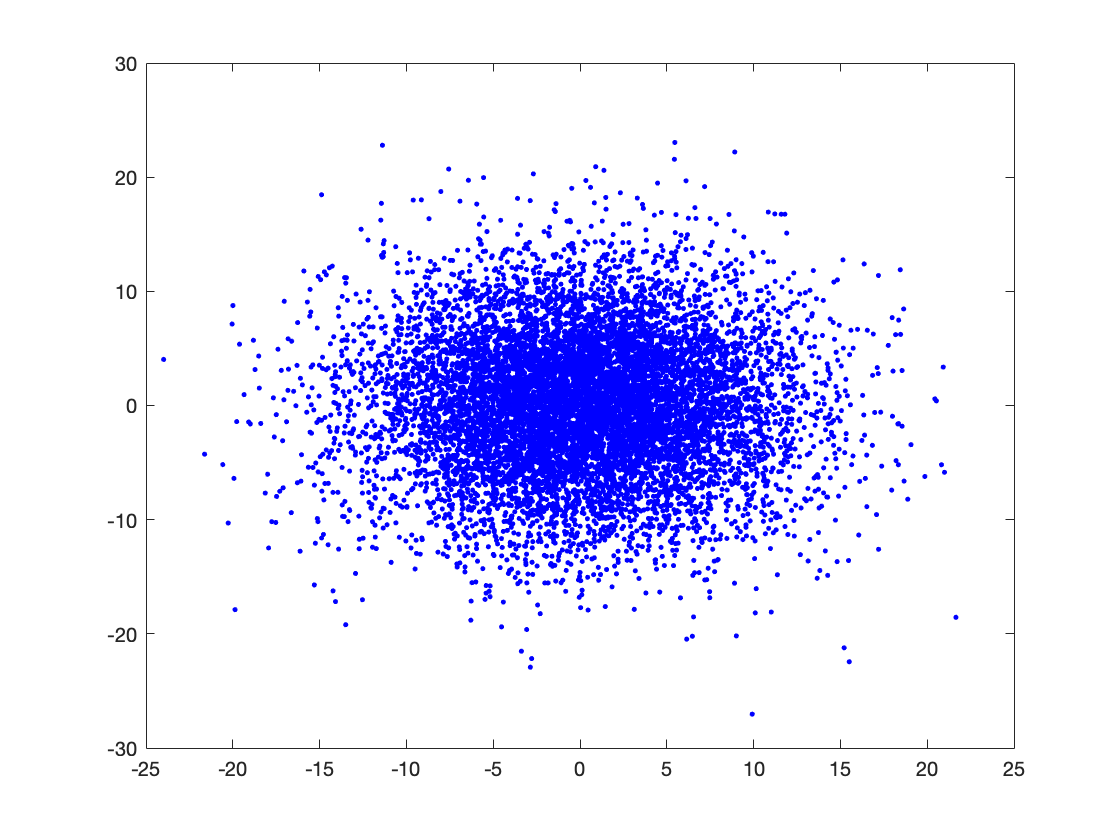} 
\hspace{-0.2cm} & \hspace{-0.2cm}
\includegraphics[width=.2\textwidth]{./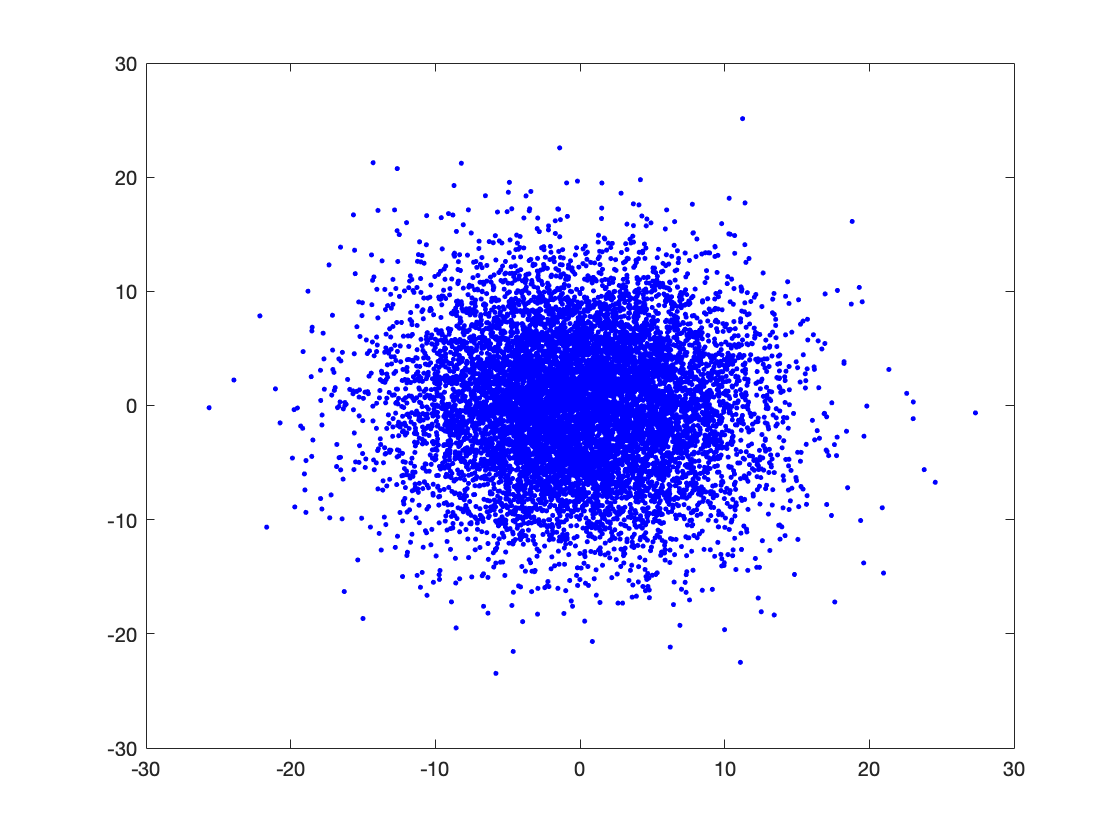} 
\hspace{-0.2cm} & \hspace{-0.2cm} \includegraphics[width=.2\textwidth]{./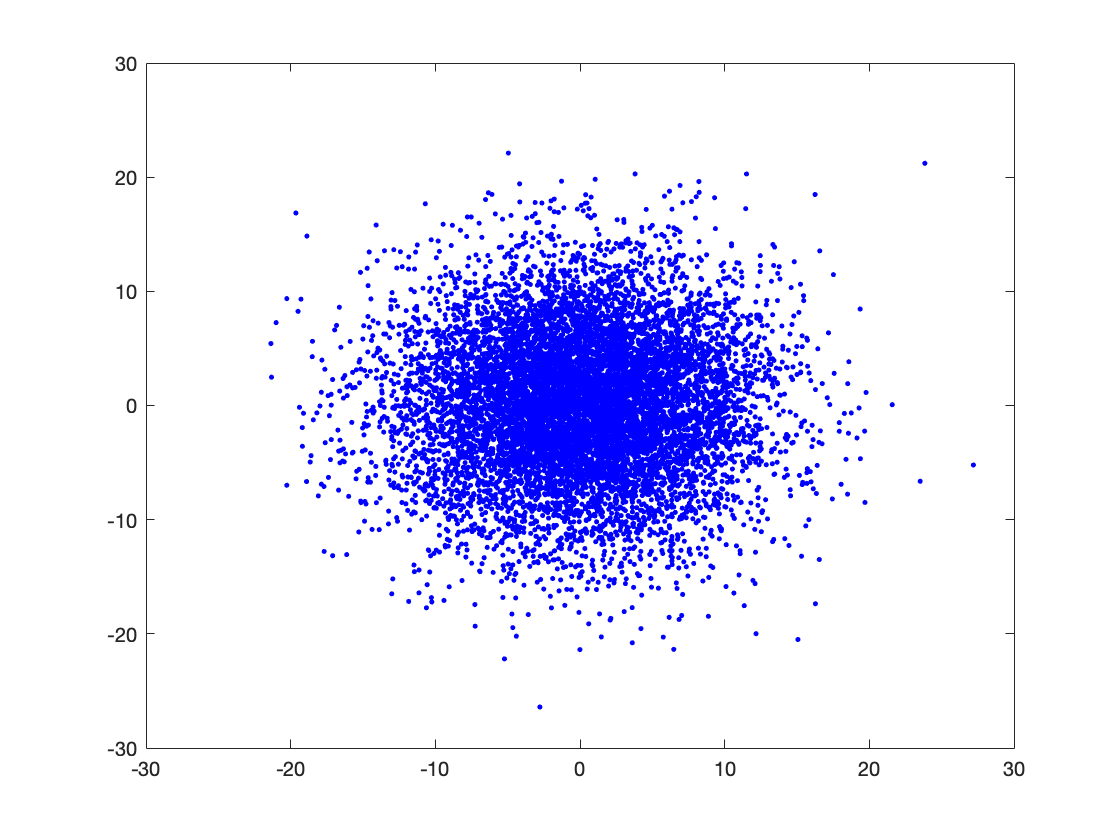}
\hspace{-0.2cm} & \hspace{-0.2cm}
\includegraphics[width=.2\textwidth]{./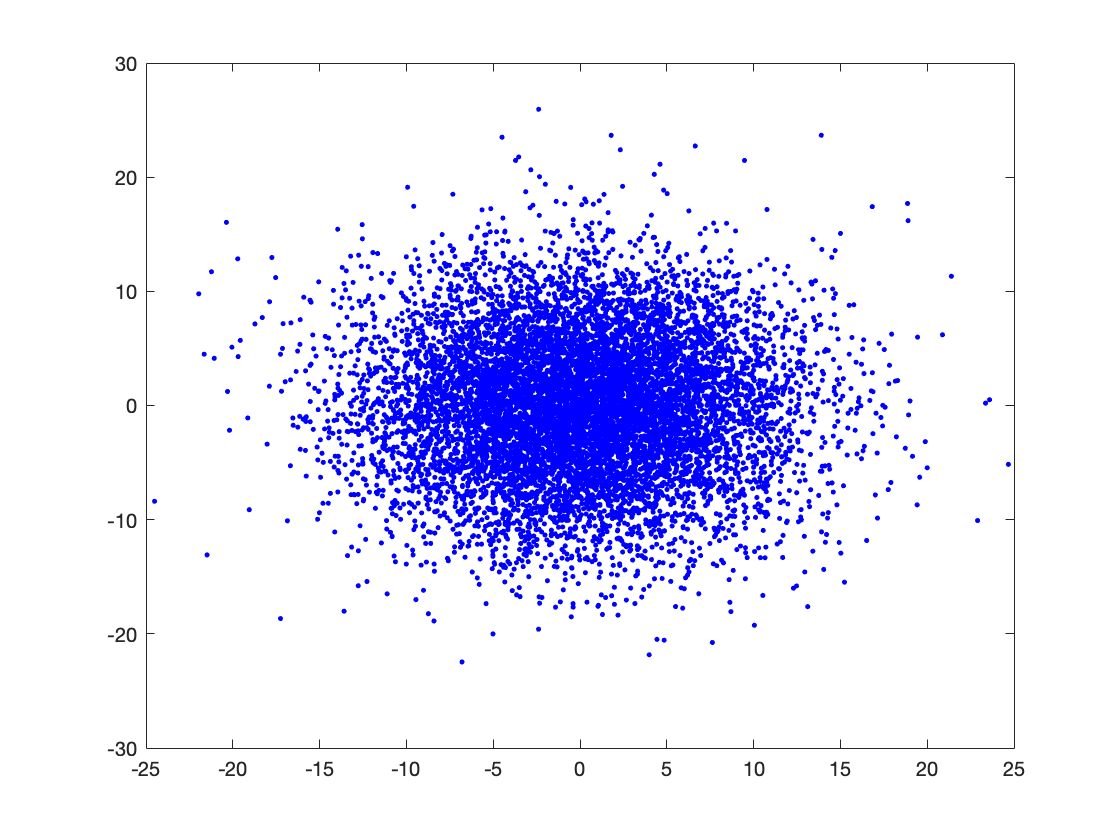}\\
    \footnotesize(e) Iteration 2, Repeat 5 &\footnotesize(f) Iteration 2, Repeat 6  &\footnotesize(g) Iteration 2, Repeat 7 &\footnotesize(h) Iteration 2, Repeat 8\\
    
    \includegraphics[width=.2\textwidth]{./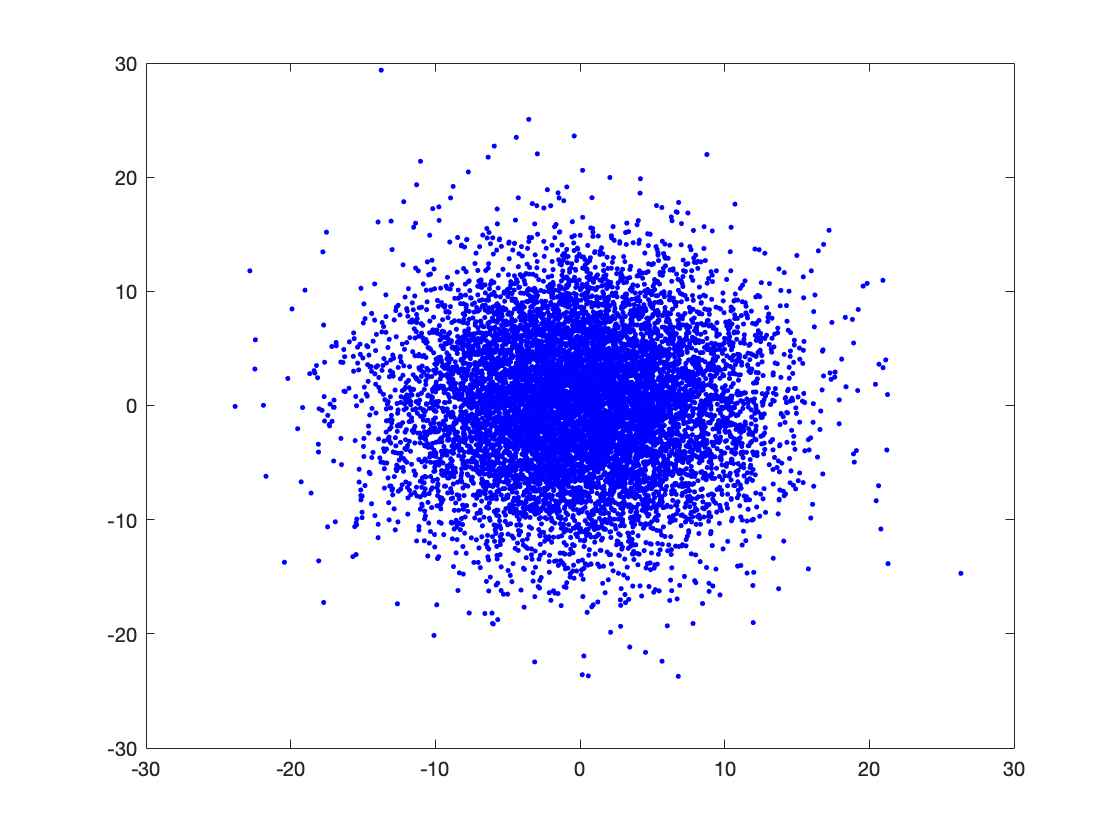} 
\hspace{-0.2cm} & \hspace{-0.2cm}
\includegraphics[width=.2\textwidth]{./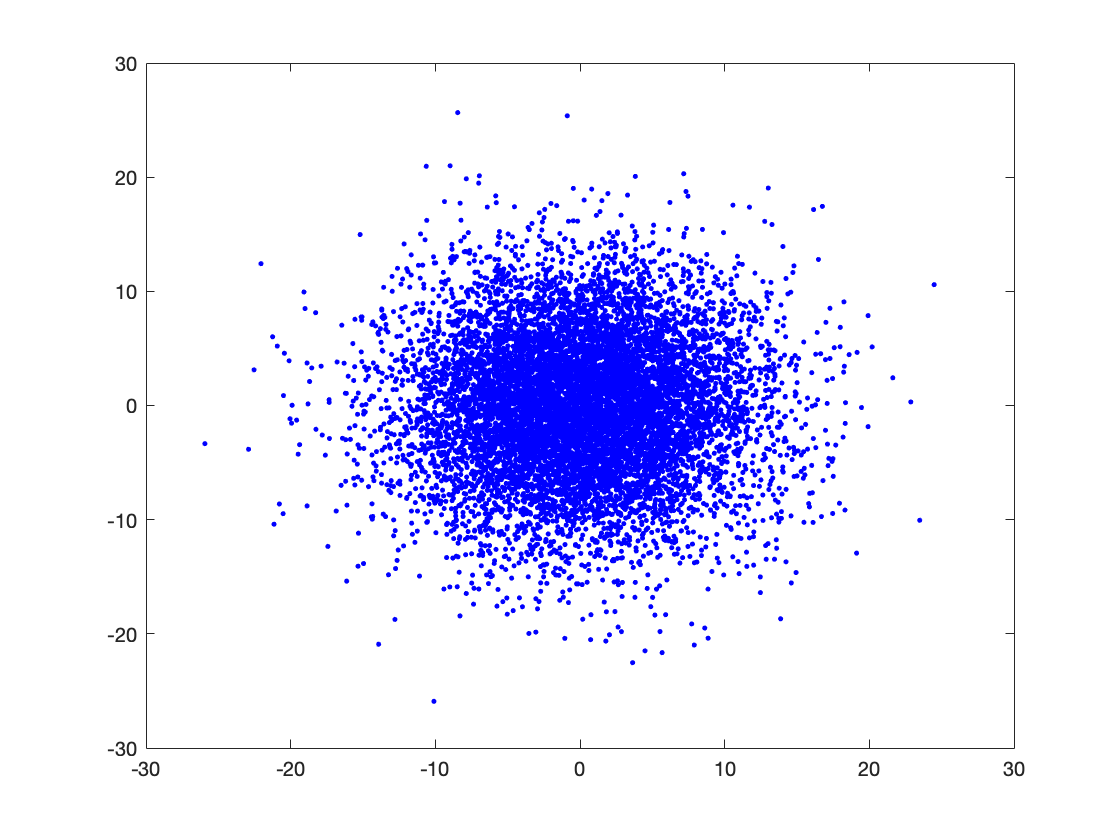} 
\hspace{-0.2cm} & \hspace{-0.2cm} \includegraphics[width=.2\textwidth]{./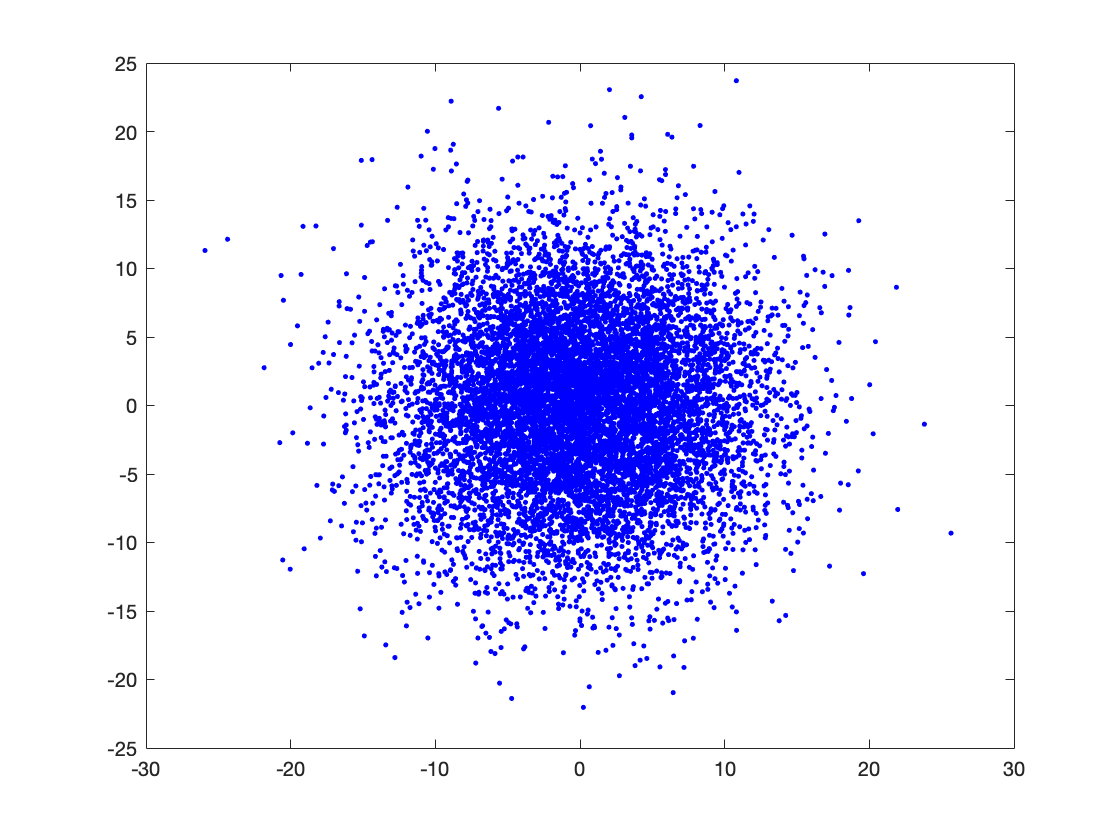}
\hspace{-0.2cm} & \hspace{-0.2cm}
\includegraphics[width=.2\textwidth]{./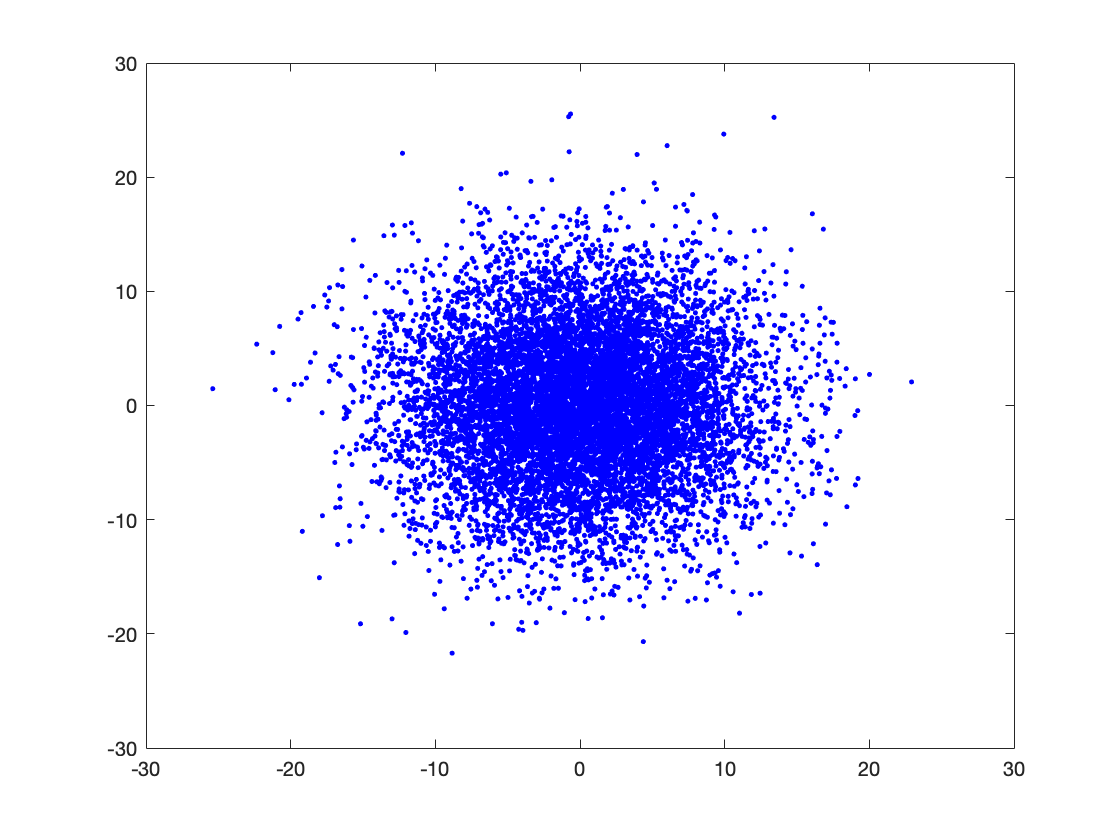}\\
    \footnotesize(i) Iteration 20, Repeat 1 &\footnotesize(j) Iteration 20, Repeat 2  &\footnotesize(k) Iteration 20, Repeat 3 &\footnotesize(l) Iteration 20, Repeat 4\\
    \includegraphics[width=.2\textwidth]{./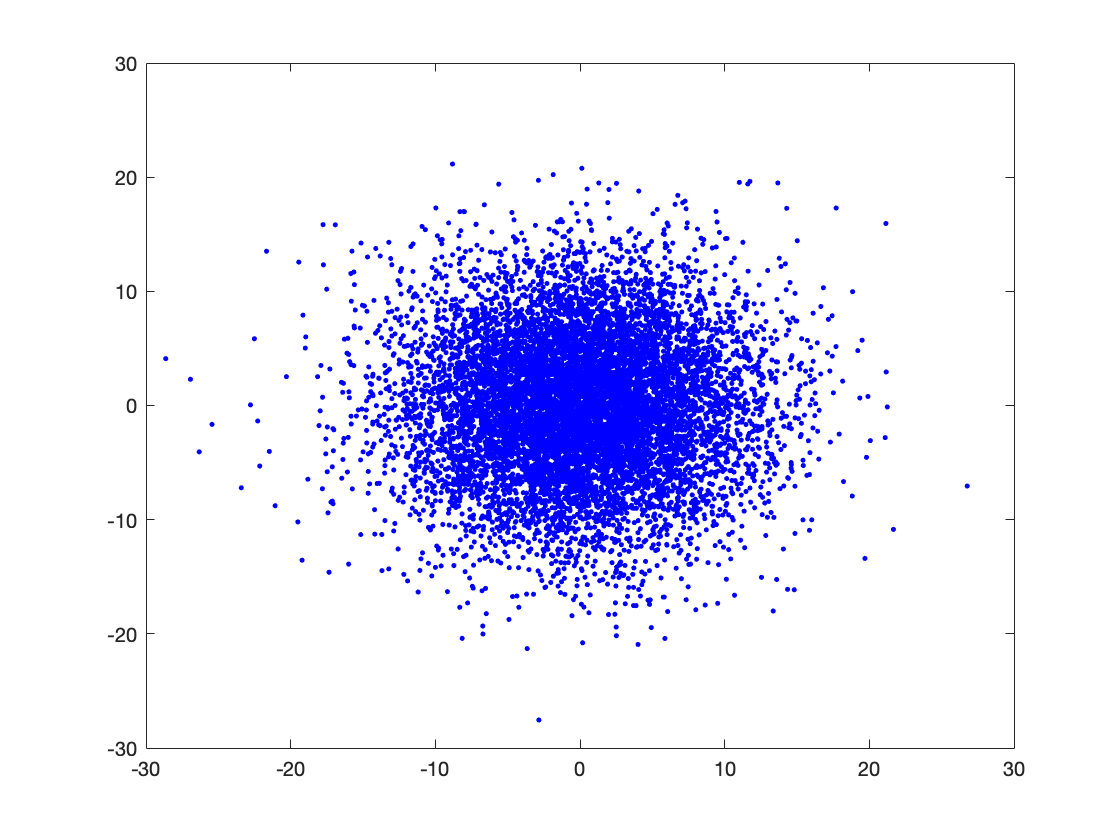} 
\hspace{-0.2cm} & \hspace{-0.2cm}
\includegraphics[width=.2\textwidth]{./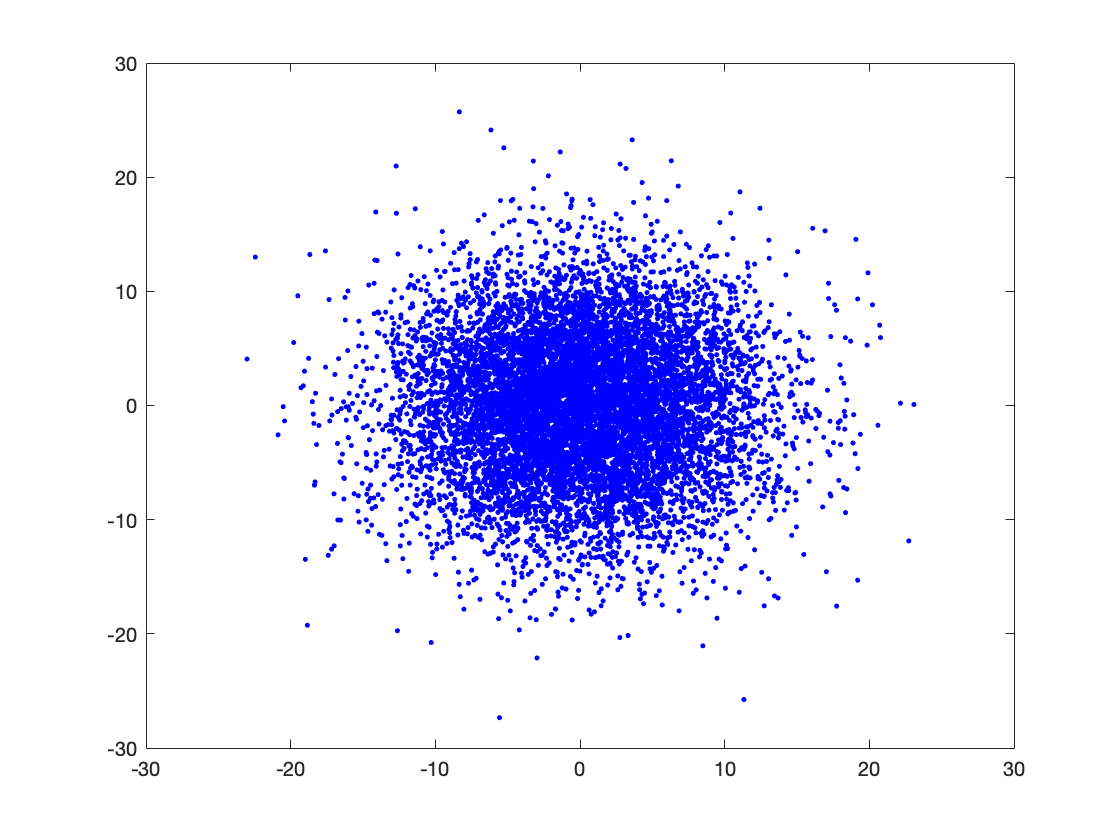} 
\hspace{-0.2cm} & \hspace{-0.2cm} \includegraphics[width=.2\textwidth]{./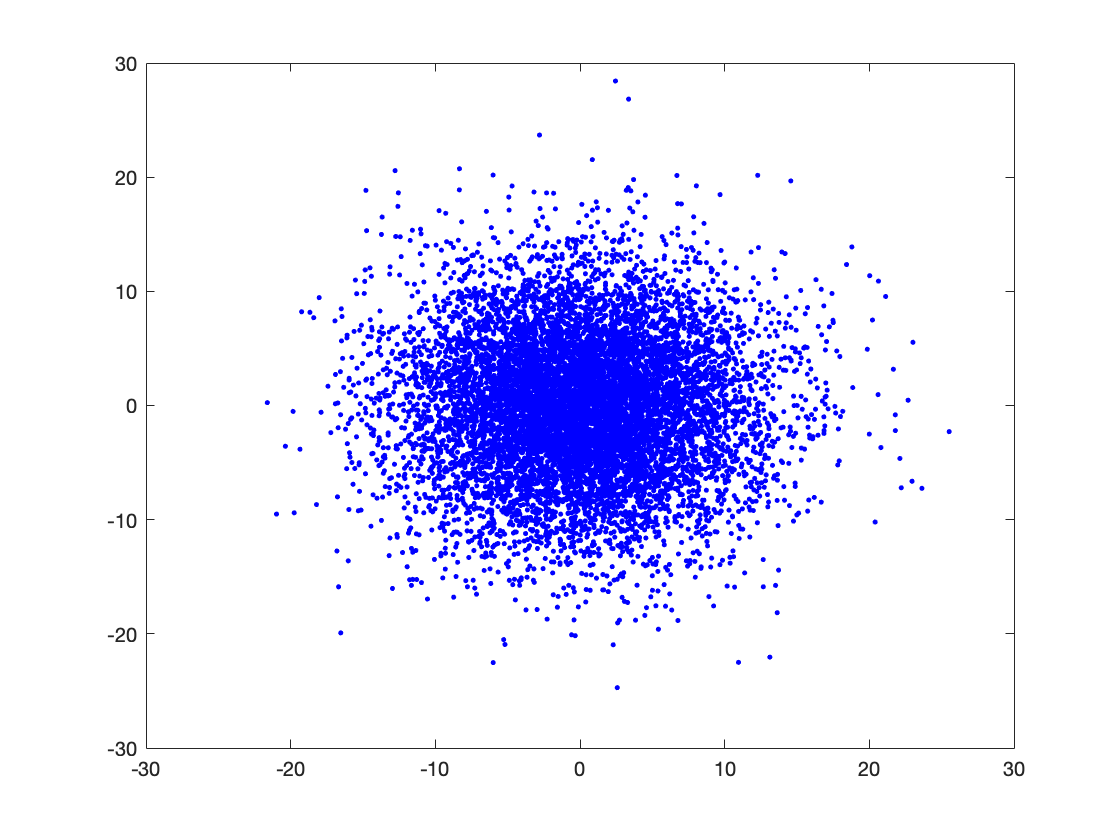}
\hspace{-0.2cm} & \hspace{-0.2cm}
\includegraphics[width=.2\textwidth]{./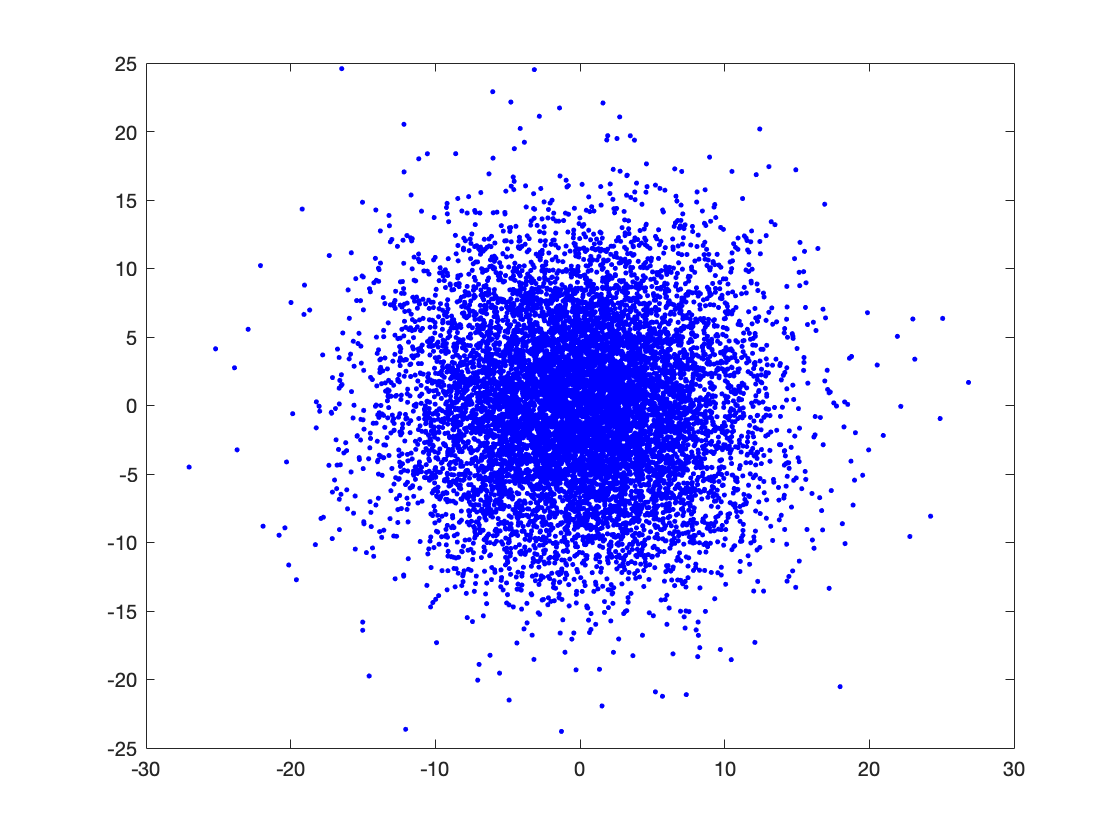}\\
    \footnotesize(m) Iteration 20, Repeat 5 &\footnotesize(n) Iteration 20, Repeat 6  &\footnotesize(o) Iteration 20, Repeat 7 &\footnotesize(p) Iteration 20, Repeat 8\\
    
    \includegraphics[width=.2\textwidth]{./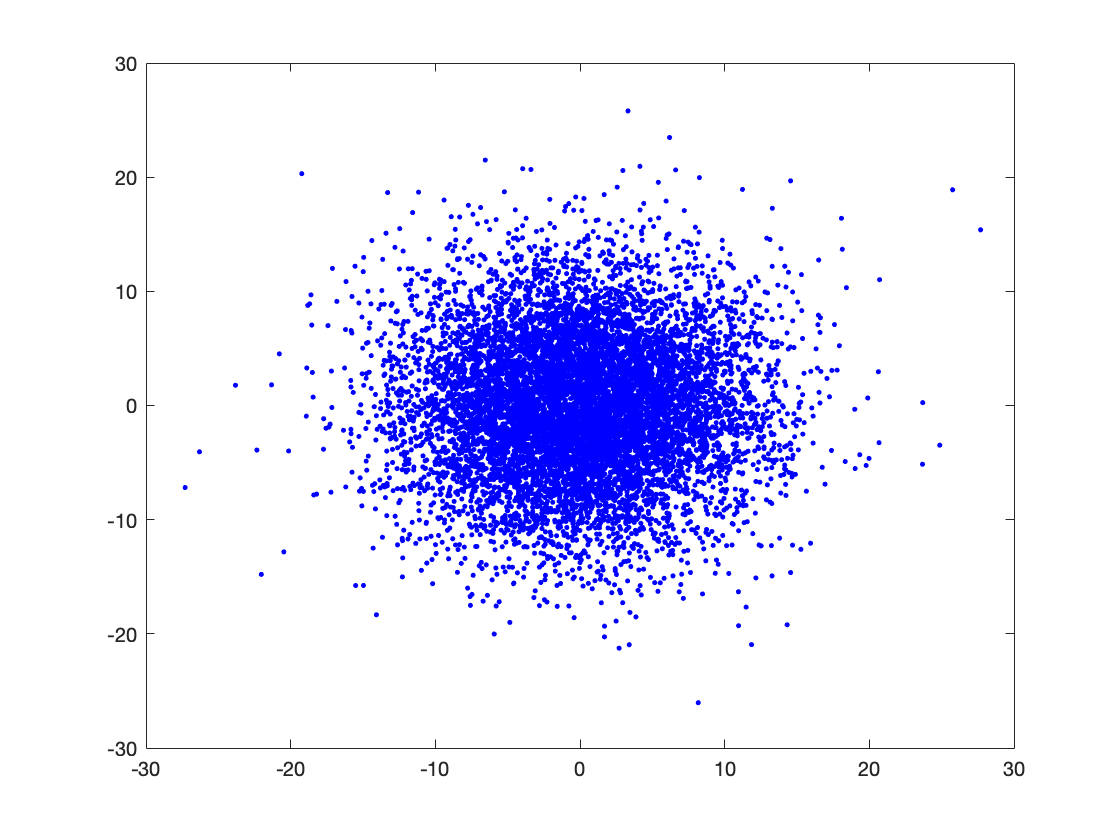} 
\hspace{-0.2cm} & \hspace{-0.2cm}
\includegraphics[width=.2\textwidth]{./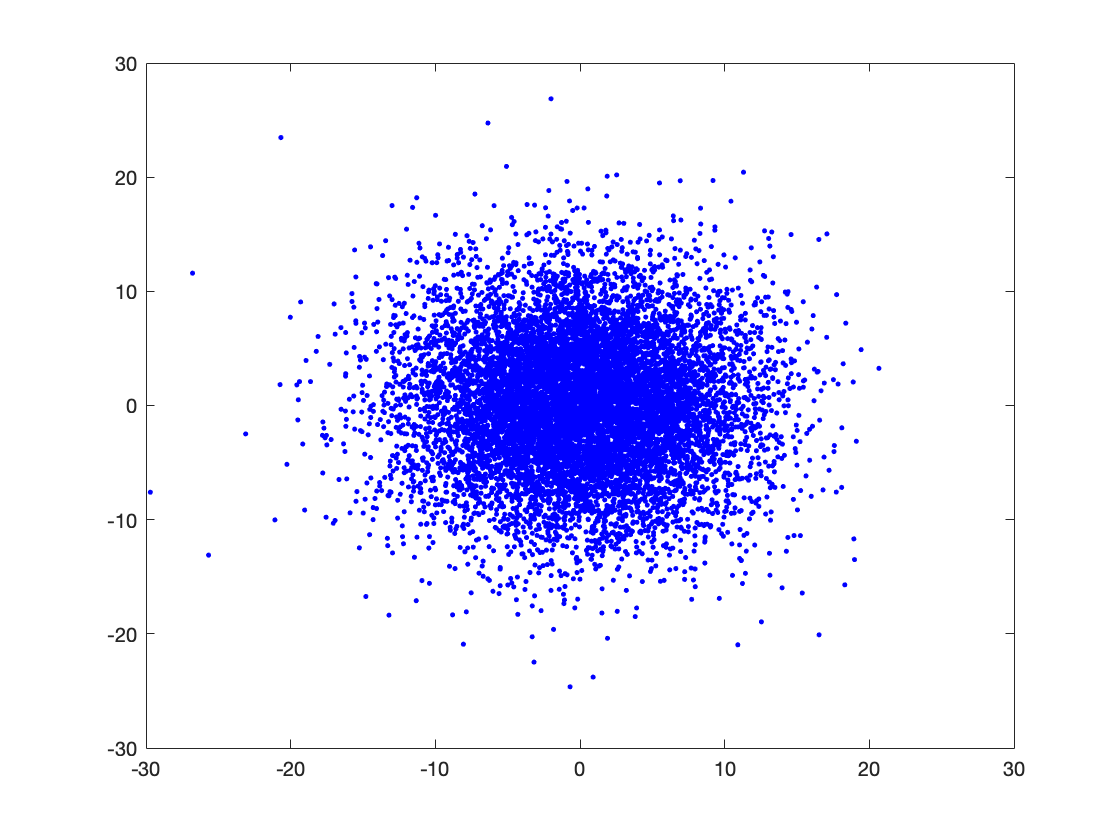} 
\hspace{-0.2cm} & \hspace{-0.2cm} \includegraphics[width=.2\textwidth]{./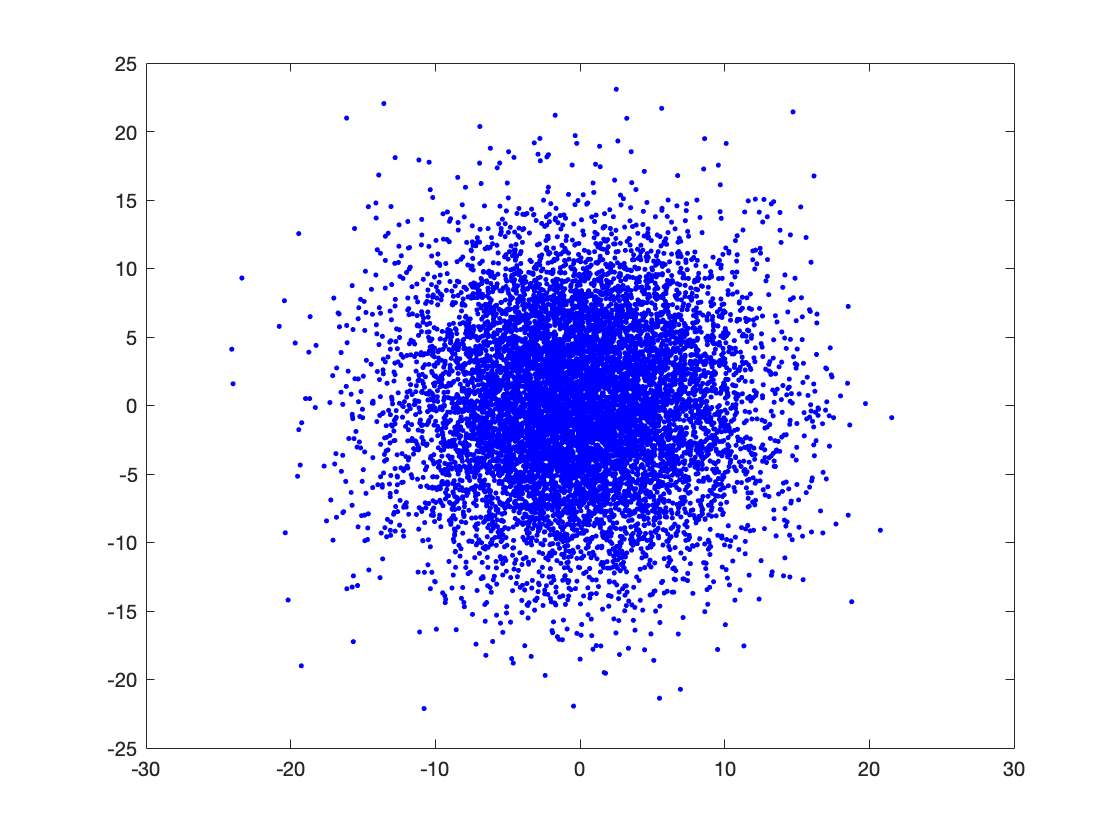}
\hspace{-0.2cm} & \hspace{-0.2cm}
\includegraphics[width=.2\textwidth]{./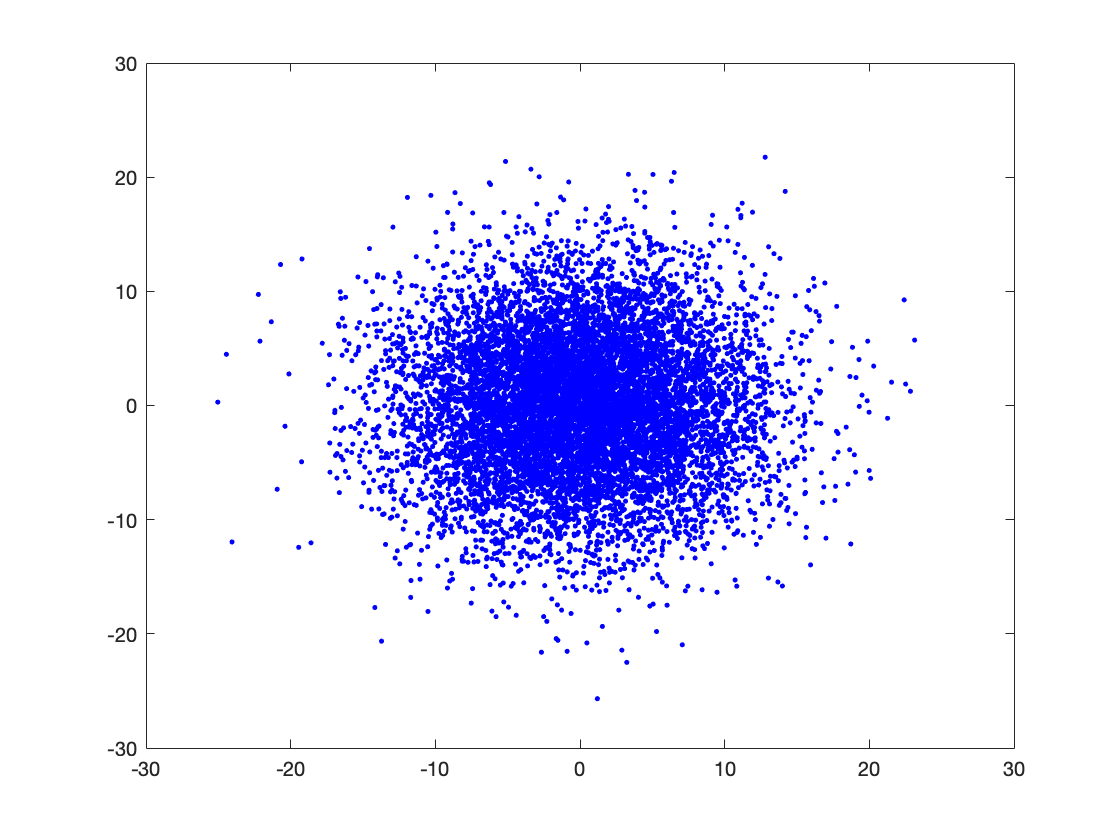}\\
    \footnotesize(q) Iteration 100, Repeat 1 &\footnotesize(r) Iteration 100, Repeat 2  &\footnotesize(s) Iteration 100, Repeat 3 &\footnotesize(t) Iteration 100, Repeat 4\\
    \includegraphics[width=.2\textwidth]{./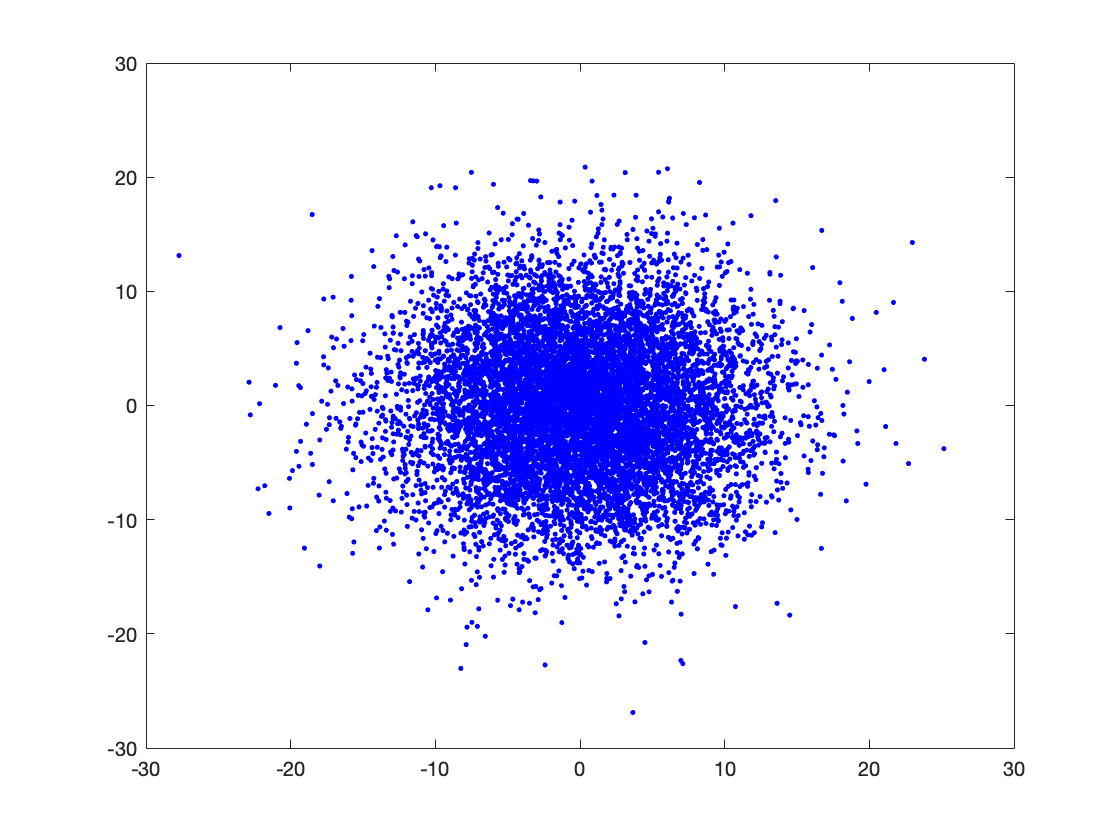} 
\hspace{-0.2cm} & \hspace{-0.2cm}
\includegraphics[width=.2\textwidth]{./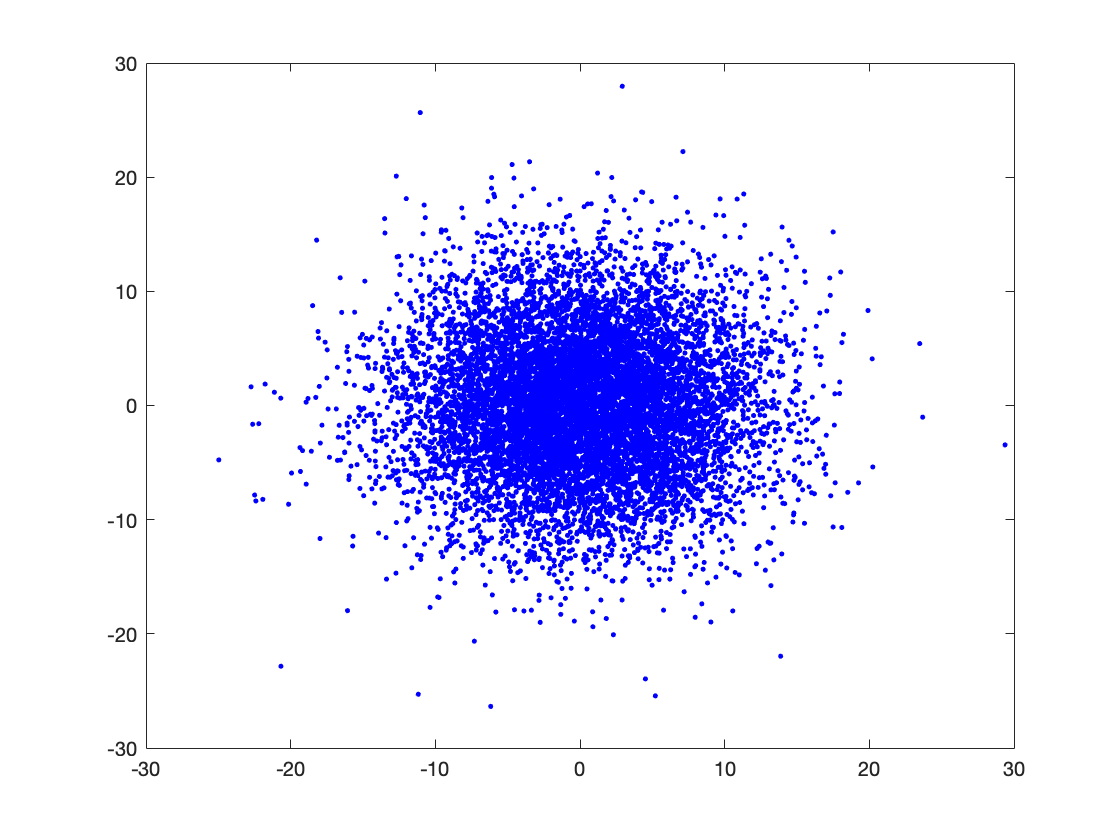} 
\hspace{-0.2cm} & \hspace{-0.2cm} \includegraphics[width=.2\textwidth]{./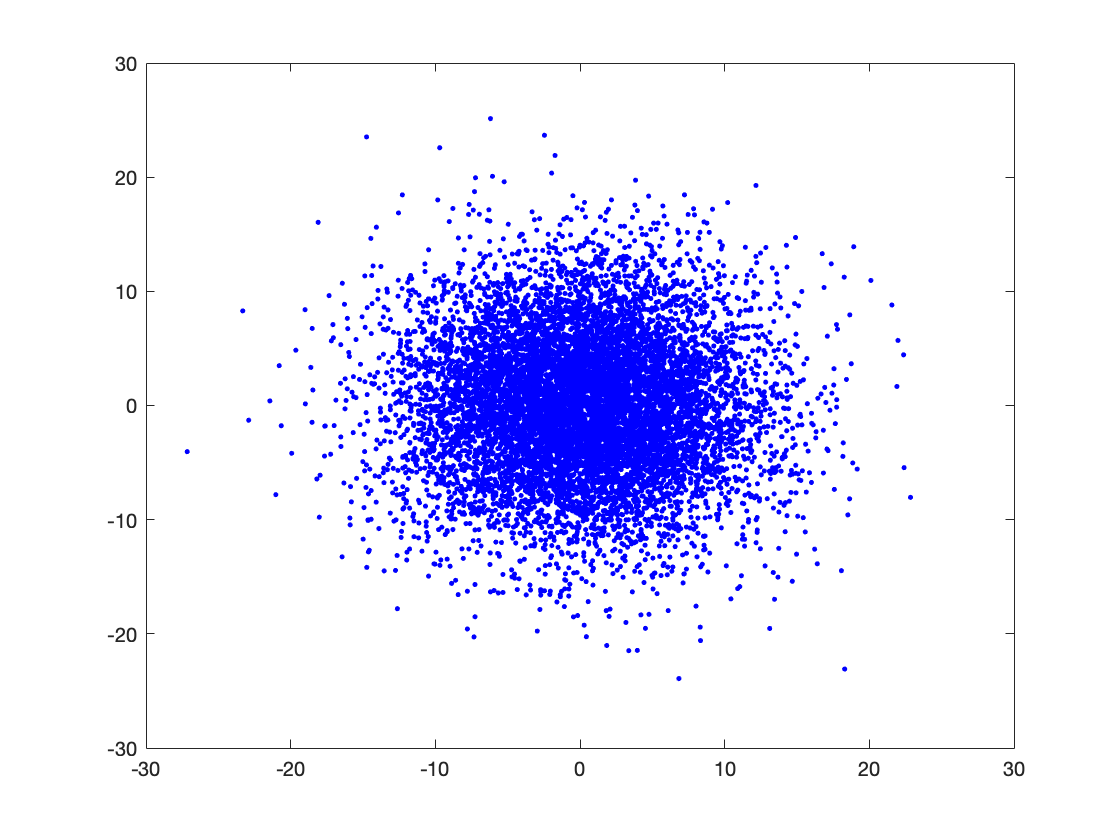}
\hspace{-0.2cm} & \hspace{-0.2cm}
\includegraphics[width=.2\textwidth]{./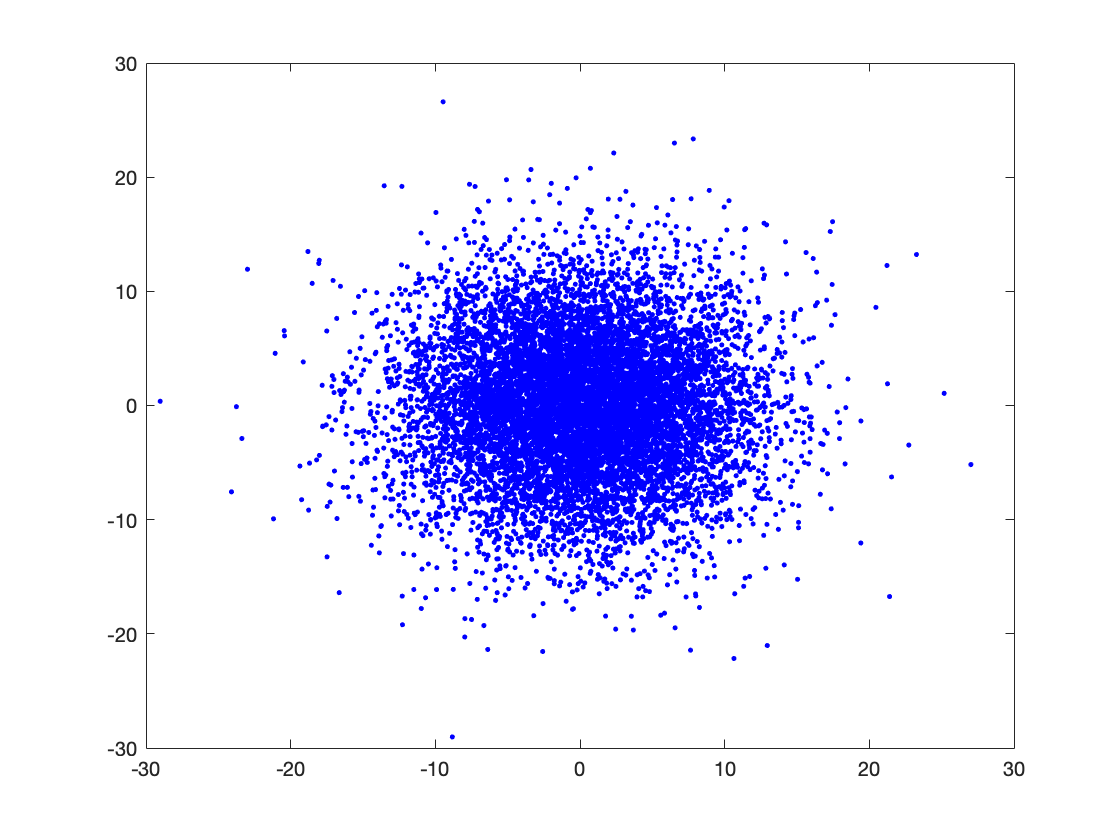}\\
    \footnotesize(u) Iteration 100, Repeat 5 &\footnotesize(v) Iteration 100, Repeat 6  &\footnotesize(w) Iteration 100, Repeat 7 &\footnotesize(x) Iteration 100, Repeat 8\\
    \end{tabular}
        % \vspace{-0.2cm}
    \caption{Distribution of $u_t\in A_t$ projected using 24 random matrices for iterations 2, 20 and 100 of LAZOb in LQR example. }
    \label{fig:symm_appendix}
 \vspace{-0.2cm}
\end{figure*}

\subsection{Justification of asymmetric of $\tilde{A}_t$ for LAZOa}
In Figure \ref{fig:symm_appendix2}, we put similar symmetricity results of $\tilde{A}_t$ for LAZOa in LQR example. We can see that for iteration 2, 20, and 100, all of 24 random Gaussian projection are also almost symmetric, which can further verify the asymptotic symmetricity assumption for LAZOa. 

\subsection{Justification of asymmetric of $A_t$ for LAZOb}
In Figure \ref{fig:symm_appendix}, we put more symmetricity results of $A_t$ for LAZOb in LQR example. We can see that for iteration 2, 20, and 100, all of 24 random Gaussian projection are almost symmetric, which can further verify the asymptotic symmetricity assumption in Assumption \ref{as_sym1}. 

\subsection{Estimate of asymmetric probability}
We also report the estimate of asymmetric probability 
\begin{align*}
    &\hat p_t^a=\frac{1}{N}\sum_{i=1}^N \mathbf{1}\{u_{t}^i\in \tilde A_t\},~~~\hat p_t^b=\frac{1}{N}\sum_{i=1}^N \mathbf{1}\{u_{t}^i\in A_t\}
\end{align*}

in Table \ref{fig:p}, which decays with $t$. We can see that the asymmetric rate estimator for LAZOa decays quicker while both LAZOa and LAZOb are almost symmetric at around $1000$ iteration. Considering the total iteration number is $10000$, the decaying speed for LAZOb is reasonable. 

\begin{table}[htb]
	\centering
		\caption{Estimate $\hat p_t^{\red{a}/\blue{b}}$ of \red{$\mathbb{P}(\bar{A}_t\backslash \bar {A}_t^s)$}~/~\blue{$\mathbb{P}(A_t\backslash A_t^s)$} with $N=10^4$. }
	\begin{tabular}{ccccc}
		\toprule  
		 Iteration $t$ &\red{2}/\blue{2}&\red{10}/\blue{100}&\red{50}/\blue{200}&\red{100}/\blue{1000} \\ 
		 \midrule
	$\hat p_t^{\red{a}/\blue{b}}$ for LAZO\red{a}/\blue{b} &\!\!\red{0.02}/\blue{0.26}\!\!&\!\!\red{0.01}/\blue{0.15}\!\!&\!\!\red{0.01}/\blue{0.04}\!\!&\!\!\red{0.00}/\blue{0.02} \!\!\\
		\bottomrule  
	\end{tabular}
	\label{fig:p}
\end{table}

\subsection{Example in Figure \ref{fig:intuition}: online linear regression}\label{sec:intuition}
In this subsection, we give additional insights on why LAZOa may work in practice. 

% We consider a online linear regression model with noisy loss function evaluation $f_t(x)=\frac{1}{2p}\|y_t-\theta_t x\|^2$, where $y_t\in\mathbb{R}^{p}, \theta_t=[\theta_{t,1},\cdots, \theta_{t,q}]\in\mathbb{R}^{p\times q}$ and $x\in\mathbb{R}^{q}$. Here we set $p=100$, $q=2$ and generate $y_t$ and $ \theta_t$ by
% \begin{align*}
% \theta_{t,1}=\mathbf{1}_p,~~~~~~~\theta_{t,2}\sim\mathcal{N}^p(0,2^2), ~~~~~~~ y_t=4+3\theta_{t,2}+s_t, ~~~~~~~ s_t\sim\mathcal{N}^p(0,1).  
% \end{align*}
% where $\mathcal{N}^p(\mu,\sigma^2)$ denotes the i.i.d Gaussian distribution with mean value $\mu$ and standard deviation $\sigma$ in dimension $p$ and $\mathbf{1}_p$ denotes the vector with all $1$ in dimension $p$. 
We consider an online linear regression model with noisy loss function evaluation $f_t(x)=\frac{1}{2p}\|y-\theta x\|^2+z_t$, where $y\in\mathbb{R}^{p}, \theta=[\theta_{1},\cdots, \theta_{d}]\in\mathbb{R}^{p\times d}$, $x\in\mathbb{R}^{d}$ and $z_t\in\mathbb{R}$. Here we set $p=100$, $d=2$ and generate $y$, $\theta$ and $z_t$ by
\begin{align*}
\theta_{1}=\mathbf{1}_p, \quad \theta_{2}\sim\mathcal{N}^p(0,2^2), \quad y=4+3\theta_{2}+s,\quad s\sim\mathcal{N}^p(0,1), \quad z_t\sim\mathcal{N}(0,1).  
\end{align*}
where $\mathcal{N}^p(\mu,\sigma^2)$ denotes the i.i.d. Gaussian distribution with mean value $\mu$ and standard deviation $\sigma$ in dimension $p$; and $\mathbf{1}_p$ denotes the vector with all $1$ in dimension $p$. 

In this simulation, we choose $\eta=1e^{-7}$ and $\delta=0.1$ for both one-point residual (\blue{blue}) and two-point (\red{red}) methods. We repeat the experiment for $50$ random trials and 
% get the average loss versus iteration plot in Figure \ref{fig:loss}. We can also 
get the temporal variation v.s. gradient estimation error in Figure \ref{fig:intuition}, where the gradient estimation error is computed by $\|\tilde{g}_t(x_t)-\frac{1}{p}\theta^T(y-\theta x_t)\|$. 

In Figure \ref{fig:intuition}, we study how temporal variation $D_t^a(w_t,w_{t-1})$ affects the gradient estimation error (i.e. $\|\tilde{g}_t(x_t)-\nabla f_t(x_t)\|$) under the one-point residual and two-point gradient estimators. 
% We then scatter them in Figure \ref{fig:intuition}. 
% Note that the only difference of Figure \ref{fig:intuition}(a)(b)(c) is the scale. 
In Figure \ref{fig:intuition}(a), the gradient estimation error is generally positive correlated to the temporal variation. With the same temporal variation, the two-point estimator has a smaller error than the one-point residual method. However, when zooming into the regime where the temporal variation is small (see Figures \ref{fig:intuition}(b) and \ref{fig:intuition}(c)), we observe that for the same temporal variation,
% the difference on gradient estimate error with the same smaller temporal variation is not too much in this two methods. Furthermore, from Figure \ref{fig:intuition}(c), we find that 
the gradient estimation error of one-point residual ZO is no longer larger than that of two-point ZO method. Thus in this regime (below the orange line), querying the second new point does not bring much innovation.

\subsection{Simulation setting of non-stationary LQR control}
We study a non-stationary version of the classic LQR  problem \cite{fazel2018global} with the time-varying dynamics. 
At iteration $t$, consider the linear dynamic system described by the dynamic $x_{k+1}=A_tx_k+B_tq_k$, where $x_k\in \mathbb{R}^n$ is the state, $q_k\in\mathbb{R}^p$ is the control variable at step $k$, $A_t\in \mathbb{R}^{n\times n}$ and $B_t\in \mathbb{R}^{n\times p}$ are the dynamic matrices for iteration $t$. Our goal is to minimize the cost which is a fixed quadratic function of state and control given by
\begin{align*}
\begin{array}{ll}
\min_{u_k} & \mathbb{E}\left[\frac{1}{H} \sum_{k=1}^{H} \beta^k(x_{k}^{\top} Q x_{k}+q_{k}^{\top} R q_{k})\right] 
\end{array}
\end{align*}
where $\beta\in(0, 1)$ is a discount factor, $Q\in\mathbb{R}^{n\times n}$ and $R\in\mathbb{R}^{p\times p}$ are the positive definite matrices, and $H$ is the length of step horizon. We search the control $q_k=K_t^*x_k$ that linearly depends on the current state $x_k$, where $K_t^*\in \mathbb{R}^{p\times n}$ is the optimal policy in iteration $t$. Thus our optimization variable will be $K$ and the loss function will be 
\begin{align*}
\begin{array}{ll}\min_{K} & f_t(K)=\mathbb{E}\left[\frac{1}{H} \sum_{k=1}^{H} \beta^k x_{k}^{\top} (Q+K^{\top} RK) x_{k}\right]\\ \text {s.t. } & x_{k+1}=(A_t+B_t K)x_k. \end{array}
\end{align*}
%The goal is to track the time-varying optimal policy parameter $K_t$ to minimize the static regret. 

We set $n=p=6$, $\beta=0.5$, $\delta=0.01$, and stepsize $\eta=10^{-5}$ for all four methods. We generate $A_t$, $B_t$ to mimic the situations where the loss functions encounter intermittent changes, given by
\begin{equation*}
%\footnotesize
\!A_t\!=\!\left\{\begin{array}{lll}s_t, & t\!\!\!\!\!\mod\! 100=0\\A_{t-1}+s_t+7\sin[7(\tiny{t\!\!\!\!\!\mod\! 100})], &t\!\!\!\!\!\mod\! 100\in[35, 65] \\ A_{t-1}+s_t, &\text{else} \end{array}\right.
\label{A_t}
\end{equation*}
where the noise $s_t\sim\mathcal{N}(0,1)$ if $t\!\!\mod\! 100=0$, and $s_t\sim\mathcal{N}(0,0.1^2)$, otherwise. We generate $B_t$ the same way as $A_t$ but replacing the $\sin$ function with the $\cos$ function.

\subsection{Simulation setting of non-stationary resource allocation}
We consider a resource allocation problem with $16$ agents connected by a ring graph. For each iteration $t$, per step $k$, each node $i$ receives an exogenous data request $b_k^i=\psi_{i} \sin (\omega_{i} k+\phi_{i})$, stores $y_k^i$ amount of resources and forwards $a_k^{ij}$ fraction of resources to its neighbor node $j\in \mathcal{N}_i$. Then the aggregate (endogenous plus exogenous) workload of each node $i$ evolves by $$y_{k+1}^i=y_{k}^i-\sum_{j \in \mathcal{N}_{i}} a_k^{i j} y_k^i+\sum_{j \in \mathcal{N}_{i}} a_k^{ji} y_k^j-b_k^j.$$
Per iteration $t$, for each node $i$, at each step $k$, the power cost $r_{k,t}^i$ depends on a varying parameter $p_t^j$ as
\begin{equation*}
%\small
r_{k,t}^i=\left\{\begin{array}{ll}0, & \text{If }~~ y_k^i\geq 0 \\ p_t^i (y_k^i)^2, & \text{else}. \end{array}\right.
\label{eq:RA}
\end{equation*}
Defining $x_k^i=[y_k^i,b_k^i]^T$ and the policy $\pi_t^i(x_k^i,\theta_t):x_k^i\rightarrow [0,1]^{|\mathcal{N}_i|}$ at iteration $t$, our goal is to find the optimal policy to allocate $a_k^{ij}$, and thus to minimize the instantaneous accumulated cost $f_t(\theta_t)=\sum_{i=1}^{16}\sum_{k=1}^H \beta^k r_{k,t}^i$, where $H$ is the time step length and $\beta$ is the discount factor. The time-varying parameter $p_t^j$ is generated according to $p_t^j=\sin(\frac{\pi t}{12})+s_t^j$ where $s_t^j$ is uniformly distributed over $[0,1]$.

\subsection{Simulation setting of generation of adversarial examples}
We study generating adversarial examples from a image classifier given by a black-box DNN on the MNIST dataset. The DNN model is seen as the zeroth-order oracle. Let $(y_0,l_0)$ be the image $y_0$ with true label $l_0\in\{1,\cdots n\}$ in $n$ different classes. Assume the target DNN classifier $H(y)=\left[H_1(y),\cdots H_n(y)\right]$ is a well-trained classifier, where $H_l(y)$ means the probability of $y$ being class $l$. Given $H$, an adversarial examples $y$ of $y_0$ means that it is visually similar to $y_0$ but $H$ gives a different prediction class to it. Since the pixel value range of images is always bounded, without loose of generality, we can assume $y\in[-0.5,0.5]^d$. Since the black-box attack is nonconvex stochastic problem, which we only have access to solve the unconstrained setting, we need to apply the $\tanh/2$ transformation to an unbounded variable $x\in\mathbb{R}^d$ to represent $y$. Then we can adopt the black-box attacking loss function defined in \cite{chen2017zoo}, which is given by
\begin{equation}
\underset{\mathbf{x} \in \mathbb{R}^{d}}{\min} \beta \cdot \max \left\{\log H_{l_{0}}(\tanh (\mathbf{x}) / 2)-\max _{l \neq l_{0}} \log H_{l}(\tanh (\mathbf{x}) / 2), 0\right\}+\left\|\tanh (\mathbf{x}) / 2-\mathbf{y}_{0}\right\|_{2}^{2}
\end{equation}
where the first term represents maximum difference between probability of being classified to the true class $l_0$ and the most possible predicted class other than $l_0$, the second term is the $l_2$ distortion, and $\beta$ is the penalty parameter. Here we choose $\beta=0.5$.

\subsection{Parameter tuning details}
Our general procedure is to use grid search to find the best step sizes $\eta$ and delta sequences $\delta$ for the one-point residual and two-point methods to optimize the loss versus iteration plot. Then we use these values in the LAZO method, and use grid search to find the best parameter $D$ in LAZO. 

\begin{table}[htbp]
	\centering
		\caption{Search grid of parameters for LQR control and resource allocation tasks}
	\begin{tabular}{ccc}
		\toprule  
		 &LQR control&Resource allocation \\ 
		 \midrule
	$\eta$  &$\{10^{-4},10^{-5},5\times10^{-6},10^{-6}\}$&$\{10^{-4},10^{-5},5\times10^{-6},10^{-6}\}$ \\
			 \midrule
	$\delta$  &$\{0.1,0.05, 0.01,0.005, 0.001\}$ & $\{1, 0.5, 0.1,0.05, 0.01\}$\\
		\bottomrule  
	\end{tabular}
\end{table}
In the simulations we find that the best values for the one-point residual and two-point methods are the same. Thus, we apply the best values $\eta=10^{-5}$ and $\delta=0.01$ for LQR control and $\eta=10^{-5}$ and $\delta=0.1$ for resource allocation to LAZOa and LAZOb and then use grid search to find the best parameter $D$ from $\{0.1, 0.5, 1, 10, 50\}$. 

For black-box adversarial attack, the search grids for $\eta$ and $\delta$ of the one-point residual and two-point methods are mentioned in Section 4.3. We find the optimal choice for them to minimize the iterations needed for the first successful attack (see Section 4.3 for its definition). Similar to the procedure for LQR control and resource allocation, we apply the optimal $\delta=0.01$ for the two-point method to LAZOa/b. However, the optimal $\eta=3$ for the two-point method cannot be applied to LAZOa/b in the black-box adversarial attack task. The reason is that LAZOa/b are significantly damped by the one-point steps in this task. Only by setting $D$ small enough, they can avoid exploding but will sacrifice in query saving, or even degenerate to the two-point method. Thus, we set smaller $\eta$ for LAZOa/b and then tune $D$ to optimize the iterations needed for the first successful attack.

\begin{table}[htbp]
	\centering
	\caption{Runtime for one-point residual method, two-point method and LAZOa/b in LQR control}
	\begin{tabular}{ccccc}
		\toprule  
		 Method&one-point residual&two-point&LAZOa&LAZOb \\ 
		 \midrule
	Time (sec)  &1548&1653&1651&1640 \\
		\bottomrule  
	\end{tabular}
\end{table}

\subsection{Computational overhead}\label{comp}
Regarding the overhead, the adaptive rules of LAZOa/b actually rely on simple calculations, and only have little influence on the computation. In fact, we observed that LAZOa/b can even slightly save runtime compared to the two-point ZO method because querying function value usually takes more time than computing temporal variation. As an example, in Table 2, we report the runtime for running the four methods over *the same number of iterations* in LQR control.

\begin{algorithm}[tb]
\setstretch{1.4}
   \caption{Multi-point LAZO: Multiple point Lazy query for ZO gradient method: \colorbox{red!30}{red part} is run only by \red{\bf LAZOa}; \colorbox{blue!30}{blue part} is implemented only by \blue{\bf LAZOb}; not both at the same time.}
   \label{alg:LAZO_multi}
\begin{algorithmic}[1]
   \State {\bfseries Input:} $x_0\in \mathbb{R}^d$; $H,K,T, \delta,\eta, D>0$.
   \For{$t=1$ {\bfseries to} $H$}
   \State Sample $u_{t}^1,\cdots, u_t^K\sim U(\mathbb{S})$ independently.
   \State Query $f_t(w_t^1),f_t(x_t-\delta u_t^1),\cdots,f_t(w_t^K),f_t(x_t-\delta u_t^K)$.~~~~~~~~~~~~~~~~~~~~\Comment{$ w_t^k=x_t+\delta u_t^k$}
   \State Compute $\tilde{g}_{t}(x_{t})=\frac{d}{2\delta K}\sum_{k=1}^K(f_t(w_t^k)-f_{t}(x_{t}-\delta u_t^k))$
   \State Update $x_{t+1}=\Pi_{\mathcal{X}}\left(x_{t}-\eta \tilde{g}_{t}(x_{t})\right)$.
   \EndFor

   \For{$t=H+1$ {\bfseries to} $T$}
   \State Set $k=0,\tilde{g}_t(x_t)=0$.
   \While{$k\leq K$}
   \State Sample $u_t^k\sim U(\mathbb{S})$. 
   \State Query $f_t(w_t^k)$. 
   \State Set $m=0$.
   \For{$\tau=1$ {\bfseries to} $H$} 
   \For{$j=1$ {\bfseries to} $K$}
   \If{\colorbox{red!30}{$D_t^a(w_t^k,w_{t-\tau}^j)$} or \colorbox{blue!30}{$D_t^b(w_t^k,w_{t-\tau}^j)\!\leq\! D$}} ~~~~~~~~~~~\Comment{Exert old queries}
   \State Compute $\tilde{g}_t(x_t)=\tilde{g}_t(x_t)+\frac{du_t^k}{\delta}\left(f_t(w_t)-f_{t-\tau}(w_{t-\tau}^j)\right)$.
%   $~~~~~~~~~\triangleright$ Exert informative old queries.
   \State Set $m=m+1,k=k+1$ and $u_t^k=u_t^{k-1}$.
   \EndIf 
   \EndFor
   \EndFor
   \If{$m=0$}~~~~~~~~~~~~~~~~~~~~~~~~~~~~~~~~~~~~~~~~~~\Comment{No informative old queries, query new point}
   \State Query $f_{t}(x_{t}-\delta u_t^k)$. 
   \State Compute $\tilde{g}_t(x_t)=\tilde{g}_t(x_t)+\frac{du_t^k}{2\delta}\left(f_t(w_t^k)-f_{t}(x_{t}-\delta u_t^k)\right)$. 
%   $~~~~~~\triangleright$ No informative old queries, query new point. 
   \State Set $k=k+1$. 
   \EndIf
   \EndWhile
   \State Compute $\tilde{g}_t(x_t)=\tilde{g}_t(x_t)/K$ and update $x_{t+1}=\Pi_{\mathcal{X}}\left(x_{t}-\eta \tilde{g}_{t}(x_{t})\right)$.
   \EndFor
   \State {\bfseries Output:} $x_{t+1}$. 
\end{algorithmic}
\end{algorithm}
\section{Extension to multi-point query rules}\label{exten_app}
In previous analysis, we only consider reusing one previous step. In this section, we aim to enlarge the reusing horizon from $1$ to $H>1$. 

\subsection{Algorithm development}
Correspondingly, the temporal variation in Definition \ref{df:temp} need to be extended to multiple previous horizons case.

\begin{defi}[Temporal variation]
    \label{df:temp_ex}
  For  $\forall x,y\in\mathcal{X},\tau\in N$, define the two temporal variation between $t$ and $t-\tau$ as
    \begin{align}\label{eq.df:temp_ex}
&D_{t,\tau}^a(x,y)\triangleq\frac{|f_t(x)-f_{t-\tau}(y)|}{\|x-y\|}, ~~~~~D_{t,\tau}^b(x,y)\triangleq\frac{|f_t(x)-f_{t-\tau}(y)|}{\eta L}.
    \end{align}
\end{defi}
Likewise, the value of two temporal variation in Definition \ref{df:temp_ex} can be served as the indicator of whether the queries  at time $t-\tau$ is informative. 

Moreover, in \cite{shamir2017optimal}, two-point symmetric ZO gradient estimator can be extended to $2K$-points estimator to reduce estimation variance as follows ($K>1$):
\begin{equation}
\label{multi-shamir}
    \tilde{g}_t^{(2K)}(x_t)=\frac{d}{2\delta K}\sum_{k=1}^{K}u_{t}^k(f_t(x_t+\delta u_{t}^k)-f_t(x_t-\delta u_{t}^k))
\end{equation}
where $u_t^k$ are i.i.d randomly sampled from the unit sphere. We can also use the same idea in LAZO to construct multiple points gradient estimator for robustness.

To attain a $2K$-points LAZO estimator using queries in $H>1$ previous steps, at time $t$, we first search on the previous $H$ steps for valuable queries and reuse them, or if no old query is useful, we query a new point. Continuing this procedure until we obtain a $2K$-points estimator. Specifically, we can define $2K$-points \textbf{LAZOa} estimator as follows:
\begin{align}
\tilde{g}_{t}^{a}(x_{t})&=\frac{d}{\delta K}\sum_{k=1}^{K_t^a}u_{t}^k\left\{\mathbf{1}_{\mathcal{T}_{t,k}^a\neq\emptyset}\left(\sum_{(\tau,l)\in\mathcal{T}_{t,k}^a}\!\!\!\left(f_t(w_{t}^k)-f_{t-\tau}(w_{t-\tau}^l)\right)\right)+\frac{\mathbf{1}_{\mathcal{T}_{t,k}^a=\emptyset}}{2}\left(f_t(w_{t}^k)-f_{t}(x_t-\delta u_{t}^k)\right)\right\}
\label{eq:clipping_ex}
\end{align}
where $w_{t}^k\triangleq x_{t}+\delta u_{t}^k$, $\mathcal{T}_{t,k}^a\triangleq\{(\tau,l)|1\leq\tau\leq H,1\leq l\leq K_{t-\tau}^a, D_{t,\tau}^a(w_{t}^k,w_{t-\tau}^l)\leq D\}$ and $K_t^a$ is the integer such that $\sum_{k=1}^{K_t^a}\left(|\mathcal{T}_{t,k}^a|+\mathbf{1}_{\mathcal{T}_{t,k}^a\neq\emptyset}\right)= K$. The set $\mathcal{T}_{t,K_t^a}^a$ only contains $K-\sum_{k=1}^{K_t^a-1}\left(|\mathcal{T}_{t,k}^a|+\mathbf{1}_{\mathcal{T}_{t,k}^a\neq\emptyset}\right)$ elements if its size is bigger. 

Similarly, the $2K$-points \textbf{LAZOb} gradient estimator as
\begin{align}
\tilde{g}_{t}^{b}(x_{t})&=\frac{d}{\delta K}\sum_{k=1}^{K_t^b}u_{t}^k\left\{\mathbf{1}_{\mathcal{T}_{t,k}^b\neq\emptyset}\left(\sum_{(\tau,l)\in\mathcal{T}_{t,k}^b}\!\!\!\left(f_t(w_{t}^k)-f_{t-\tau}(w_{t-\tau}^l)\right)\right)+\frac{\mathbf{1}_{\mathcal{T}_{t,k}^b=\emptyset}}{2}\left(f_t(w_{t}^k)-f_{t}(x_t-\delta u_{t}^k)\right)\right\}
\label{eq:lazob_ex}
\end{align}
where $\mathcal{T}_{t,k}^b\triangleq\{(\tau,l)|1\leq\tau\leq H,1\leq l\leq K_{t-\tau}^b, D_{t,\tau}^b(w_{t}^k,w_{t-\tau}^l)\leq D\}$ and $K_t^b$ is the integer such that $\sum_{k=1}^{K_t^b}\left(|\mathcal{T}_{t,k}^b|+\mathbf{1}_{\mathcal{T}_{t,k}^b\neq\emptyset}\right)= K$. Again if the set $\mathcal{T}_{t,K_t^b}^b$ only contains $K-\sum_{k=1}^{K_t^b-1}\left(|\mathcal{T}_{t,k}^b|+\mathbf{1}_{\mathcal{T}_{t,k}^b\neq\emptyset}\right)$ elements if its size is bigger. 

The complete multi-point LAZOa and LAZOb are summarized in Algorithm \ref{alg:LAZO_multi}.

\subsection{Parameters search grid}
We pick the optimal $\eta\in \{0.1, 1,2,\cdots,10\}$, $\delta\in\{0.5,0.1,0.05,0.01\}$ for multi-point LAZOa, LAZOb and multi-point symmetric ZO gradient estimator \cite{shamir2017optimal} and $D\in \{1\times 10^{-5},5\times 10^{-5},1\times 10^{-4},5\times 10^{-4}\}$ for multi-point LAZOa, $DL\in \{10,50,100,500,1000\}$ for multi-point LAZOb in the black-box adversarial attack application. 
\end{document}